\theoremstyle{plain}
\newtheorem{theorem}{Theorem}[section]
\newtheorem{proposition}[theorem]{Proposition}
\theoremstyle{definition}
\theoremstyle{remark}
\newtheorem{remark}[theorem]{Remark}
\DeclareMathOperator{\sgn}{sgn}
\DeclareMathOperator*{\argmin}{arg\,min}
\DeclarePairedDelimiterX{\inp}[2]{\langle}{\rangle}{#1, #2}
\newcommand{\eps}[0]{\varepsilon}
\newcommand{\rqone}[0]{\texttt{RQ1}\xspace}
\newcommand{\rqtwo}[0]{\texttt{RQ2}\xspace}
\newcommand{\rqthree}[0]{\texttt{RQ3}\xspace}
\newcommand{\ansone}[0]{\texttt{A1}\xspace}
\newcommand{\anstwo}[0]{\texttt{A2}\xspace}
\newcommand{\ansthree}[0]{\texttt{A3}\xspace}
\newcommand{\tstd}[0]{\text{std}}
\newcommand{\trob}[0]{\text{rob}}
\newcommand{\ie}[0]{\textit{i.e.}}
\newcommand{\eg}[0]{\textit{e.g.}}
\definecolor{dkgreen}{rgb}{0,0.6,0}
\definecolor{gray}{rgb}{0.5,0.5,0.5}
\definecolor{mauve}{rgb}{0.58,0,0.82}
\icmltitlerunning{Understanding the Impact of Adversarial Robustness on Accuracy Disparity}
\begin{document}

\twocolumn[
\icmltitle{Understanding the Impact of Adversarial Robustness on Accuracy Disparity}

\icmlsetsymbol{equal}{*}

\begin{icmlauthorlist}
\icmlauthor{Yuzheng Hu}{xxx}
\icmlauthor{Fan Wu}{xxx}
\icmlauthor{Hongyang Zhang}{yyy}
\icmlauthor{Han Zhao}{xxx}
\end{icmlauthorlist}

\icmlaffiliation{xxx}{Department of Computer Science, University of Illinois at Urbana Champaign, Urbana, IL, USA}
\icmlaffiliation{yyy}{David R. Cheriton School of Computer Science, University of Waterloo, Waterloo, ON, Canada}

\icmlcorrespondingauthor{Yuzheng Hu}{yh46@illinois.edu}

\icmlkeywords{Machine Learning, ICML}

\vskip 0.3in
]

\printAffiliationsAndNotice{}  %

\begin{abstract}
  While it has long been empirically observed that adversarial robustness may be at odds with standard accuracy and may have further disparate impacts on different classes, it remains an open question to what extent such observations hold and how the class imbalance plays a role within. In this paper, we attempt to understand this question of accuracy disparity by taking a closer look at linear classifiers under a Gaussian mixture model. We decompose the impact of adversarial robustness into two parts: an inherent effect that will degrade the standard accuracy on all classes due to the robustness constraint, and the other caused by the class imbalance ratio, which will increase the accuracy disparity compared to standard training. Furthermore, we also show that such effects extend beyond the Gaussian mixture model, by generalizing our data model to the general family of stable distributions. More specifically, we demonstrate that while the constraint of adversarial robustness consistently degrades the standard accuracy in the balanced class setting, the class imbalance ratio plays a fundamentally different role in accuracy disparity compared to the Gaussian case, due to the heavy tail of the stable distribution. We additionally perform experiments on both synthetic and real-world datasets to corroborate our theoretical findings. Our empirical results also suggest that the implications may extend to nonlinear models over real-world datasets. Our code is publicly available on GitHub\footnote{ \url{https://github.com/Accuracy-Disparity/AT-on-AD}}. 
\end{abstract}

\section{Introduction}
The existence and prevalence of adversarial examples~\citep{dalvi2004adversarial, szegedy2013intriguing, DBLP:journals/corr/GoodfellowSS14} in the state-of-the-art deep learning models have made adversarial robustness an active field of research, where human-imperceptible perturbations to the original data can arbitrarily disrupt the model prediction.  
However, it has been empirically observed that the improvement in robustness usually comes with costs in accuracy. In particular, there might exist a trade-off between robust accuracy and standard accuracy~\citep{tsipras2018robustness, tutorial_madry_and_kolter}, and it may also lead to the so-called \textit{accuracy disparity}~\citep{chi2021understanding}, a notion of unfairness in the literature of algorithmic fairness. In particular, it is shown that when compared to standard training, the constraint of adversarial robustness might further exacerbate the \textit{discrepancy} of standard accuracy among difference classes~\citep{croce2021robustbench, benz2021triangular, benz2021robustness}.

Despite fruitful and intriguing empirical observations, rigorous understanding of how adversarial robustness affects standard accuracy or accuracy disparity has not been extensively explored from a theoretical perspective. In fact, it is not clear whether such a trade-off is inherent, even in the linear setting under a Gaussian mixture model. If it is, then what are the fundamental factors that contribute to this potential drop of accuracy and the increase of accuracy disparity?
To the best of our knowledge, there are only a few works~\citep{tsipras2018robustness, xu2021robust,ma2022on} that partially attempt to approach these problems. However, the existing analyses are restricted to examples with specific choices of parameters, which oversimplifies the problem and makes it 
unclear whether the conclusions continue to hold in more general settings. We provide further discussions on the related work in~\Cref{sec:morerw}.

\textbf{Our Contributions.}~~ 
Towards answering the above questions, we provide a theoretical study of the impact of adversarial robustness on accuracy disparity in the presence of class imbalance~\citep{johnson2019survey}. We consider the classification problem under a common Gaussian mixture model with linear classifiers. We then further generalize our analysis to a broader family of stable distributions. For each data distribution, we decompose the impact of adversarial robustness into two parts, an intrinsic part that leads to the drop of standard accuracy and the class imbalance part which will affect the accuracy disparity. 
Our main contributions are summarized as follows:
\begin{itemize}%
    \item Under a common Gaussian mixture model and linear classifiers, we reveal two effects of enforcing adversarial robustness in binary classification. The first part is inherent to the constraint of adversarial robustness itself, which will degrade the standard accuracy due to a change of direction of the optimal linear classifier. The second part is caused by the class imbalance ratio between the two classes under consideration, which will increase the accuracy disparity compared to standard training due to a reduction of norm of the optimal linear classifier. 
    
    \item Inspired by our analysis, we further point out the equivalence between robust training in classification and regularized linear regression. Our observation helps to explain the norm-shrinkage effect that happens in robust learning, which could be of independent theoretical interest.

    \item Going beyond the Gaussian mixture model, we show that when the data follow a mixture of heavy-tailed stable distributions, the intrinsic effect of robustness persists and the standard accuracy consistently decreases even when the data is balanced among the two classes. On the other hand, the class imbalance ratio plays a fundamentally different role compared to that in the Gaussian case, suggesting that the tail property of the distribution also serves as a crucial factor in the accuracy disparity problem.
    
    \item We conduct experiments on both synthetic and real-world datasets. The empirical results not only corroborate our theoretical findings, but also suggest that the implications may extend to nonlinear models over real-world datasets.
\end{itemize}

\section{Preliminaries}
\label{sec:pre}
We first give an overview of the problem studied in this paper. We then proceed to introduce the necessary background and notation used throughout the paper.

\textbf{Problem setup.}\quad
\label{subsec:pre-notation}
For the ease of presentation, we focus on a binary classification task in this paper. To start with, we assume that the data are generated through a mixture of distribution $\mathcal{P}^+$ and $\mathcal{P}^-$: conditioning on $y=\pm 1$, we have $X \sim \mathcal{P}^\pm$, respectively.  
As we shall see shortly, one crucial ingredient in understanding the impact of adversarial robustness on accuracy disparity is the imbalance factor $R > 1$ between the marginal probabilities of different classes:
$R := \Pr(y=-1) / \Pr(y=+1)$,
meaning that there is a larger portion of negative-class examples in the population. %
Following prior works~\citep{tsipras2018robustness, xu2021robust}, for the model, we consider a linear classifier and couple it with a sign function $\sgn$ to obtain the output
$f(x;w,b) := \sgn(w^\top x + b)$. Finally, since we are mainly interested in understanding the inherent impact caused by adversarial robustness, throughout the paper we shall focus on the infinite data regime to remove the noise introduced by finite samples, meaning that we will study the population losses instead of their empirical counterparts.

\textbf{Objective functions.}\quad
The standard $0$-$1$ population loss (\textit{standard loss}) is defined as follows:
\vspace{-2mm}
\begin{center}
\resizebox{.98\linewidth}{!}{
\begin{minipage}{\linewidth}
\begin{align*}
    &\ell_{\text{std}}(w,b) := \Pr(f(x;w,b) \neq y) \\
    =&\frac{R}{R+1} \underbrace{\mathbb{P}(w^{\top}x+ b \ge 0 \mid y = -1)}_{\text{Part \ I}: \ \ell_{\text{std}}^-} + \frac{1}{R+1}\underbrace{\mathbb{P}(w^{\top}x+ b \le 0 \mid y = +1)}_{\text{Part \ II}: \ \ell_{\text{std}}^+}.
\end{align*}
\end{minipage}
}
\end{center}
Similarly, the $0$-$1$ adversarial loss~\citep{kurakin2016adversarial, madry2017towards} under $\ell_p$-perturbation $(p \geq 1)$ and radius $\varepsilon$ (\textit{robust $\ell_p$ loss}), is defined as:
\vspace{-2mm}
\begin{center}
\resizebox{.85\linewidth}{!}{
\begin{minipage}{\linewidth}
\begin{align*}
    &\ell_{\text{rob}, p, \varepsilon}(w,b) := \Pr(\exists \|\delta\|_p \le \varepsilon, \  s.t. \ f(x+\delta;w,b) \ne y)   \\
    =& \frac{R}{R+1}{\mathbb{P}(w^{\top}x+ b \ge -\varepsilon \|w\|_q \mid y = -1)} \\+& \frac{1}{R+1}{\mathbb{P}(w^{\top}x+ b \le \varepsilon \|w\|_q \mid y = +1)},
\end{align*}
\end{minipage}
}
\end{center}
where $1/p+1/q = 1$ and the second equation follows from the H\"older's inequality. When the context is clear, we will omit the use of $\varepsilon$ in $\ell_{\text{rob}, p, \varepsilon}$. It is straightforward to see that minimizing $\ell_{\text{rob}, p}$ will lead to adversarial robustness.

\textbf{Accuracy disparity.} \quad
Note that Part II and I in the definition of the standard loss are exactly the population loss for the minority and majority classes, which we denote as $\ell_{\text{std}}^+$ and $\ell_{\text{std}}^-$, respectively. The standard accuracy for both classes then writes as $acc^\pm := 1 - \ell_{\text{std}}^\pm$.
The key quantity that we will focus on in this paper is the \textit{accuracy disparity}~\citep{chi2021understanding} between the two classes, defined as
\resizebox{.98\linewidth}{!}{
\begin{minipage}{\linewidth}
\begin{align*}
    AD(w,b) := acc^-(w,b) - acc^+(w,b) = \ell_{\text{std}}^+(w,b) - \ell_{\text{std}}^-(w,b).
\end{align*}
\end{minipage}
}

The notion of accuracy disparity in our context focuses on the performance gap of a model on different sub-groups of the overall population, where each group is indexed by the corresponding class label~\citep{santurkar2021breeds,xu2021robust}. Accuracy disparity has recently gained more attention in the literature of algorithmic fairness~\citep{buolamwini2018gender,chi2021understanding,nanda2021fairness}, and we are interested in understanding the role of robustness 
in accuracy disparity. To proceed, we first characterize the optimal solutions of the standard loss and the robust $\ell_p$ loss
\resizebox{.98\linewidth}{!}{
\begin{minipage}{\linewidth}
\begin{align*}
    w_{\text{std}}, b_{\text{std}} = \argmin_{w,b} \ell_{\text{std}}(w,b), \ w_{\text{rob}, p}, b_{\text{rob},p} = \argmin_{w,b}  \ell_{\text{rob}, p}(w,b).
\end{align*}
\end{minipage}
}

We then analyze the changes of $\ell_{\text{std}}^+$ and $\ell_{\text{std}}^-$ as well as the accuracy disparity when we \textit{switch} the {measurements} from the optimal standard classifier to the optimal robust classifier. Specifically, we are most interested in how these changes depend on the class imbalance ratio $R$. In other words, we will try to understand the price (\ie, the decrease of the standard loss) paid by a robust classifier in the presence of class imbalance. We emphasize that  accuracy disparity is defined over the standard loss for both the standard classifier and robust classifier. There is another concept, known as \textit{robustness disparity}~\citep{nanda2021fairness}, which is defined over the robust loss; it is orthogonal to accuracy disparity and will not be covered in this paper.

\textbf{Notation.}\quad
We use $\Phi$ to denote the cumulative distribution function of %
the standard normal distribution. For a vector $u$, $|u|$ means taking the absolute value per coordinate, $u_i$ denotes the $i$-th coordinate of $u$, and $\|u\|_p$ represents the $\ell_p$-norm of $u$. For two vectors $v$ and $v'$, we use the notation $v \parallel v'$ to indicate they are parallel and $v \nparallel v'$ when they are not. We denote $1_d$ as the all-one vector with dimension $d$. For $1 \le p \le \infty$, we use $q$ to denote its dual index, which is defined through $1/p + 1/q = 1$. Finally, for a multivariate function $f$, %
the subdifferential set at $x$ is defined as $\partial f(x) := \{g: \forall y, f(y) \ge f(x) + g^{\top}(y-x)\}$, and each element within is called a subgradient.

\section{Gaussian Mixture: Robustness Implies Accuracy Disparity} \label{Section AT}

In this section, we will closely examine the accuracy disparity of robust classifiers when the data are drawn according to a Gaussian mixture~\citep{reynolds2009gaussian}. Specifically, we are interested in how enforcing the model to be robust affects the accuracy disparity compared to standard training.
While it has been shown in a previous work~\citep{xu2021robust} that adversarial robustness does introduce severe accuracy disparity when different classes exhibit different ``difficulty levels'' of learning (\ie, different magnitude of variance) in a toy example (as indicated by specific choices of mean, variance, as well as $p=\infty$), in this section we consider a more general setting where \textit{class imbalance} is present, and we shall provide a comprehensive analysis by considering the following objectives and data distributions: 1) adversarial robustness with general $\ell_p$-constraint ($1\le p \le \infty$); 2) a Gaussian mixture distribution with arbitrary mean and covariance matrix, specifically, $\mathcal{P}^+ = \mathcal{N}(\theta^+, \Sigma)$ and $\mathcal{P}^- = \mathcal{N}(\theta^-, \Sigma)$. 

In a nutshell, the overall effects of adversarial robustness are separated into two parts: an inherent one that will \textbf{decrease the standard accuracy} on all classes, and the other caused additionally by the class imbalance ratio that will \textbf{increase the accuracy disparity} compared to standard training. 

Before introducing the main results, we highlight that linear models are sufficiently powerful for the classification of Gaussian mixtures in both the standard and adversarial sense. In fact, it is well known that the Bayes-optimal classifier of the standard loss is linear due to the Fisher’s linear discriminant~\citep{johnson2014applied}; and it is further shown in~\citep{dobriban2020provable} that the Bayes-optimal robust classifier is also linear.

\medskip

\subsection{Main Results}  \label{subsec.main}
\looseness=-1
In what follows we will present a general result (Theorem \ref{thm:general}) which characterizes the class-wise standard loss for the optimal standard and robust classifiers. We will then discuss several  
implications. Specifically, Theorem \ref{corollary.degrade} (a direct corollary of Proposition \ref{Claim.direction}) demonstrates the effect of ``decreased standard accuracy'', while Theorem \ref{cor:imbalance} (derived form Proposition \ref{Claim norm}) demonstrates the effect of ``increased accuracy disparity''. 
The proofs are deferred to Appendix \ref{appsub.main}.
\begin{theorem} \label{thm:general}
Given the means $\theta^+, \theta^-$, covariance matrix $\Sigma$ and $\ell_p$-constraint, let $u,v \in \mathbb{R}^d$ satisfy
\begin{align} \label{Eq.optimal}
    \Sigma u = \theta^+ - \theta^-, \quad 
 \quad \Sigma v = \theta^+ - \theta^- - 2\varepsilon\partial{\|v\|_q},
\end{align}
 and $q$ satisfy $1/p + 1/q = 1$. We further set $r^2 := u^{\top}\Sigma u$, and $s^2 = v^{\top}\Sigma v$. Then the class-wise standard loss $\ell_{\emph{std}}^\pm$ %
of the optimal standard classifier $(w_{\text{std}} := u / r, b_{\text{std}})$ satisfy %
\vspace{-3mm}
\begin{center}
\resizebox{.99\linewidth}{!}{
\begin{minipage}{\linewidth}
\begin{align*}
    \ell_{\emph{std}}^+(w_{\emph{std}}, b_{\emph{std}}) &= \Phi\left(\frac{-\langle u, \theta^+ -\theta^- \rangle + 2\log R}{2r}\right), \\ %
    \ell_{\emph{std}}^-(w_{\emph{std}}, b_{\emph{std}}) &= \Phi\left(\frac{-\langle u, \theta^+ -\theta^- \rangle - 2\log R}{2r}\right),
\end{align*}
\end{minipage}
}
\end{center}
and the class-wise standard loss $\ell_{\emph{std}}^\pm$ %
of the optimal robust $\ell_p$ classifier
$(w_{\text{rob},p} := v / s, b_{\text{rob},p})$ satisfy %
\vspace{-3mm}
\begin{center}
\resizebox{.99\linewidth}{!}{
\begin{minipage}{\linewidth}
\begin{align*}
    \ell_{\emph{std}}^+(w_{\emph{rob}, p}, b_{\emph{rob}, p}) &= \Phi\left(\frac{-\langle v, \theta^+ -\theta^- \rangle + 2\log R}{2s}\right), \\
    \ell_{\emph{std}}^-(w_{\emph{rob}, p}, b_{\emph{rob}, p}) &= \Phi\left(\frac{-\langle v, \theta^+ -\theta^- \rangle - 2\log R}{2s}\right).
\end{align*}
\end{minipage}
}
\end{center}
\end{theorem}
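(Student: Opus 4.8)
The plan is to reduce every class-wise loss to $\Phi$ evaluated at a normalized margin, and then perform the bias-optimization and the direction-optimization in turn; the two $u,v$ in the hypothesis are taken as given, so the task is to verify that $u/r$ and $v/s$ are the optimal directions and to read off the losses. Write $\Delta := \theta^+-\theta^-$ and let $\phi := \Phi'$ be the standard normal density. Since $\sgn(w^\top x+b)$ and hence both losses are invariant under $(w,b)\mapsto(cw,cb)$ for $c>0$, I normalize the direction so that $w^\top\Sigma w=1$ and absorb the rescaling into $b$. Then $w^\top X+b$ is Gaussian with unit variance under each class, so the standard class-wise losses become $\ell_{\text{std}}^+=\Phi(A^+)$, $\ell_{\text{std}}^-=\Phi(A^-)$ with standard margins $A^+ = -(w^\top\theta^+ + b)$ and $A^- = w^\top\theta^- + b$, while the robust losses become $\Phi(B^\pm)$ with $B^\pm = A^\pm + \varepsilon\|w\|_q$. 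Two identities recur: $A^+ + A^- = -\langle w,\Delta\rangle$, and the Euler relation $\langle w,g\rangle = \|w\|_q$ for every $g\in\partial\|w\|_q$ (valid since $\|\cdot\|_q$ is $1$-homogeneous).

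For the standard classifier I first optimize $b$. The condition $\partial_b\ell_{\text{std}}=0$ reads $\frac{R}{R+1}\phi(A^-)=\frac{1}{R+1}\phi(A^+)$, equivalently $(A^-)^2-(A^+)^2=2\log R$. Combined with $A^++A^-=-M$, where $M:=\langle w,\Delta\rangle$, this pins the margins down as $A^\pm = -\frac{M}{2}\pm\frac{\log R}{M}$, so after optimizing $b$ the loss is a function of $M$ alone. Differentiating and using the $b$-stationarity to cancel the density terms shows this function is strictly decreasing in $M$; equivalently, an envelope computation gives $\nabla_w(\text{optimized loss}) = -K\Delta$ with $K>0$. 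Hence minimizing $\ell_{\text{std}}$ is the same as maximizing the linear objective $M=\langle w,\Delta\rangle$ over the ellipsoid $\{w:w^\top\Sigma w=1\}$, which by a generalized Cauchy--Schwarz bound is attained at $w\propto\Sigma^{-1}\Delta$, i.e.\ the normalized $u/r$ coming from $\Sigma u=\Delta$, with value $M=\langle u,\Delta\rangle/r=r$ (using $\langle u,\Delta\rangle=u^\top\Sigma u=r^2$). Substituting $M=r$ into $A^\pm$ and recalling $\langle u,\Delta\rangle=r^2$ reproduces exactly the two claimed expressions for $\ell_{\text{std}}^\pm(w_{\text{std}},b_{\text{std}})$.

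For the robust classifier the bias step is identical but phrased through $B^\pm$: $b$-stationarity gives $(B^-)^2-(B^+)^2=2\log R$, and with $B^++B^-=-N$ where $N:=\langle w,\Delta\rangle-2\varepsilon\|w\|_q$, the optimized robust loss is a strictly decreasing function of $N$. Thus the optimal robust direction maximizes the concave functional $N(w)=\langle w,\Delta\rangle-2\varepsilon\|w\|_q$ over the ellipsoid, whose subgradient stationarity condition is $\Delta-2\varepsilon\,\partial\|w\|_q=\lambda\,\Sigma w$; setting $w=v/s$ and using that $\partial\|\cdot\|_q$ is $0$-homogeneous (so $\partial\|v/s\|_q=\partial\|v\|_q$) makes this precisely the hypothesis $\Sigma v=\Delta-2\varepsilon\,\partial\|v\|_q$. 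Taking the inner product of that equation with $v$ and invoking the Euler relation yields $s^2=\langle v,\Delta\rangle-2\varepsilon\|v\|_q$, whence $N=(\langle v,\Delta\rangle-2\varepsilon\|v\|_q)/s=s$ at the optimum and $B^\pm=-\frac{s}{2}\pm\frac{\log R}{s}$. The conceptually distinct final step is that the requested quantity is the \emph{standard} loss at this robust classifier, so I recover the standard margins by removing the shift: $A^\pm=B^\pm-\varepsilon\|w\|_q=B^\pm-\varepsilon\|v\|_q/s$. Using $\varepsilon\|v\|_q=(\langle v,\Delta\rangle-s^2)/2$, the $s/2$ terms cancel and $A^+$ collapses to $(-\langle v,\Delta\rangle+2\log R)/(2s)$ and $A^-$ to $(-\langle v,\Delta\rangle-2\log R)/(2s)$, giving the claimed $\ell_{\text{std}}^\pm(w_{\text{rob},p},b_{\text{rob},p})$.

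The step I expect to be the main obstacle is the direction optimization in the robust case. Unlike the smooth linear objective of the standard problem, $N(w)$ is non-smooth through $\|w\|_q$, so stationarity must be handled with subdifferentials and matched carefully to the assumed subgradient equation for $v$, and one must establish global rather than merely first-order optimality. Concavity of $N$ resolves this: maximizing a concave function over the convex ball $\{w^\top\Sigma w\le 1\}$ makes any KKT point a global maximizer, and for $\varepsilon$ small (as the standing radius assumption $\varepsilon\le\frac{c}{2}\|\theta^+-\theta^-\|_\infty$ guarantees) the signal term dominates along a good direction, so the maximum is attained on the boundary sphere with $N=s>0$, which legitimizes dividing by $N$. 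The remaining pieces---the bias-monotonicity claim and the concluding algebraic simplification---are routine.
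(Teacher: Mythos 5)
Your proposal is correct, and its core computations run parallel to the paper's proof: the same scale-invariance normalization $w^{\top}\Sigma w=1$, the same bias stationarity condition $R\,\phi(A^-)=\phi(A^+)$ (equivalently $(A^-)^2-(A^+)^2=2\log R$), the same Euler/Danskin identity $\langle v,g\rangle=\|v\|_q$ for $g\in\partial\|v\|_q$, and the same closing algebra that converts the robust margins back into standard margins. Where you genuinely differ is in the organization and in what is actually established about optimality. The paper forms the Lagrangian of the joint problem in $(w,b)$, writes the KKT system, eliminates the $b$-equation to force $\Sigma w \propto \theta^+-\theta^-$ (resp.\ $\Sigma w \propto \theta^+-\theta^--2\varepsilon\partial\|w\|_q$), and then simply \emph{picks} $u/r$ (resp.\ $v/s$) as solutions; it works purely with first-order conditions and never argues that these stationary points are global minimizers (even the positivity of its constant $C_{\text{std}}$ is asserted rather than derived). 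You instead profile out $b$ first, use an envelope computation to show the profiled loss is strictly decreasing in the margin $M$ (resp.\ $N$), and reduce the choice of direction to maximizing a linear (resp.\ concave, $1$-homogeneous) functional over the ellipsoid, where generalized Cauchy--Schwarz (resp.\ concavity over the ball together with homogeneity and $N=s>0$, which your identity $s^2=\langle v,\Delta\rangle-2\varepsilon\|v\|_q$ supplies) certifies \emph{global} optimality of $u/r$ and $v/s$. This two-stage decomposition thus buys a strictly stronger conclusion than the paper's KKT matching, at the cost of the extra care you correctly flag (attainment of the $b$-minimum, handling directions with nonpositive margin, and boundary attainment for the robust surrogate); conversely, the paper's route is shorter because it verifies consistency of the hypothesized equations with stationarity and stops there.
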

It is straightforward to see that the overall effects consist of two parts: one intrinsic due to adversarial robustness ($\frac{\langle u, \theta^+ -\theta^- \rangle}{2r}$ v.s. $\frac{\langle v, \theta^+ -\theta^- \rangle}{2s}$), and the other caused by the class imbalance ratio ($\frac{\log R}{r}$ v.s. $\frac{\log R}{s}$). To compare the optimal robust classifier and the optimal standard classifier, we will show in the following analysis that 1) the intrinsic part corresponds to a change in \textbf{direction}, and will degrade the standard performance on both classes --- exactly the price paid by a classifier to be robust; 2) the class imbalance part corresponds to a change in \textbf{norm}, and will increase
the error on the minority class while decreasing the error on the majority class --- hence exacerbating the accuracy disparity.

\paragraph{The intrinsic part---decreasing the standard accuracy.} The intrinsic part corresponds to a change in \textit{direction} --- in fact, we will show that the optimal solution moves to a direction that incurs larger error due to the constraint of robustness. This helps to explain the long-observed empirical phenomenon that adversarial training~\citep{kurakin2016adversarial, madry2017towards}, which is an effective algorithm for empirical robustness, often leads to degraded standard accuracy~\citep{tsipras2018robustness, tutorial_madry_and_kolter}.

\begin{proposition} \label{Claim.direction}
For the intrinsic part, we have
\begin{align*}
    \frac{\langle u, \theta^+ -\theta^- \rangle}{2r} \ge \frac{\langle v, \theta^+ -\theta^- \rangle}{2s}.
\end{align*}
Furthermore, so long as $\Sigma^{\frac{1}{2}}w_{\emph{std}} \nparallel \Sigma^{\frac{1}{2}}w_{\emph{rob},p}$, the inequality holds strictly.
\end{proposition}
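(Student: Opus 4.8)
The plan is to recognize both quantities as values of a generalized Rayleigh quotient and reduce the entire statement to a single application of Cauchy--Schwarz in the inner product induced by $\Sigma$. Throughout, write $\mu := \theta^+ - \theta^-$ as a shorthand, and recall that $\Sigma \succ 0$, $\mu \ne 0$, and $v \ne 0$, so that $r,s>0$ and all divisions below are legitimate.

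First I would simplify the left-hand side. Since $\Sigma u = \mu$, we have $\langle u, \mu \rangle = u^\top \Sigma u = r^2$, so the intrinsic quantity of the standard classifier collapses to $\frac{\langle u, \mu\rangle}{2r} = \frac{r}{2}$. Hence the claimed inequality is equivalent to the clean bound $\langle v, \mu\rangle \le r s$. I would then prove this bound by Cauchy--Schwarz: using $\mu = \Sigma^{1/2}\big(\Sigma^{-1/2}\mu\big)$ to factor $\langle v, \mu\rangle = \langle \Sigma^{1/2} v,\ \Sigma^{-1/2}\mu\rangle$, I get
\[
\langle v, \mu\rangle \ \le\ \big\|\Sigma^{1/2} v\big\|_2\,\big\|\Sigma^{-1/2}\mu\big\|_2 \ =\ \sqrt{v^\top \Sigma v}\,\sqrt{\mu^\top \Sigma^{-1}\mu}.
\]
It remains to identify the two factors: the first is exactly $s$ by definition, and the second equals $r$ because $\mu^\top \Sigma^{-1}\mu = (\Sigma u)^\top \Sigma^{-1}(\Sigma u) = u^\top \Sigma u = r^2$. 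Dividing through by $2s$ yields the proposition. Notably, this argument never invokes the subgradient structure in the equation defining $v$; the inequality holds purely because $u$ (equivalently $w_{\text{std}}$) maximizes the generalized Rayleigh quotient $\langle w, \mu\rangle/\sqrt{w^\top \Sigma w}$, whose maximum value is $r$.

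For the strictness claim I would invoke the equality case of Cauchy--Schwarz: equality holds if and only if $\Sigma^{1/2} v$ is parallel to $\Sigma^{-1/2}\mu$. Since $\Sigma^{-1/2}\mu = \Sigma^{1/2}u$, this condition is exactly $\Sigma^{1/2} u \parallel \Sigma^{1/2} v$, which, after rescaling by the positive scalars $r$ and $s$, is equivalent to $\Sigma^{1/2} w_{\text{std}} \parallel \Sigma^{1/2} w_{\text{rob},p}$. Thus whenever $\Sigma^{1/2} w_{\text{std}} \nparallel \Sigma^{1/2} w_{\text{rob},p}$ the inequality is strict. The proof is essentially immediate once the Rayleigh-quotient interpretation is in place; the only point demanding mild care is bookkeeping the Cauchy--Schwarz equality case so that it lines up precisely with the stated non-parallel condition, rather than any genuine analytic difficulty.
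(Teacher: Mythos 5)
Your proposal is correct and follows essentially the same route as the paper's proof: reduce the left-hand side to $r/2$ via $\langle u,\theta^+-\theta^-\rangle = u^\top\Sigma u = r^2$, reduce the claim to $v^\top\Sigma u \le rs$, apply Cauchy--Schwarz in the inner product induced by $\Sigma$, and read off strictness from the equality case. The only cosmetic difference is that you write the second factor as $\Sigma^{-1/2}\mu$ (which needs $\Sigma \succ 0$, slightly stronger than the paper's positive semi-definiteness), whereas the paper substitutes $\mu = \Sigma u$ and works with $\Sigma^{1/2}u$ directly --- but these are the same vector, so the arguments coincide.
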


When the covariance matrix is invertible, we are  comparing the directions of the two optimal classifiers in the standard sense. For general covariance matrix, it is therefore natural for us to interpret $\Sigma^{\frac{1}{2}}w$ as a general ``direction'' of $w$. Proposition \ref{Claim.direction} now directly leads to the following result.
\begin{theorem}  \label{corollary.degrade}
When there is no class imbalance, \ie, $R=1$, {enforcing adversarial robustness with $\ell_p$-constraint will degrade the standard accuracy on both classes}, so long as the ``direction'' of the optimal robust classifier is not parallel to its counterpart, \ie, $\Sigma^{\frac{1}{2}}w_{\emph{std}} \nparallel \Sigma^{\frac{1}{2}}w_{\emph{rob},p}$. %
\end{theorem}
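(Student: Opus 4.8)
The plan is to read off the class-wise losses from Theorem~\ref{thm:general} at $R=1$ and then invoke Proposition~\ref{Claim.direction} together with the monotonicity of $\Phi$. First I would substitute $R=1$, so that $\log R = 0$ and the $\pm$ distinction collapses: both $\ell_{\text{std}}^+$ and $\ell_{\text{std}}^-$ of the standard classifier reduce to $\Phi\!\left(-\langle u,\theta^+-\theta^-\rangle/(2r)\right)$, while both class-wise losses of the robust classifier reduce to $\Phi\!\left(-\langle v,\theta^+-\theta^-\rangle/(2s)\right)$. Hence in the balanced regime the two classes are symmetric, and it suffices to compare these two scalar quantities.

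Next I would apply Proposition~\ref{Claim.direction}, which gives $\langle u,\theta^+-\theta^-\rangle/(2r) \ge \langle v,\theta^+-\theta^-\rangle/(2s)$, equivalently $-\langle u,\theta^+-\theta^-\rangle/(2r) \le -\langle v,\theta^+-\theta^-\rangle/(2s)$. Since $\Phi$ is monotonically increasing, this immediately yields $\ell_{\text{std}}^\pm(w_{\text{std}},b_{\text{std}}) \le \ell_{\text{std}}^\pm(w_{\text{rob},p},b_{\text{rob},p})$ for both signs, i.e.\ the standard loss of the robust classifier is no smaller than that of the standard classifier on each class. Because standard accuracy equals one minus standard loss, this is exactly the claimed degradation; and since at $R=1$ the common loss value is shared by the two classes, the statement ``on both classes'' follows at once.

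For the strict part, under the hypothesis $\Sigma^{1/2}w_{\text{std}} \nparallel \Sigma^{1/2}w_{\text{rob},p}$ the second statement of Proposition~\ref{Claim.direction} makes the inequality on the arguments strict, and strict monotonicity of $\Phi$ then turns the loss inequality into a strict one on both classes. I do not anticipate a genuine obstacle here: all the analytic content---the Cauchy--Schwarz argument relating $r$, $s$ and the inner products $\langle u,\theta^+-\theta^-\rangle$, $\langle v,\theta^+-\theta^-\rangle$---is already packaged inside Proposition~\ref{Claim.direction}, and the remaining work is only the bookkeeping of setting $R=1$ and appealing to the monotonicity of $\Phi$. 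The one point worth stating carefully is that the non-parallel condition in the hypothesis is precisely the condition triggering the strict case of Proposition~\ref{Claim.direction}, so no additional assumption is needed.
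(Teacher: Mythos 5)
Your proposal is correct and follows exactly the paper's own route: the paper's proof is the one-line observation that the result follows from Proposition~\ref{Claim.direction} together with the monotonicity of $\Phi$, which is precisely your argument with the $R=1$ bookkeeping in Theorem~\ref{thm:general} spelled out explicitly. Nothing is missing; your version is simply a more detailed write-up of the same reasoning.
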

\begin{remark}
Intuitively, we expect that the direction of the optimal classifier to be different for the standard and robust loss. We will provide more discussions about this observation in Section~\ref{subsect.positive}, where we demonstrate that this is indeed the situation for diagonal matrices; but we also identify a few cases where we can actually get a win-win from both worlds.
\end{remark}

\textbf{The class imbalance part---increasing the accuracy disparity.}\quad
The class imbalance part corresponds to a change in \textit{norm}, which appears as the square root of quadratic form. We will show: due to a \textit{shrinkage of norm}, the standard loss of the minority class increases while the opposite happens for the majority class. 

\begin{proposition} \label{Claim norm}
For the class imbalance part, we have $u^{\top}\Sigma u > v^{\top}\Sigma v$, which implies $r>s$.
\end{proposition}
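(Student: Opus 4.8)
The plan is to recast everything in terms of the covariance-weighted (semi-)inner product $\langle a,b\rangle_\Sigma := a^\top\Sigma b$ with induced seminorm $\|a\|_\Sigma := \sqrt{a^\top\Sigma a}$, so that $r = \|u\|_\Sigma$, $s = \|v\|_\Sigma$, and the target inequality $r > s$ is simply $\|u\|_\Sigma > \|v\|_\Sigma$. Since $\Sigma$ is a covariance matrix and hence positive semidefinite, $\Sigma^{1/2}$ exists and the Cauchy--Schwarz inequality $\langle a,b\rangle_\Sigma \le \|a\|_\Sigma\,\|b\|_\Sigma$ is available, being just the ordinary Cauchy--Schwarz inequality applied to $\Sigma^{1/2}a$ and $\Sigma^{1/2}b$.

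First I would subtract the two identities in \eqref{Eq.optimal}: the common term $\theta^+-\theta^-$ cancels, leaving $\Sigma(u-v) = 2\varepsilon g$, where $g \in \partial\|v\|_q$ is the particular subgradient appearing in the optimality condition for $v$. The only property of $g$ that I will use is the homogeneity identity $g^\top v = \|v\|_q$, which holds for every subgradient of a norm (it follows from the subgradient inequality together with positive homogeneity, and is trivial at $v=0$).

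The crux is to trap the cross term $\langle u,v\rangle_\Sigma$ between two bounds. Pairing $\Sigma(u-v)=2\varepsilon g$ with $v$ and using $\Sigma^\top=\Sigma$ gives the lower bound $\langle u,v\rangle_\Sigma - \|v\|_\Sigma^2 = (u-v)^\top\Sigma v = 2\varepsilon\, g^\top v = 2\varepsilon\|v\|_q$, so that $\langle u,v\rangle_\Sigma = s^2 + 2\varepsilon\|v\|_q \ge s^2$. Cauchy--Schwarz supplies the matching upper bound $\langle u,v\rangle_\Sigma \le \|u\|_\Sigma\,\|v\|_\Sigma = rs$. Combining these yields the sandwich $s^2 \le \langle u,v\rangle_\Sigma \le rs$. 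In the generic case $s>0$ we have $v \ne 0$, hence $\|v\|_q > 0$, so (with $\varepsilon>0$) the lower bound is strict, $s^2 < rs$; dividing by $s$ gives $s < r$, which is exactly $u^\top\Sigma u > v^\top\Sigma v$.

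It only remains to dispose of the degenerate case $s=0$: there $u^\top\Sigma u = 0$ would force $\Sigma^{1/2}u=0$, hence $\theta^+-\theta^- = \Sigma u = 0$, contradicting $\theta^+\ne\theta^-$, so $r>0=s$ and the conclusion persists. I expect the main conceptual obstacle to be the recognition that attacking the difference directly via $u^\top\Sigma u - v^\top\Sigma v = (u+v)^\top\Sigma(u-v) = 2\varepsilon (u+v)^\top g$ stalls, because there is no obvious sign for the term $u^\top g$; the correct maneuver is instead the two-sided bound on $\langle u,v\rangle_\Sigma$, after which the only facts invoked ($g^\top v = \|v\|_q$ and Cauchy--Schwarz for a positive semidefinite form) are entirely standard.
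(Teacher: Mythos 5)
Your proof is correct, and it differs from the paper's only in its final mechanism. Both arguments rest on the same two ingredients: subtracting the two conditions in Eq.~\eqref{Eq.optimal} gives $\Sigma(u-v)=2\varepsilon g$ for the subgradient $g\in\partial\|v\|_q$, and the homogeneity identity $g^{\top}v=\|v\|_q$ (which the paper invokes as Danskin's theorem) then yields the cross-term evaluation $v^{\top}\Sigma(u-v)=2\varepsilon\|v\|_q$. From this point the paper finishes in one line by expanding
\[
u^{\top}\Sigma u - v^{\top}\Sigma v=(u-v)^{\top}\Sigma(u-v)+2v^{\top}\Sigma(u-v)\ \ge\ 2v^{\top}\Sigma(u-v)=4\varepsilon\|v\|_q>0,
\]
using positive semidefiniteness only to discard the square $(u-v)^{\top}\Sigma(u-v)\ge 0$; this decomposition neatly sidesteps the sign problem with $u^{\top}g$ that you correctly flagged as the reason the factorization $(u+v)^{\top}\Sigma(u-v)$ stalls. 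You instead deploy positive semidefiniteness through Cauchy--Schwarz, trapping $\langle u,v\rangle_\Sigma$ between its exact value $s^2+2\varepsilon\|v\|_q$ and the upper bound $rs$, which gives $s(r-s)\ge 2\varepsilon\|v\|_q$. The trade-off: the paper's expansion is shorter and delivers the stronger quantitative gap $r^2-s^2\ge 4\varepsilon\|v\|_q$ (yours gives $rs-s^2\ge 2\varepsilon\|v\|_q$), while your sandwich handles the degenerate cases explicitly --- the paper's terminal strict inequality ``$>0$'' silently assumes $v\ne 0$, whereas you dispose of $s=0$ by observing that $r=0$ would force $\Sigma u=\theta^+-\theta^-=0$, contradicting distinct class means. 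Both are complete proofs of the proposition.
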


In the special case $\Sigma = \mathbb{I}_d$,  the square root of the quadratic form becomes the standard Euclidean norm. It is therefore natural for us to interpret the quantities $r$ and $s$ as realizations of a general ``norm'' (in fact, it is a seminorm defined by $\Sigma$) . With Proposition \ref{Claim norm} at hand, we are now ready to state the following result, which says that {class imbalance will increase the accuracy disparity due to the constraint of robustness}, and such growth is monotonic with respect to $R$ in a reasonable range. This demonstrates an \textbf{inherent trade-off} of adversarial robustness and accuracy parity.
\begin{theorem}\label{cor:imbalance}
Define 
$
    g(R) := AD(w_{\emph{rob},p}, b_{\emph{rob},p}) - AD(w_{\emph{std}}, b_{\emph{std}})
$
as the accuracy disparity gap. Then
\begin{itemize}[leftmargin=*, noitemsep, nolistsep]
    \item When $R>1$, we have $g(R) > 0$, meaning that the accuracy disparity of the optimal robust classifier is larger than that of the optimal standard classifier;
    \item When $R$ satisfies
$\ell_{\emph{std}}^+(w_{\emph{rob}, p}, b_{\emph{rob}, p}) \le 0.5$,
i.e., the four class-wise losses defined in Theorem \ref{thm:general} are upper-bounded by $0.5$, $g(R)$ is an increasing function w.r.t. $R$.
\end{itemize}
\end{theorem}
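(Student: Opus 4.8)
The plan is to collapse the four class-wise losses into a single scalar function and argue by monotonicity. Writing $m := \log R$ and recalling from $\Sigma u = \theta^+ - \theta^-$ that $\langle u, \theta^+ - \theta^-\rangle = u^\top \Sigma u = r^2$, I set
\[
A := \frac{\langle u, \theta^+ - \theta^-\rangle}{2r} \;\Big(= \tfrac{r}{2}\Big), \qquad B := \frac{\langle v, \theta^+ - \theta^-\rangle}{2s} = \frac{s^2 + 2\varepsilon\|v\|_q}{2s} > 0,
\]
where the last identity uses $\langle v, \partial\|v\|_q\rangle = \|v\|_q$ exactly as in Proposition \ref{Claim norm}. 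Proposition \ref{Claim.direction} gives $A \ge B$ and Proposition \ref{Claim norm} gives $r > s > 0$. With $D(c,t) := \Phi(t - c) - \Phi(-t - c)$, Theorem \ref{thm:general} then reads $AD(w_{\text{std}}, b_{\text{std}}) = D(A, m/r)$ and $AD(w_{\text{rob},p}, b_{\text{rob},p}) = D(B, m/s)$, so that $g(R) = D(B, m/s) - D(A, m/r)$. Throughout, $\phi := \Phi'$ denotes the standard normal density.

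For the first bullet I would use the partial monotonicity of $D$. One computes $\partial_t D = \phi(t-c) + \phi(t+c) > 0$, so $D$ is strictly increasing in $t$; and $\partial_c D = \phi(t+c) - \phi(t-c) < 0$ for $c,t>0$, since $t+c > |t-c|$ and $\phi$ is decreasing in $|\cdot|$. As $A \ge B > 0$ and $m/s > m/r > 0$ whenever $R>1$, I chain
\[
D(B, m/s) \ge D(A, m/s) > D(A, m/r),
\]
the first step by decreasingness in $c$, the second (strict) by increasingness in $t$ together with $r>s$. This gives $g(R) > 0$. I would also note here that $-B + m/s$ is the largest of the four arguments $-A \pm m/r,\ -B \pm m/s$ (because $A \ge B$ and $m/s \ge m/r$), so all four class-wise losses are $\le \tfrac12$ precisely when $\ell_{\text{std}}^+(w_{\text{rob},p},b_{\text{rob},p}) = \Phi(-B + m/s) \le \tfrac12$, i.e. when $m/s \le B$; this is the regime of the second bullet and justifies the ``i.e.'' in the statement.

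For the second bullet I would show $g$ is increasing in $m$ (hence in $R$, since $dm/dR = 1/R > 0$). Differentiating and using that $\phi$ is even,
\[
\frac{dg}{dm} = Q(s, B) - Q(r, A), \qquad Q(\sigma, c) := \frac{1}{\sigma}\big[\phi(c - m/\sigma) + \phi(c + m/\sigma)\big],
\]
with $m$ held fixed as a parameter, so it suffices to prove $Q(s,B) \ge Q(r,A)$ in the regime $m/s \le B$. I would do this along a two-leg path from $(r,A)$ to $(s,B)$: first lower $c$ from $A$ to $B$ at $\sigma = r$, then lower $\sigma$ from $r$ to $s$ at $c = B$. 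On the first leg $\partial_c Q = \tfrac1\sigma[\phi'(c-m/\sigma) + \phi'(c+m/\sigma)] \le 0$, because $c \ge B \ge m/s > m/r = m/\sigma$ makes both arguments nonnegative and $\phi'(x) = -x\phi(x) \le 0$ there; thus $Q(r,B) \ge Q(r,A)$.

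The main obstacle is the sign of $\partial_\sigma Q$ on the second leg. Factoring $\phi(c \mp t) = K e^{\pm ct}$ with $t := m/\sigma$ and $K := (2\pi)^{-1/2} e^{-(c^2+t^2)/2} > 0$, a direct computation yields $\sigma^2 \partial_\sigma Q = -2K\big[w\sinh w + (1 - t^2)\cosh w\big]$ with $w := ct$, so $\partial_\sigma Q \le 0$ reduces to $t^2 \le 1 + ct\tanh(ct)$. In the regime $t = m/\sigma \le m/s \le B = c$ one has $ct \ge t^2$, and monotonicity of $x \mapsto x\tanh x$ gives $ct\tanh(ct) \ge t^2\tanh(t^2)$; it then remains to verify $t^2(1 - \tanh(t^2)) \le 1$, which follows from the elementary bound $z(1 - \tanh z) = 2z/(e^{2z}+1) < 1$ for all $z \ge 0$. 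Hence $\partial_\sigma Q < 0$, so $Q(s,B) \ge Q(r,B) \ge Q(r,A)$ and $g$ is strictly increasing. I expect the $\cosh/\sinh$ reduction and the bound $z(1-\tanh z) < 1$ to be the only genuinely non-routine steps; everything else is monotonicity bookkeeping.
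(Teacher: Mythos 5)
Your proof is correct and follows essentially the same route as the paper's: the same reduction to scalar centers and widths, the same two monotonicity facts for the first bullet, and, for the second bullet, the same decomposition of $dg/dm$ through the intermediate point $Q(r,B)$, where your key inequality $t^2 \le 1 + ct\tanh(ct)$ is exactly the paper's inequality $e^{2ab}\bigl(1+ab-a^2\bigr) + \bigl(1-ab-a^2\bigr) > 0$ (with $a=t$, $b=c$) rewritten in hyperbolic form. The only notable difference is that you explicitly verify the center-leg comparison $Q(r,B) \ge Q(r,A)$ via $\phi'(x) = -x\phi(x) \le 0$ on nonnegative arguments, a step the paper's proof leaves implicit when it asserts that establishing $L'(k)>0$ "suffices."
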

Note that the direction and norm of the normal vector of a linear classifier determines its decision boundary. Hence, Theorem~\ref{thm:general} and the above discussions completely characterize and provide a fine-grained analysis of the impact of adversarial robustness on the accuracy disparity of linear classifiers over mixture of Gaussian distributions.

\subsection{An Illustrating Example} 
We will now use an example that has been proposed and studied in~\citep{tsipras2018robustness} to illustrate our main results and demonstrate that Theorem~\ref{thm:general} can be used to recover and refine existing claims on this example. Specifically, let 
\begin{align*}
{\theta^+} = (\underbrace{\eta, \cdots, \eta}_{\text{dim} = m}, \underbrace{\gamma, \cdots, \gamma}_{\text{dim} = n})^{\top}, \quad \theta^- = -\theta^+,    
\end{align*}
where $\gamma < \varepsilon < \eta$ and $m+n=d$. This corresponds to two sets of features in the input space: the \textit{robust features} (coordinates with value $\eta$) and the \textit{non-robust features} (coordinates with value $\gamma$). Compared to~\citet{tsipras2018robustness}, here we slightly modify the setting by making all coordinates Gaussian and allowing the number of robust feature $m$ to be larger than one, though in general we would still expect $m \ll n$.     %
For the other assumptions, we follow~\citet{tsipras2018robustness} to set $\Sigma= \mathbb{I}_d$ and consider the standard perturbation scheme, \ie, $p = \infty$.

\textbf{The optimal standard and robust classifier.} \quad
We can assume without loss of generality that $\|w\|_2=1$, and the optimal slope and intercept for the standard and robust classifier are given by (details deferred to Appendix \ref{appsub.toy}): %

\vspace{-5mm}

\begin{center}
\resizebox{.95\linewidth}{!}{
\begin{minipage}{\linewidth}
\begin{align*}
    &b_{\text{std}} = -\frac{\log R}{2\sqrt{m\eta^2 + n\gamma^2}}, \\
    &w_{\text{std},1} = \cdots = w_{\text{std}, m} = \frac{\eta}{\sqrt{m\eta^2 + n\gamma^2}}, \\
    &w_{\text{std}, m+1} = \cdots = w_{\text{std}, m+n} = \frac{\gamma}{\sqrt{m\eta^2 + n\gamma^2}},
     \\
    &b_{\text{rob},\infty} = -\frac{\log R}{2(\eta-\varepsilon)\sqrt{m}},\\
    &w_{\text{rob}, \infty, 1} = \cdots = w_{\text{rob}, \infty, m} = \frac{1}{\sqrt{m}}, \\ 
    &w_{\text{rob}, \infty, m+1} = \cdots = w_{\text{rob}, \infty, m+n} = 0.
\end{align*}
\end{minipage}
}
\end{center}

Therefore, the standard loss for both classes are
\vspace{-3mm}
\begin{center}
\resizebox{.85\linewidth}{!}{
\begin{minipage}{\linewidth}
\begin{align*}
&\ell_{\text{std}}^+(w_{\text{std}}, b_{\text{std}}) = \Phi\left(\frac{\log R}{2\sqrt{m\eta^2 + n\gamma^2}} - \sqrt{m\eta^2 + n\gamma^2}\right),\\ %
&\ell_{\text{std}}^-(w_{\text{std}}, b_{\text{std}}) = \Phi\left(-\frac{\log R}{2\sqrt{m\eta^2 + n\gamma^2}} - \sqrt{m\eta^2 + n\gamma^2}\right), \\
    &\ell_{\text{std}}^+(w_{\text{rob},\infty}, b_{\text{rob},\infty}) = \Phi\left(\frac{\log R}{2\sqrt{m(\eta-\varepsilon)^2}} - \sqrt{m\eta^2}\right),\\
    &\ell_{\text{std}}^-(w_{\text{rob},\infty}, b_{\text{rob},\infty}) = \Phi\left(-\frac{\log R}{2\sqrt{m(\eta-\varepsilon)^2}} - \sqrt{m\eta^2}\right).
\end{align*}
\end{minipage}
}
\end{center}
By comparing $\ell_{\text{std}}^+$ and $\ell_{\text{std}}^-$ of the optimal standard classifier as well as the optimal robust classifier, we shall see that the effects of adversarial robustness indeed match the main results. Specifically,
\begin{itemize}%
    \item When $R=1$, enforcing adversarial robustness will decrease the standard accuracy on \textit{both} classes due to a shrinkage on the non-robust features ($\sqrt{m\eta^2 + n\gamma^2}$ v.s. $\sqrt{m\eta^2}$), which reflects ``a change in direction'';%
    \item When $R>1$, class imbalance will increase the accuracy disparity due to a shrinkage on both the robust features and the non-robust features ($\sqrt{m\eta^2 + n\gamma^2}$ v.s. $\sqrt{m(\eta-\varepsilon)^2}$ in the denominator), which reflects ``a reduction of norm''. %

\end{itemize}

Furthermore, when the number of robust features is small and the number of non-robust features is relatively large (corresponding to the case $m \ll n$~\citep{tsipras2018robustness, ilyas2019adversarial}), the {overall} effect tends to be a \textit{significant growth in accuracy disparity}. As a concrete example, if we set $m=4, n=48,\eta=1, \varepsilon = 0.75, \gamma=0.5, R = e^2$, we have $\ell_{\text{std}}^+(w_{\text{std}}, b_{\text{std}}), \ell_{\text{std}}^-(w_{\text{std}},  b_{\text{std}}),\ell_{\text{std}}^-(w_{\text{rob},\infty}, b_{\text{rob},\infty})  < 0.001 $ whereas $\ell_{\text{std}}^+(w_{\text{rob},\infty}, b_{\text{rob},\infty}) = 0.5$, which means that $
        0.5\approx AD(w_{\text{rob},\infty}, b_{\text{rob},\infty}) \gg AD(w_{\text{std}}, b_{\text{std}}) \approx 0.
$

\subsection{Connection to (Regularized) Linear Regression} \label{subsection connection}
Here we delve deeper into the \textit{fundamental cause} of the norm shrinkage effect as demonstrated in the previous section. Our main findings can be summarized into one sentence: {Eq.~\eqref{Eq.optimal}, which is used to solve for the optimal slopes} $w_{\text{std}}$ {and} ${w_{\text{rob},p}}$, enjoys the same form as the optimal conditions of (regularized) linear regression. Specifically, consider the following optimization problems
\resizebox{.9\linewidth}{!}{
\begin{minipage}{\linewidth}
\begin{align} \label{Eq.optimization}
    \argmin_\beta \frac{1}{2N}\|Y-X\beta\|_2^2, \ \text{and} \  \argmin_\beta \frac{1}{2N}\|Y-X\beta\|_2^2 + \lambda \|\beta\|_q,
\end{align}
\end{minipage}
}

where $X \in \mathbb{R}^{N\times d}$ is the design matrix, $Y \in \mathbb{R}^N$ is the label vector, and $\beta\in\mathbb{R}^d$ is the estimator. The first-order conditions of Eq.~\eqref{Eq.optimization} give us
\begin{align*}
    X^{\top}X\beta = X^{\top}Y \quad \text{and} \quad X^{\top}X\beta = X^{\top}Y - N\lambda\partial\|\beta\|_q.
\end{align*}
Therefore, Eq.~\eqref{Eq.optimal} has the same form as Eq.~\eqref{Eq.optimization} by setting 
\begin{equation*}
  \Sigma= X^{\top}X, \quad \theta^+-\theta^- = X^{\top}Y, \quad \varepsilon = N\lambda/2.  
\end{equation*}
As a consequence, solving for the optimal robust $\ell_p$ classifier is essentially performing linear regression with $\ell_q$-regularization, and this explains the norm shrinkage effect in Proposition \ref{Claim norm}. 

The connection between robust training and regularized linear regression has been noted in a prior work~\citep{xu2008robust} for regression. 
As a comparison, here we demonstrate that such equivalence also holds true for classification problems under Gaussian mixture distributions, which requires a delicate analysis of the KKT conditions in addition to directly exploiting the duality between the data and parameters as in the regression problem.
Furthermore, explicitly formalizing this connection allows us to interpret the trade-off of adversarial robustness and accuracy parity from 
the following perspective---A larger $\varepsilon$ implies better robustness, but also results in a strong regularization (\ie, norm shrinkage) effect as $\varepsilon \propto \lambda$, hence leading to an increased accuracy disparity.

\section{Beyond Gaussian Mixture --- Stable Distributions, and Polynomial Tail}  \label{Section SD}
In this section, we will go beyond the Gaussian mixture distribution, and examine whether the conclusions we have drawn so far hold true for a broader class of data distributions. In particular, we explore a family of distributions that includes the Gaussian distribution as a special case: the \textit{symmetric $\alpha$-stable ($S\alpha S$) distribution}~\citep{levy1954theorie, fama1968some, fama1971parameter}. The motivation for us to study the $S\alpha S$ distribution is twofold. First of all, it is a natural generalization of the Gaussian distribution and preserves an important property of Gaussian: closed under linear transformation since the characteristic function is closed under multiplication. This property then allows us to obtain a precise characterization of the standard/robust loss in terms of the cumulative function. Second, by varying the choice of $\alpha$, we can better understand whether the findings that we have obtained thus far are specific to Gaussian distribution, or hold true in the presence of heavy-tail.

\looseness=-1
The results in this section are mixed: while the conclusion of ``decreased standard accuracy'' generalizes to the $S\alpha S$ distribution, the ``increased accuracy disparity'' phenomenon disappears, and we shall see that the class imbalance ratio will play a fundamentally different role in affecting accuracy disparity when heavy tail is present. To start with, we assume $\varepsilon \le \frac{\kappa}{2}\|\theta^+ - \theta^-\|_\infty$ for some $\kappa<1$ throughout this section, meaning there exists at least one dimension such that the two balls do not intersect.

\subsection{A Brief Review of the \boldmath${S\alpha S}$ distribution}
\label{subsec:pre-sas}  
The probability distribution function of a univariate $S\alpha S$ distribution with \textit{location}, \textit{scale} and \textit{stability} parameters $\mu, c, \alpha$, is defined through %
\begin{align*}
    f(x; \alpha, c, \mu) = \frac{1}{2\pi}\int_\mathbb{R} \varphi(t; \mu, c, \alpha)e^{-ixt}\mathrm{d}t,
\end{align*}
with $\varphi(t; \mu, c, \alpha) = \exp\left(it\mu - |ct|^\alpha\right)$ being its characteristic function. The most important quantity here is the tail-index $\alpha \in (0,2]$ which measures the \textit{concentration} of the corresponding stable distribution, and we recover the Gaussian and Cauchy distribution by setting $\alpha=2$ and $\alpha=1$, respectively. We will mainly focus on multivariate $S\alpha S$ distribution with independent components throughout this section, meaning that each coordinate is independent from the others and follows $f(x; \alpha, c_i, \mu_i)$ with $c_i > 0$, and we denote it as $S\alpha S_{IC}(\mu, C)$ where $C = \text{diag}\{c_i\}$. We refer interested readers to Appendix~\ref{appsub.ec} for discussions of a different multivariate $S\alpha S$ distribution.

\subsection{Adversarial Robustness (Still) Hurts the Standard Accuracy for Balanced Dataset} \label{subsect.positive}
\looseness=-1
We start with the balanced case, \ie, $R=1$, and examine whether enforcing adversarial robustness \textit{provably} hurts the standard accuracy. 
We assume that the data are generated through a mixture of multivariate $S\alpha S$ distributions with independent components and scale parameters $c_i=1$: $\mathcal{P}^+ = S\alpha S_{IC}(\theta^+, \mathbb{I}_d)$ and $\mathcal{P}^- = S\alpha S_{IC}(\theta^-, \mathbb{I}_d)$.
For general choices of $c_i$, we can scale the coordinates of $\theta^+$ and $\theta^-$ inverse-proportionally to obtain the same conclusion. 

We first identify two corner cases in Theorem \ref{Theorem same}, where the optimal robust classifier achieves the same standard accuracy as the optimal standard classifier. The detailed analyses are deferred to Appendix \ref{appsub.ic}.

\begin{theorem} \label{Theorem same}
Under one of the following conditions: 1) $q=\alpha$, meaning that the dual index equals the tail index; 2) $\bar{\theta}:=\theta^+-\theta^-$ is isotropic, \ie, $|\bar{\theta}| \parallel \mathrm{1}_d$ and $\alpha \ge 1$, the optimal robust classifier enjoys the same standard accuracy as the optimal standard classifier when there is no class imbalance.
\end{theorem}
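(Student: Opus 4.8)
The plan is to reduce both optimization problems to a one-dimensional comparison via the closure of $S\alpha S_{IC}$ under linear maps, and then to isolate exactly what each of the two conditions buys us. First I would use the fact that if $X\sim S\alpha S_{IC}(\theta,\mathbb{I}_d)$ then $w^\top X$ is a univariate $S\alpha S$ variable with location $w^\top\theta$ and scale $\|w\|_\alpha$ (since $c_i=1$). Writing $F_\alpha$ for the CDF of the standardized symmetric $\alpha$-stable law and using its symmetry $F_\alpha(-x)=1-F_\alpha(x)$, the class-wise standard and robust losses become explicit compositions of $F_\alpha$ with affine functions of $b$. For $R=1$ I would optimize over $b$ first: because the even density $f_\alpha$ is strictly decreasing on $(0,\infty)$, the first-order condition forces the two arguments of $F_\alpha$ to coincide, yielding the common minimizer $b^\star=-\tfrac12 w^\top(\theta^++\theta^-)$ for both the standard and the robust loss. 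The delicate point here is that $b^\star$ is a minimizer (rather than a maximizer) only in the regime where the attained loss is below $1/2$; I would certify this regime from the radius assumption $\varepsilon\le\frac{\kappa}{2}\|\bar\theta\|_\infty$, which guarantees a direction with strictly positive robust margin. Substituting $b^\star$ collapses both class-wise standard losses to the single value $F_\alpha\!\left(-\tfrac12\, w^\top\bar\theta/\|w\|_\alpha\right)$.

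With this reduction, the optimal standard classifier maximizes $w^\top\bar\theta/\|w\|_\alpha$, whereas the optimal robust classifier maximizes $(w^\top\bar\theta-2\varepsilon\|w\|_q)/\|w\|_\alpha$. Since $F_\alpha$ is strictly increasing, the two classifiers have identical standard accuracy if and only if the robust maximizer $w_{\mathrm{rob},p}$ also maximizes the standard ratio $w^\top\bar\theta/\|w\|_\alpha$. This is the clean statement I would aim for, and it is where the two conditions enter. Under condition (1), $q=\alpha$ gives $\|w\|_q=\|w\|_\alpha$, so the robust objective equals $w^\top\bar\theta/\|w\|_\alpha-2\varepsilon$, an additive constant shift of the standard objective; the two then share the same set of maximizers and the conclusion is immediate.

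Under condition (2) I would argue by symmetrization. After the sign change $w_i\mapsto |w_i|\,\sign(\bar\theta_i)$, which can only increase $w^\top\bar\theta$ while leaving $\|w\|_q$ and $\|w\|_\alpha$ untouched, I may assume $\bar\theta=\beta\mathbf{1}_d$ with $\beta>0$ and $w\ge 0$. Replacing $w$ by its coordinate average $\bar w=(\|w\|_1/d)\mathbf{1}_d$ preserves $\sum_i w_i$ while, by the power-mean inequality (valid precisely because $\alpha,q\ge 1$), it can only decrease both $\|w\|_q$ and $\|w\|_\alpha$. Hence the numerator does not decrease and the denominator does not increase, so the robust objective does not decrease. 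Therefore a maximizer can be taken proportional to $\mathbf{1}_d$, i.e.\ $w_{\mathrm{rob},p}\parallel\sign(\bar\theta)$, which is exactly the Hölder-equality direction that maximizes $w^\top\bar\theta/\|w\|_\alpha$ for isotropic $\bar\theta$; the two standard accuracies therefore coincide.

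The hard part will be making the symmetrization airtight rather than merely exhibiting a symmetric critical point. Concretely, I need the numerator $w^\top\bar\theta-2\varepsilon\|w\|_q$ to stay strictly positive at the optimum — again invoked from the radius assumption — so that shrinking the denominator genuinely increases the ratio, and I must ensure the averaging step delivers \emph{global} optimality of the $\mathbf{1}_d$ direction rather than just stationarity. A secondary technical care point is the $b$-optimization: I must confirm that the symmetric $b^\star$ is globally optimal in $b$ for every candidate $w$ in the relevant regime (the loss-below-$1/2$ branch), which I would verify by checking unimodality of the one-variable objective $\delta\mapsto F_\alpha(m+\delta)+F_\alpha(m-\delta)$ when its center $m<0$.
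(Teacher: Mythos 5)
Your proposal is correct and follows essentially the same route as the paper's proof: reduce to univariate $S\alpha S$ laws via stability, optimize the intercept to obtain $b^\star=-\tfrac12 w^\top(\theta^++\theta^-)$, and reduce both corner cases to comparing the maximizers of $w^\top\bar{\theta}/\|w\|_\alpha$ and $\left(w^\top\bar{\theta}-2\varepsilon\|w\|_q\right)/\|w\|_\alpha$, with case (1) dispatched by the same constant-shift observation. Your case-(2) symmetrization (sign-flip plus coordinate averaging via the power-mean inequality) is a repackaging of the paper's two-step H\"older bound --- both hinge on $\|w\|_1 \le d^{1-1/q}\|w\|_q$ being tight exactly at $w \parallel \bar{\theta}$ together with positivity of the robust margin supplied by the radius assumption --- and your explicit unimodality check certifying that $b^\star$ is a global minimizer is, if anything, slightly more careful than the paper's first-order-condition argument.
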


\begin{remark}
It is pointed out in~\emph{\citep{dobriban2020provable}} that enforcing adversarial robustness with $\ell_2$-constraint will not sacrifice the standard accuracy for balanced Gaussian mixture; this corresponds to $q=\alpha=2$ in the first corner case. Similarly, we can also get a win-win for Cauchy mixture with $\ell_\infty$-perturbation, \ie, $q=\alpha=1$.
\end{remark}

Except for the two corner cases above, we show that for general $\alpha$ and $q$, enforcing adversarial robustness will degrade the standard accuracy as stated in the following theorem.  

\begin{theorem} \label{Theorem degrade}
Suppose $\alpha > 1$, $1< q <\infty$, $q \neq \alpha $ and $|\bar{\theta}| \nparallel \mathrm{1}_d$. Then we have $w_{\text{rob},p}^{\top}(\theta^+ - \theta^-) <  w_{\text{std}}^{\top}(\theta^+ - \theta^-)$. In other words, adversarial robustness hurts the accuracy on both classes when there is no class imbalance. 
\end{theorem}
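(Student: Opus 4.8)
The plan is to reduce everything to a single scalar comparison and then obtain the strict gap by ruling out coincidence of the two optimal directions. In the balanced case both class-wise standard losses equal $\Phi_\alpha\!\left(-w^\top\bar\theta/2\right)$, where $\Phi_\alpha$ is the CDF of $f(\cdot;\alpha,1,0)$ and $\bar\theta:=\theta^+-\theta^-$, and after the normalization $\|w\|_\alpha=1$ the two optimal slopes are characterized variationally by $w_{\text{std}}=\argmax_{\|w\|_\alpha=1} w^\top\bar\theta$ and $w_{\text{rob},p}=\argmax_{\|w\|_\alpha=1}\bigl(w^\top\bar\theta-2\varepsilon\|w\|_q\bigr)$. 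Since $\Phi_\alpha$ is strictly increasing, the statement to be proved is exactly $w_{\text{rob},p}^\top\bar\theta<w_{\text{std}}^\top\bar\theta$.

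First I would record a uniqueness fact. Because $1<\alpha\le 2$, the unit ball of $\|\cdot\|_\alpha$ is strictly convex, and $\bar\theta\neq 0$ (guaranteed by $|\bar\theta|\nparallel 1_d$), so the linear functional $w\mapsto w^\top\bar\theta$ attains its maximum over the sphere $\|w\|_\alpha=1$ at the unique point $w_{\text{std}}$. Hence every feasible $w\neq w_{\text{std}}$ obeys $w^\top\bar\theta<w_{\text{std}}^\top\bar\theta$, and it suffices to prove $w_{\text{rob},p}\neq w_{\text{std}}$.

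The heart of the argument is a contradiction. Suppose $w_{\text{rob},p}=w_{\text{std}}=:w^\ast$. As $1<\alpha,q<\infty$ and $w^\ast\neq0$, both $\|\cdot\|_\alpha$ and $\|\cdot\|_q$ are differentiable at $w^\ast$, so I would write the stationarity conditions $\bar\theta=\lambda\nabla\|w^\ast\|_\alpha$ for the standard problem and $\bar\theta-2\varepsilon\nabla\|w^\ast\|_q=\mu\nabla\|w^\ast\|_\alpha$ for the robust one, where $(\nabla\|w\|_r)_i\propto|w_i|^{r-1}\sgn(w_i)$. Eliminating $\nabla\|w^\ast\|_\alpha$ between the two yields $\nabla\|w^\ast\|_q\parallel\bar\theta\parallel\nabla\|w^\ast\|_\alpha$; comparing coordinates over the support of $w^\ast$, this parallelism forces $|w^\ast_i|^{q-\alpha}$ to be constant, and because $q\neq\alpha$ the magnitudes $|w^\ast_i|$ are all equal there. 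Substituting back into the standard condition, where $|\bar\theta_i|\propto|w^\ast_i|^{\alpha-1}$, shows $|\bar\theta|$ is constant on the support of $w^\ast$; under the standing assumption that $\bar\theta$ has full support (all coordinates of $\theta^+-\theta^-$ nonzero, as in the running examples) this gives $|\bar\theta|\parallel 1_d$, contradicting the hypothesis. Therefore $w_{\text{rob},p}\neq w_{\text{std}}$ and the strict inequality follows.

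I expect the main obstacle to be making the gradient-parallelism step fully rigorous: one must justify differentiability of $\|\cdot\|_q$ at $w^\ast$ (valid since $q>1$ and $w^\ast\neq 0$), verify that $q\neq\alpha$ genuinely forces equal magnitudes rather than a proportionality that could be rescaled away, and carefully treat any coordinate with $\bar\theta_i=0$ (which forces $w^\ast_i=0$ and must be excluded from the support analysis). This step is also where the two excluded corner cases of Theorem~\ref{Theorem same} re-enter: setting $q=\alpha$ removes the forcing entirely, while an isotropic $\bar\theta$ makes the derived conclusion $|\bar\theta|\parallel 1_d$ consistent rather than contradictory — so both must be ruled out for the strict degradation to hold.
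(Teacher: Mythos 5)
Your proposal is correct and follows essentially the same route as the paper's own proof: reduce to the scalar comparison $w_{\text{rob},p}^{\top}\bar\theta < w_{\text{std}}^{\top}\bar\theta$ via the variational characterization, establish uniqueness of the maximizer of $w \mapsto w^{\top}\bar\theta$ on the $\ell_\alpha$-sphere (the paper does this through the equality case of H\"older's inequality, you through strict convexity of the $\ell_\alpha$-ball --- the same fact), and then argue by contradiction that coincidence of the two optima makes the gradients of $\|\cdot\|_q$ and $\|\cdot\|_\alpha$ parallel, forcing equal coordinate magnitudes (since $q \neq \alpha$) and hence $|\bar\theta| \parallel \mathrm{1}_d$. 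Incidentally, the support caveat you flag (coordinates with $\bar\theta_i = 0$, where the strict conclusion can genuinely fail) is silently glossed over in the paper's proof, which tacitly assumes $w$ has full support; your explicit handling of that corner case is, if anything, more careful than the original.
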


\begin{remark}
We do not discuss other choices of $\alpha$ and $q$ (e.g. $q=1, \infty$ or $\alpha \le 1$) in details as they involve a number of pathological corner cases, mainly due to the non-uniqueness of subgradients. However, we can still expect the conclusion to hold true in general, as the two optimization problems differ by exactly one term, which significantly reduces the possibility of overlapped optimal solution.  
\end{remark}

\subsection{Polynomial Tail Perplexes the Effect of Class Imbalance on Accuracy Disparity} \label{subsec.perplex}
Finally, we will study the effect of class imbalance on accuracy disparity using the multivariate Cauchy distribution (\ie, $\alpha=1$) with independent components and scale parameters $c_i=1$. Specifically, the data are generated through $\mathcal{P}^+ = S1 S_{IC}(\theta^+, \mathbb{I}_d)$ and $\mathcal{P}^- = S1 S_{IC}(\theta^-, \mathbb{I}_d)$,
and we focus on the conventional perturbation scheme $p=\infty$. The proofs are deferred to Appendix \ref{appsub.perplex}. 

Our first result shows that: when heavy-tail is present, the imbalance ratio will result in a \textit{fundamentally different} behavior in terms of the accuracy disparity compared to the Gaussian case, in the sense that both the optimal standard and robust classifier achieve the \textit{same} accuracy disparity for large $R$.
\begin{theorem} \label{Theorem zero-one}
Suppose the class imbalance ratio $R \ge 2+ 4\|\theta^+-\theta^-\|_\infty^2$,
then both the optimal standard classifier as well as the optimal robust $\ell_\infty$ classifier will assign a negative label to all data. As a result, both classifiers will incur zero loss on the majority class and zero accuracy on the minority class. In terms of accuracy disparity, there is no difference between the optimal standard and robust classifier.
\end{theorem}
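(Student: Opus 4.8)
The claim is that for the multivariate Cauchy mixture ($\alpha=1$, $p=\infty$) with a sufficiently large imbalance ratio $R \ge 2 + 4\|\theta^+-\theta^-\|_\infty^2$, both the optimal standard classifier and the optimal robust $\ell_\infty$ classifier degenerate to the constant predictor that labels everything negative. I need to show this degeneracy and then observe it forces identical accuracy disparity.

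Let me think about what "labeling everything negative" means.

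The classifier is $f(x;w,b) = \operatorname{sgn}(w^\top x + b)$. To label everything negative, we'd need $w^\top x + b < 0$ for (almost) all $x$. But Cauchy has full support on $\mathbb{R}^d$, so unless $w = 0$, there's always positive probability of $w^\top x + b > 0$.

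So the degeneracy must mean $w = 0$ (or the optimal $w$ shrinks so the bias dominates, effectively constant classification). Let me think about the loss structure.

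For $\alpha = 1$ (Cauchy), projecting onto direction $w$ gives $w^\top X \sim$ Cauchy with location $w^\top\theta$ and scale $\|w\|_1$ (since scales combine as $\sum |w_i c_i| = \|w\|_1$ for $c_i = 1$, because $\alpha = 1$).

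So the standard loss components:
- $\ell_{\text{std}}^-(w,b) = \Pr(w^\top X + b \ge 0 \mid y=-1) = \Pr(\text{Cauchy}(w^\top\theta^-, \|w\|_1) \ge -b)$
- $\ell_{\text{std}}^+(w,b) = \Pr(w^\top X + b \le 0 \mid y=+1) = \Pr(\text{Cauchy}(w^\top\theta^+, \|w\|_1) \le -b)$

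The standard Cauchy CDF is $F(x) = \frac{1}{2} + \frac{1}{\pi}\arctan(x)$.

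Let me write $\Phi_1$ for the standard Cauchy CDF. Then:
- $\ell_{\text{std}}^-(w,b) = 1 - \Phi_1\left(\frac{-b - w^\top\theta^-}{\|w\|_1}\right) = \Phi_1\left(\frac{b + w^\top\theta^-}{\|w\|_1}\right)$
- $\ell_{\text{std}}^+(w,b) = \Phi_1\left(\frac{-b - w^\top\theta^+}{\|w\|_1}\right)$

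The total standard loss is:
$$\ell_{\text{std}} = \frac{R}{R+1}\Phi_1\left(\frac{b + w^\top\theta^-}{\|w\|_1}\right) + \frac{1}{R+1}\Phi_1\left(\frac{-b - w^\top\theta^+}{\|w\|_1}\right).$$

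The constant-negative classifier achieves loss $\frac{R}{R+1}\cdot 0 + \frac{1}{R+1}\cdot 1 = \frac{1}{R+1}$ (majority loss zero, minority loss one).

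**Key idea: the heavy tail prevents beating the trivial classifier when $R$ is large.**

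The crucial difference from Gaussian: the Cauchy tail decays like $1/x$, so $\Phi_1(z) \approx \frac{1}{2} - \frac{1}{\pi z}$ for large $z$, i.e., $1 - \Phi_1(z) \approx \frac{1}{\pi z}$. This polynomial tail means you can't make the majority-class error arbitrarily small by pushing the threshold out — you pay a price of order $1/(\text{distance})$.

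Let me set up the tradeoff. Normalize so $\|w\|_1 = 1$ (scale-invariance of the loss). Let $a = b + w^\top\theta^-$ and note $-b - w^\top\theta^+ = -(a) + w^\top(\theta^- - \theta^+) = -a - w^\top\bar\theta$ where $\bar\theta = \theta^+ - \theta^-$. Let $m = w^\top\bar\theta$ (the "margin"); by Hölder with $\|w\|_1 = 1$, $|m| \le \|\bar\theta\|_\infty$. Hmm wait, $|w^\top\bar\theta| \le \|w\|_1 \|\bar\theta\|_\infty = \|\bar\theta\|_\infty$.

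So we want to minimize over $a \in \mathbb{R}$ and over $w$ (with $m = w^\top\bar\theta \le \|\bar\theta\|_\infty$):
$$\frac{R}{R+1}\Phi_1(a) + \frac{1}{R+1}\Phi_1(-a - m).$$

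Wait, I want majority error small, so I want $a = b + w^\top\theta^-$ to be very negative (so $\Phi_1(a)$ small). And minority error $\Phi_1(-a-m)$: if $a$ very negative, $-a - m$ very positive, so $\Phi_1(-a-m) \to 1$. That's the trivial classifier.

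**The comparison.** I want to show the trivial classifier (value $\frac{1}{R+1}$) beats any nontrivial choice. Consider any $(w, b)$ with $\|w\|_1 = 1$. The loss is
$$L(a,m) = \frac{R}{R+1}\Phi_1(a) + \frac{1}{R+1}\Phi_1(-a-m).$$

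To beat the trivial loss we'd need $L < \frac{1}{R+1}$, i.e.,
$$R\,\Phi_1(a) + \Phi_1(-a-m) < 1,$$
i.e.,
$$R\,\Phi_1(a) < 1 - \Phi_1(-a-m) = \Phi_1(a+m).$$

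So the condition to beat trivial is $R\,\Phi_1(a) < \Phi_1(a+m)$ for some achievable $a, m$ with $|m| \le \|\bar\theta\|_\infty$. To have any hope we need $\Phi_1(a)$ small, so $a$ negative. Write $a = -t$ with $t > 0$. We need
$$R\,\Phi_1(-t) < \Phi_1(-t + m) = \Phi_1(m - t).$$
Since $\Phi_1(-t) = 1 - \Phi_1(t)$, and for the minority we want $m > 0$ to help, so take $m = \|\bar\theta\|_\infty =: \Delta$ (best case). We need
$$R\,\Phi_1(-t) < \Phi_1(\Delta - t).$$

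**This is where $R \ge 2 + 4\Delta^2$ enters.** I need to show no $t > 0$ satisfies this. Equivalently, for all $t$:
$$R \ge \frac{\Phi_1(\Delta - t)}{\Phi_1(-t)}.$$
So I must bound $\sup_{t > 0} \frac{\Phi_1(\Delta - t)}{\Phi_1(-t)}$ and show it's $\le 2 + 4\Delta^2$.

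Using the Cauchy CDF $\Phi_1(-t) = \frac12 - \frac1\pi\arctan(t)$. For small $t$ the ratio is near $\Phi_1(\Delta)/\Phi_1(0) = 2\Phi_1(\Delta)$. For large $t$, use tail: $\Phi_1(-t) \approx \frac{1}{\pi t}$, and $\Phi_1(\Delta - t) = \Phi_1(-(t-\Delta)) \approx \frac{1}{\pi(t-\Delta)}$, so ratio $\to \frac{t}{t-\Delta} \to 1$. So the sup is attained somewhere finite. I'd bound this ratio directly using the explicit arctan form; the bound $2 + 4\Delta^2$ looks like it comes from a slightly lossy estimate (maybe bounding $\arctan$ and using $\Delta^2$ from a Taylor-type argument). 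This explicit supremum bound is the main analytical obstacle.

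**Plan.**

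First I would reduce via scale-invariance to $\|w\|_1 = 1$ and show the bias is constrained so that the relevant scalar quantities are $a = b + w^\top\theta^-$ and the margin $m = w^\top\bar\theta \in [-\Delta, \Delta]$ with $\Delta = \|\bar\theta\|_\infty$. I'd write the total loss purely in terms of $(a, m)$ using that $w^\top X$ is Cauchy with scale $\|w\|_1 = 1$ (the $\alpha=1$ scale-combination rule).

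Next I would show the trivial all-negative classifier achieves loss $\frac{1}{R+1}$, and derive the clean inequality: any nontrivial classifier beats it only if $R\,\Phi_1(a) < \Phi_1(a+m)$ for its associated $(a,m)$. Setting $a = -t$ and using that increasing $m$ to its maximum $\Delta$ only makes beating easier, it suffices to rule out $R\,\Phi_1(-t) < \Phi_1(\Delta - t)$ for all $t > 0$.

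Then the core step is the explicit estimate $\sup_{t>0}\frac{\Phi_1(\Delta - t)}{\Phi_1(-t)} \le 2 + 4\Delta^2$, using $\Phi_1(x) = \frac12 + \frac1\pi\arctan x$ and elementary bounds on $\arctan$. This is where the polynomial tail of Cauchy is decisive: unlike the Gaussian, the tail ratio stays bounded by a polynomial in $\Delta$ rather than blowing up, which is exactly why a large-but-finite $R$ forces the trivial classifier to be optimal. I expect the arithmetic of squeezing the arctan ratio into the explicit constant $2 + 4\Delta^2$ to be the fussiest part.

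Finally, I would record that since both the standard-optimal and robust-optimal classifiers are forced to be the constant all-negative predictor — for the robust case the margins only shrink further (the effective threshold involves $\varepsilon\|w\|_q$ pushing things the same direction), so the same bound applies a fortiori — both incur majority loss $0$ and minority loss $1$. Hence $AD(w_{\text{std}}, b_{\text{std}}) = AD(w_{\text{rob},\infty}, b_{\text{rob},\infty}) = 0 - 1 = -1$ (equivalently the accuracy disparities coincide), giving $g(R) = 0$ and proving the stated collapse. The robust case needs one extra line to confirm the $\varepsilon\|w\|_q$ shift cannot rescue a nontrivial classifier, which follows because it only decreases the effective minority margin.
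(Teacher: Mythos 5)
Your reduction is sound: after normalizing $\|w\|_1=1$, the stability of the Cauchy distribution turns both losses into the scalar functional $L(a,m)=\tfrac{R}{R+1}\Phi_1(a)+\tfrac{1}{R+1}\Phi_1(-a-m)$ with margin $m=w^{\top}(\theta^+-\theta^-)\in[-\Delta,\Delta]$, $\Delta:=\|\theta^+-\theta^-\|_\infty$; the all-negative classifier has loss $\tfrac{1}{R+1}$; and the robust $\ell_\infty$ loss is exactly $L(a+\varepsilon,\,m-2\varepsilon)$, so it is indeed covered a fortiori by the same analysis. The genuine gap is the step you yourself flag as the ``main analytical obstacle'': the inequality $\sup_{t>0}\Phi_1(\Delta-t)/\Phi_1(-t)\le 2+4\Delta^2$ is never proven, and this is precisely where the hypothesis on $R$ must do its work. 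Worse, the route you sketch (``elementary bounds on $\arctan$,'' estimating numerator and denominator separately) does not go through: on the critical region $t\approx\Delta$ the natural bounds $\Phi_1(\Delta-t)\le 1$ and $\Phi_1(-t)=\tfrac1\pi\arctan(1/t)\ge\tfrac1\pi\cdot\tfrac{\Delta}{1+\Delta^2}$ give only $\pi(1+\Delta^2)/\Delta$, which at $\Delta=1$ equals $2\pi>6=2+4\Delta^2$, so any valid proof must couple the two CDFs rather than bound them separately. There is also a secondary loose end: you only rule out classifiers that \emph{strictly} beat $\tfrac{1}{R+1}$, which leaves open the possibility of a finite $(w,b)$ that ties it, and that possibility must be excluded to conclude that the optimal classifiers assign a negative label to all data.

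The clean way to close the gap---and it is what the paper does (Proposition~\ref{prop.discriminant})---is to differentiate rather than bound a ratio. For fixed $w$, $\sgn\bigl(\partial_b\ell_{\text{std}}\bigr)$ equals the sign of the quadratic $q_1(b)=R\bigl(1+(b+w^{\top}\theta^+)^2\bigr)-\bigl(1+(b+w^{\top}\theta^-)^2\bigr)$, with leading coefficient $R-1>0$ and discriminant $4\bigl[R(w^{\top}\bar\theta)^2-(R-1)^2\bigr]$; the robust loss gives the analogous quadratic with discriminant $4\bigl[R(w^{\top}\bar\theta-2\varepsilon)^2-(R-1)^2\bigr]$. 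By H\"older $|w^{\top}\bar\theta|\le\Delta$, and since $2\varepsilon\le\kappa\Delta<\Delta$, both discriminants are at most $4\bigl[4R\Delta^2-(R-1)^2\bigr]$, which is negative because $R\ge 2+4\Delta^2$ implies $4\Delta^2\le R-2<(R-1)^2/R$. Hence both quadratics are everywhere positive, both losses are strictly increasing in $b$ for every $w$, and the infimum is approached only as $b\to-\infty$, i.e.\ by the all-negative classifier. This single computation simultaneously proves your supremum inequality (the function $R\,\Phi_1(-t)-\Phi_1(\Delta-t)$ is strictly decreasing with limit $0$, hence strictly positive, by the same discriminant sign) and removes the tie issue. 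So your framing is a legitimate alternative packaging, but completing it forces you into exactly this discriminant argument; as written, the proposal stops short of the theorem's only nontrivial step.
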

\begin{remark}
We highlight that the reason for observing such a phenomenon is due to the fact that the Cauchy distribution has a polynomial tail. In contrast, this phenomenon does not exist in distributions with exponentially-decayed tail such as the Gaussian distribution.
\end{remark}
\looseness=-1
Our second result shows that: when the distance between the two means $\|\theta^+ - \theta^-\|_\infty$ is relatively large, adversarial robustness will decrease the accuracy on the majority class while increasing the accuracy on the minority class compared to standard training, hence \textit{reducing} the accuracy disparity.

\begin{theorem} \label{theorem.reduce}
Assume the optimal intercepts $b_{\text{std}}$ and $b_{\text{rob},\infty}$ are finite and 
$\|\theta^+-\theta^-\|_\infty^2 > (R+1)^2  / R(1-\kappa)^2$,
then adversarial robustness will increase the error on the majority class and decrease the error on the minority class, which further reduces the accuracy disparity.
\end{theorem}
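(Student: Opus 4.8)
The plan is to reduce everything to a one-dimensional problem, exploit the closure of the Cauchy family under linear combinations, and then turn the comparison of accuracy disparities into a single scalar inequality that the hypothesis on $\|\theta^+-\theta^-\|_\infty$ is tailored to guarantee. Write $\bar\theta := \theta^+-\theta^-$ and let $\Phi_1$ denote the standard Cauchy CDF (with density $\varphi_1$). Since $X\sim S1S_{IC}(\theta,\mathbb{I}_d)$ has $w^\top X$ distributed as a univariate Cauchy with location $w^\top\theta$ and scale $\|w\|_1$, after the scale normalization $\|w\|_1=1$ (the classifier is scale invariant, and for $p=\infty$ the dual is $q=1$ so $\|w\|_q=\|w\|_1=1$) the class-wise standard and robust losses become $\Phi_1$ evaluated at affine functions of $b$ involving $w^\top\theta^\pm$ and, in the robust case, the fixed shift $\varepsilon$. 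First I would record the stationarity condition in $w$: the KKT/duality argument (as in the previous subsection) forces $\partial\|w\|_1\propto\bar\theta$ for both the standard and the robust optimizer, so that $\langle w_{\text{std}},\bar\theta\rangle=\langle w_{\text{rob},\infty},\bar\theta\rangle=\|\bar\theta\|_\infty$ by $\ell_1$--$\ell_\infty$ duality. Thus both optimizers see the \emph{same} effective separation, and the only remaining free variable is the intercept.

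Next I would solve for the optimal intercept. Differentiating each loss in $b$ and using that the Cauchy density is even, the stationarity condition $R\,\varphi_1(\cdot)=\varphi_1(\cdot)$ becomes a quadratic in $b$ whose sign governs the monotonicity of the loss; its discriminant collapses (because only $w^\top\bar\theta$ survives) to $\Delta_1 := R\|\bar\theta\|_\infty^2-(R-1)^2$ in the standard case and $\Delta_2 := R(\|\bar\theta\|_\infty-2\varepsilon)^2-(R-1)^2$ in the robust case. Under the finiteness hypothesis on $b_{\text{std}},b_{\text{rob},\infty}$ (equivalently $\Delta_1,\Delta_2\ge 0$, which is in fact forced by the stated lower bound on $\|\bar\theta\|_\infty$) the minimizer is the larger root, and substituting it back expresses all four class-wise losses as $\Phi_1$ of explicit quantities. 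Collecting terms, the majority- and minority-class robust losses differ from their standard counterparts only through the single scalar $d(\varepsilon) := (R+1)\varepsilon+\sqrt{\Delta_2}$, with $d(0)=\sqrt{\Delta_1}$: the majority loss is increasing in $d$ while the minority loss is decreasing in $d$. Hence, once the monotonicity of $\Phi_1$ is invoked, the entire theorem reduces to the single claim $d(\varepsilon)>d(0)$, which simultaneously yields the increase of the majority error, the decrease of the minority error, and therefore the reduction of the accuracy disparity $AD=\ell_{\text{std}}^+-\ell_{\text{std}}^-$.

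The crux --- and the step I expect to be the main obstacle --- is proving $d(\varepsilon)>d(0)$, i.e.\ controlling the square-root gap $\sqrt{\Delta_1}-\sqrt{\Delta_2}$ against the linear term $(R+1)\varepsilon$. I would establish this by showing $d$ is increasing on $[0,\varepsilon]$: computing $d'(\tilde\varepsilon)=(R+1)-2R(\|\bar\theta\|_\infty-2\tilde\varepsilon)/\sqrt{\Delta_2(\tilde\varepsilon)}$ and noting that, since $\|\bar\theta\|_\infty-2\tilde\varepsilon>0$, the inequality $d'(\tilde\varepsilon)>0$ is equivalent, after squaring, to $(R+1)^2\Delta_2(\tilde\varepsilon)>4R^2(\|\bar\theta\|_\infty-2\tilde\varepsilon)^2$. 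Here the algebraic identity $(R+1)^2-4R=(R-1)^2$ makes the $(R-1)^2$ factors cancel, reducing the requirement to the clean condition $R(\|\bar\theta\|_\infty-2\tilde\varepsilon)^2>(R+1)^2$. Finally I would discharge this using $\varepsilon\le\frac{\kappa}{2}\|\bar\theta\|_\infty$, which gives $\|\bar\theta\|_\infty-2\tilde\varepsilon\ge(1-\kappa)\|\bar\theta\|_\infty$ for all $\tilde\varepsilon\le\varepsilon$, so that the hypothesis $\|\bar\theta\|_\infty^2>(R+1)^2/[R(1-\kappa)^2]$ is exactly what is needed --- this is precisely where the stated threshold originates and where all the difficulty is concentrated, everything else being bookkeeping. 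A secondary subtlety worth care is verifying that the larger root is genuinely the global minimizer (rather than $b\to-\infty$) under the finiteness assumption, and that $\Delta_2(\tilde\varepsilon)$ stays positive throughout $[0,\varepsilon]$ so that both the derivative computation and the squaring step are valid.
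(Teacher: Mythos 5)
Your proof is correct and follows the same route as the paper's own argument: the KKT condition forces $\langle w,\bar\theta\rangle=\|\bar\theta\|_\infty$ for both optimizers, the stationarity in $b$ is governed by quadratics with discriminants $\Delta_1=R\|\bar\theta\|_\infty^2-(R-1)^2$ and $\Delta_2=R(\|\bar\theta\|_\infty-2\varepsilon)^2-(R-1)^2$, and the whole comparison collapses to the scalar $d(\varepsilon)=(R+1)\varepsilon+\sqrt{\Delta_2}$ versus $d(0)=\sqrt{\Delta_1}$, settled by the sign of $d'$ on $[0,\varepsilon]$.

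One point deserves emphasis: your signs are the correct ones, and the paper's appendix actually has them backwards. Since the majority-class loss is increasing in $d$ and the minority-class loss is decreasing in $d$, the theorem's conclusion (majority error up, minority error down, disparity down) is equivalent to $d(\varepsilon)>d(0)$, exactly as you state; and your reduction $d'(\tilde\varepsilon)>0 \iff R(\|\bar\theta\|_\infty-2\tilde\varepsilon)^2>(R+1)^2$, discharged via $\|\bar\theta\|_\infty-2\tilde\varepsilon\ge(1-\kappa)\|\bar\theta\|_\infty$ and the stated threshold, shows the hypothesis yields $d'>0$ throughout $[0,\varepsilon]$. The paper instead writes that ``it suffices to show $d(\varepsilon)<d(0)$'' and then asserts $d'(s)<0$ under the same assumption --- two mutually consistent sign slips which, taken literally, would prove the opposite of the theorem (indeed $d'<0$ holds precisely when $R(\|\bar\theta\|_\infty-2s)^2<(R+1)^2$, which the hypothesis rules out). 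Your two caveats are also handled correctly: the squaring step is legitimate because both sides are positive, with $\Delta_2(\tilde\varepsilon)>0$ on all of $[0,\varepsilon]$ forced by the hypothesis since $R(1-\kappa)^2\|\bar\theta\|_\infty^2>(R+1)^2>(R-1)^2$; and the possibility $b\to-\infty$ is excluded by the theorem's finiteness assumption on the optimal intercepts, just as in the paper. So your proposal is not merely a match for the published argument --- it is the corrected version of it.
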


\begin{remark}
According to Theorem \ref{Theorem same} (setting $\alpha = q = 1$ in the first corner case), both the optimal standard and robust classifier achieve the same loss on both classes when $R=1$. Therefore, the changes of accuracy disparity as shown in Theorem \ref{Theorem zero-one} and Theorem \ref{theorem.reduce} are mainly due to the class imbalance part. Contrary to the Gaussian mixture distribution where this factor \textbf{consistently} enlarges the accuracy disparity, here we obtain different observations where it stays the same or even decreases, suggesting that the accuracy disparity not only concerns the class imbalance ratio, but is also heavily influenced by the tail property of the corresponding distribution.
\end{remark}

\renewcommand{\thesubfigure}{\alph{subfigure}}

\begin{figure*}

\newlength{\utilheight}
\settoheight{\utilheight}{\includegraphics[width=.25\linewidth]{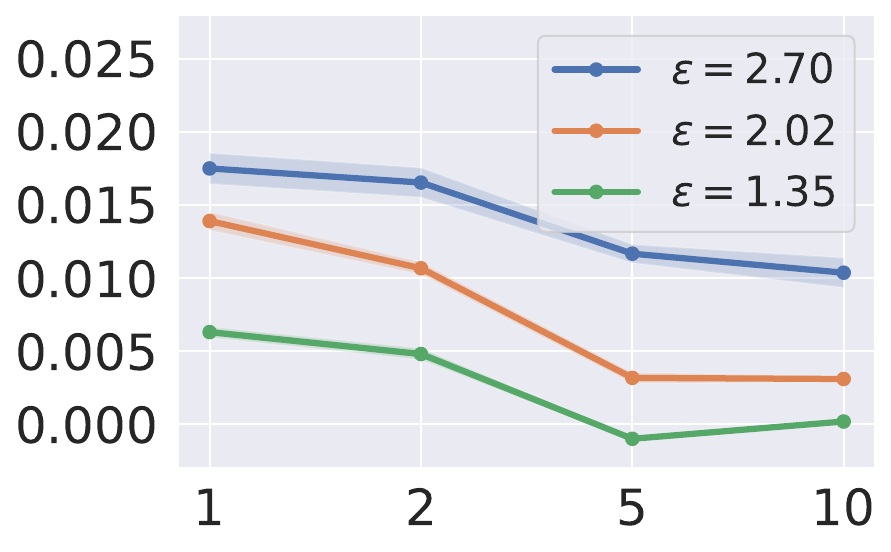}}%

\newlength{\legendheight}
\setlength{\legendheight}{0.4\utilheight}%

\newcommand{\rowname}[1]%
{\rotatebox{90}{\makebox[\utilheight][c]{\tiny #1}}}

\centering

{
\renewcommand{\tabcolsep}{10pt}

\begin{subtable}[]{\linewidth}
\centering
\resizebox{\linewidth}{!}{%
\begin{tabular}{@{}p{3.5mm}@{}c@{}c@{}c@{}c@{}c@{}c@{}c@{}c@{}}
        & \makecell{\small{\textbf{Synthetic, $p=\infty$}}}
        & \makecell{\small{\textbf{Synthetic, $p=2$}}}
        & \makecell{\small{\textbf{MNIST, $p=\infty$}}}
        & \makecell{\small{\textbf{MNIST, $p=2$}}}
        & \makecell{\small{\textbf{CIFAR, $p=\infty$}}}
        & \makecell{\small{\textbf{CIFAR, $p=2$}}}
        \\
\rowname{\makecell{\scriptsize $AD^{R}_{\trob,p,\eps} - AD^{R}_{\tstd}$}}&
\includegraphics[height=\utilheight]{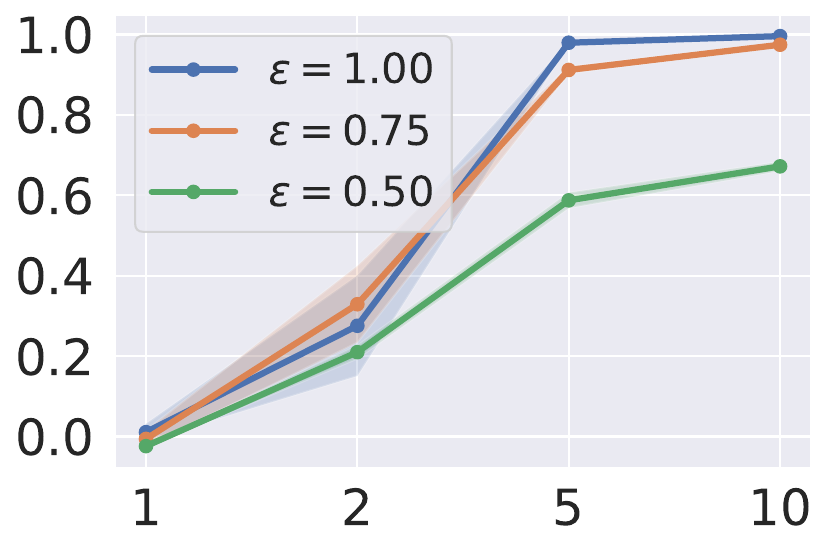}&
\includegraphics[height=\utilheight]{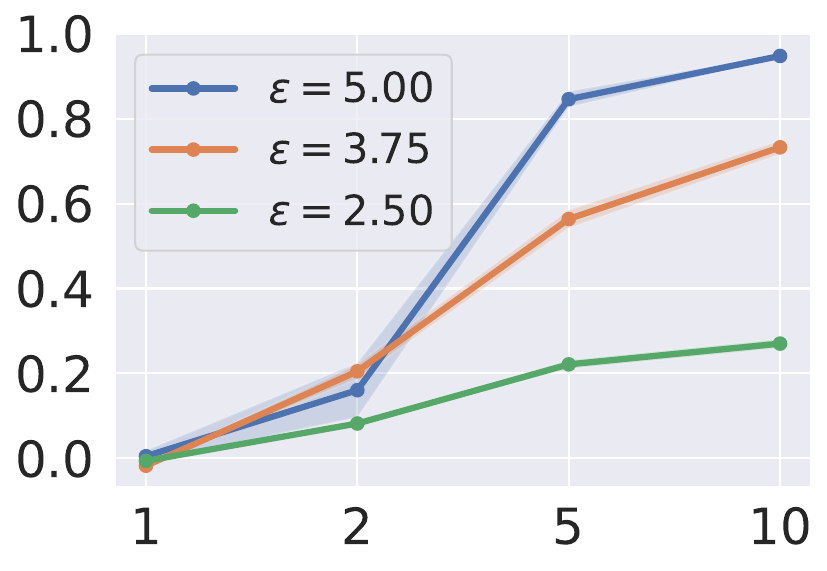}&
\includegraphics[height=\utilheight]{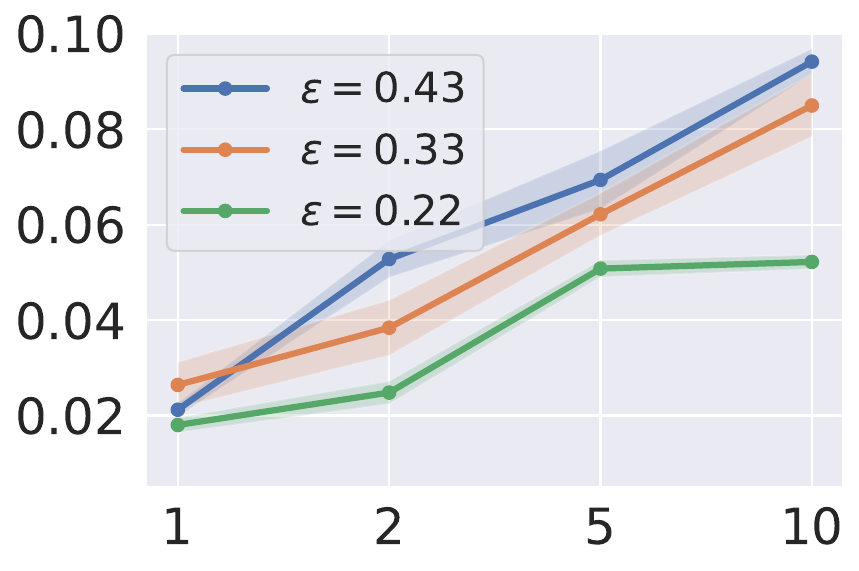}&
\includegraphics[height=\utilheight]{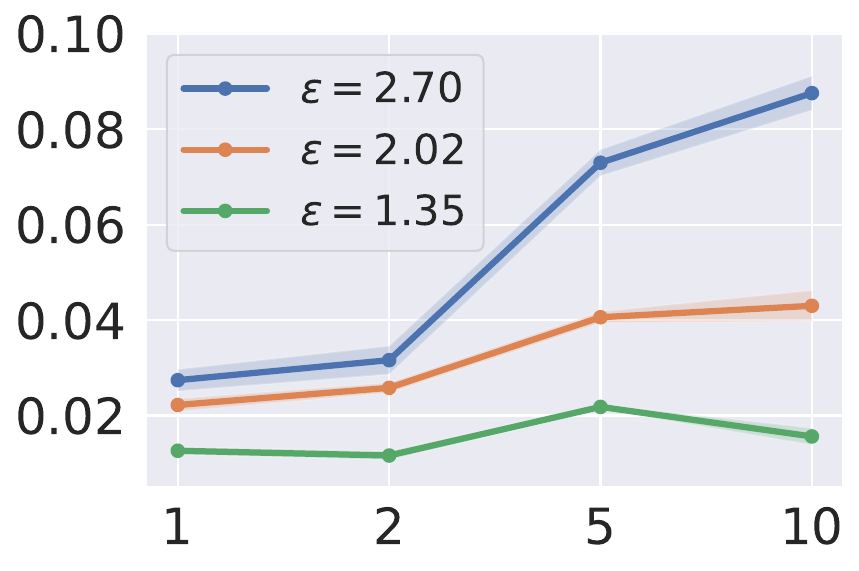}&
\includegraphics[height=\utilheight]{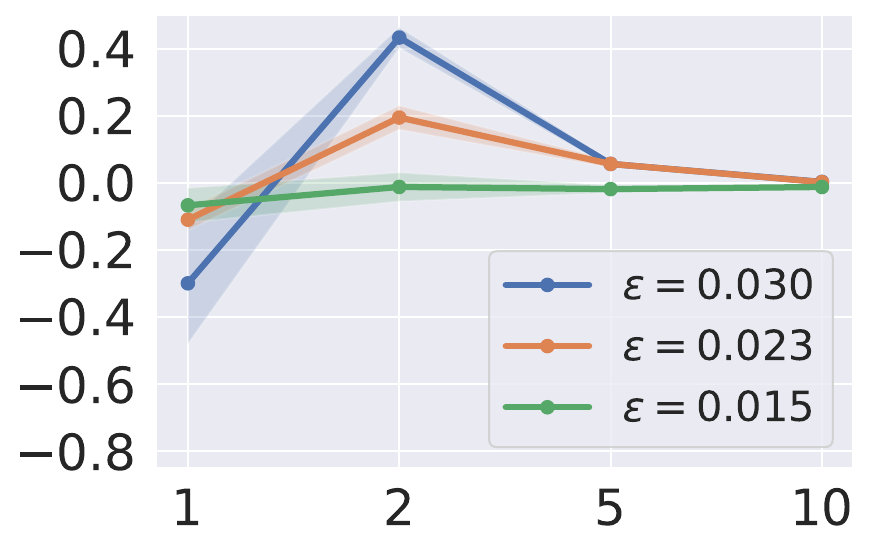}&
\includegraphics[height=\utilheight]{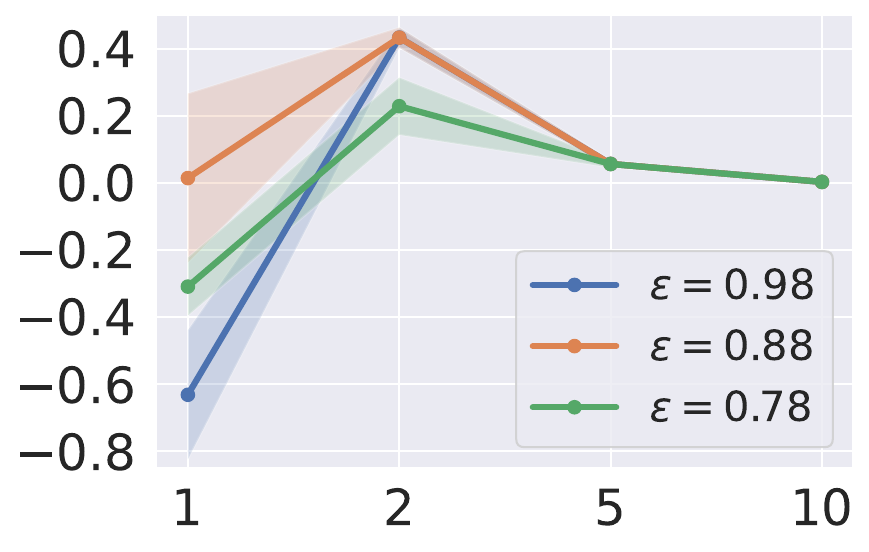}&
\\
\rowname{\makecell{\scriptsize $acc^{R,\cdot}_{\tstd} - acc^{R,\cdot}_{\trob,p,\eps}$}}&
\includegraphics[height=\utilheight]{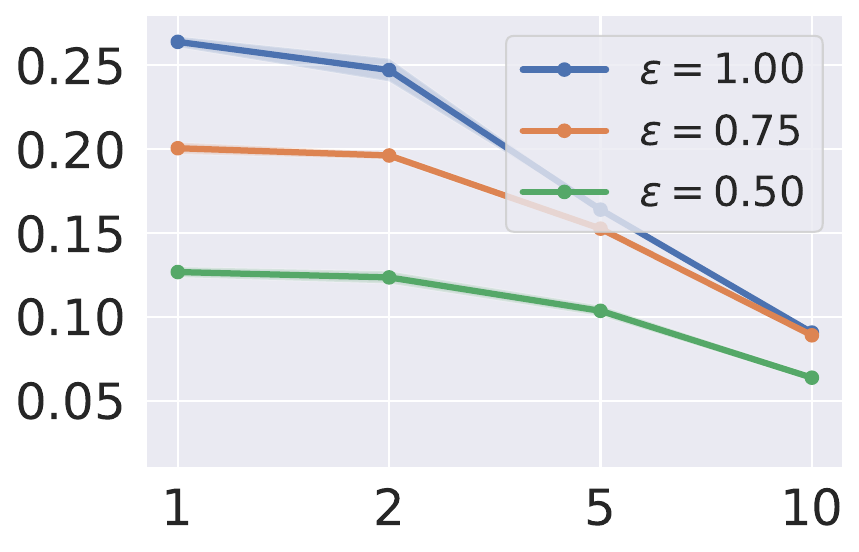}&
\includegraphics[height=\utilheight]{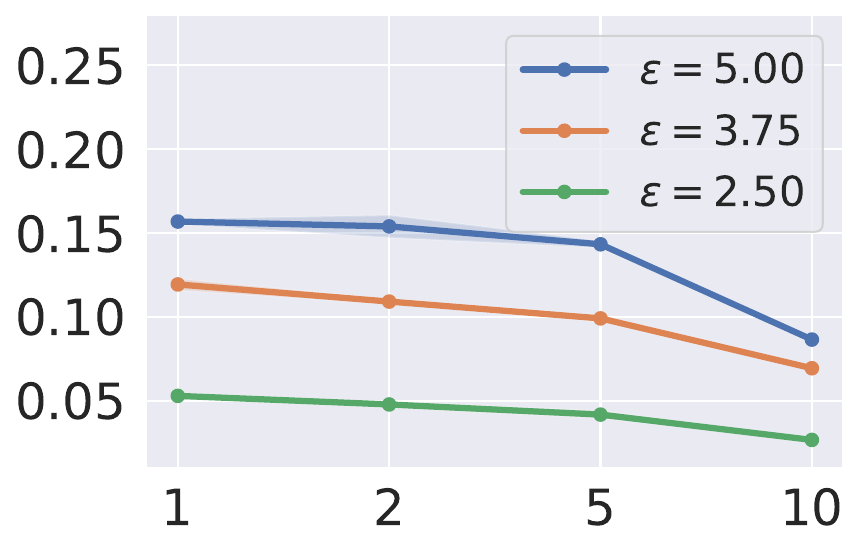}&
\includegraphics[height=\utilheight]{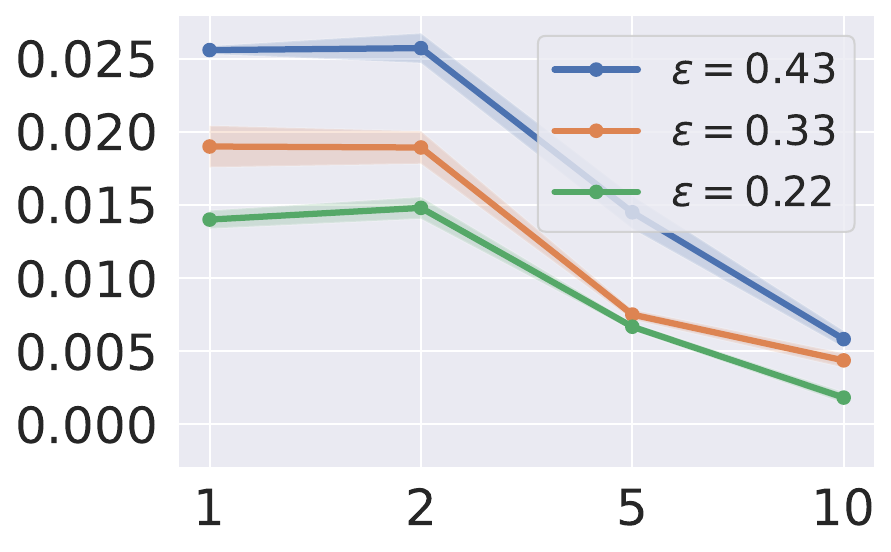}&
\includegraphics[height=\utilheight]{figs/mnist_l2_acc.pdf}&
\includegraphics[height=\utilheight]{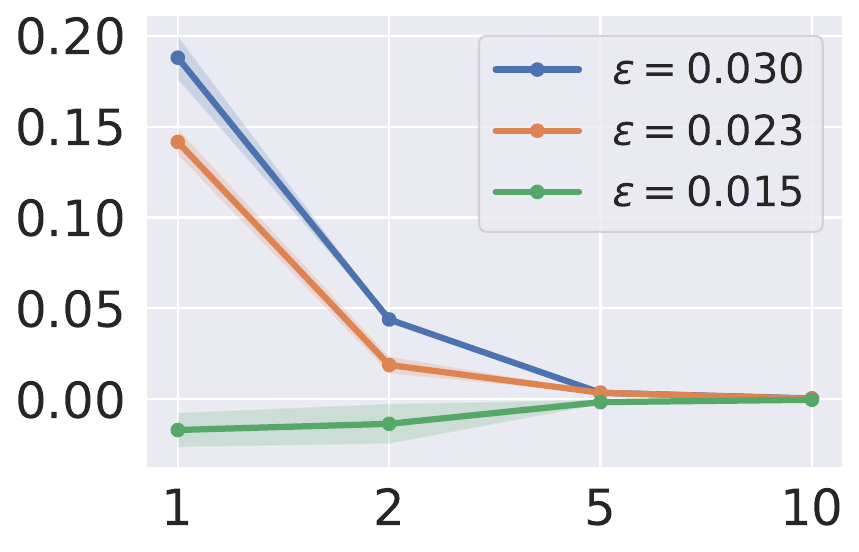}&
\includegraphics[height=\utilheight]{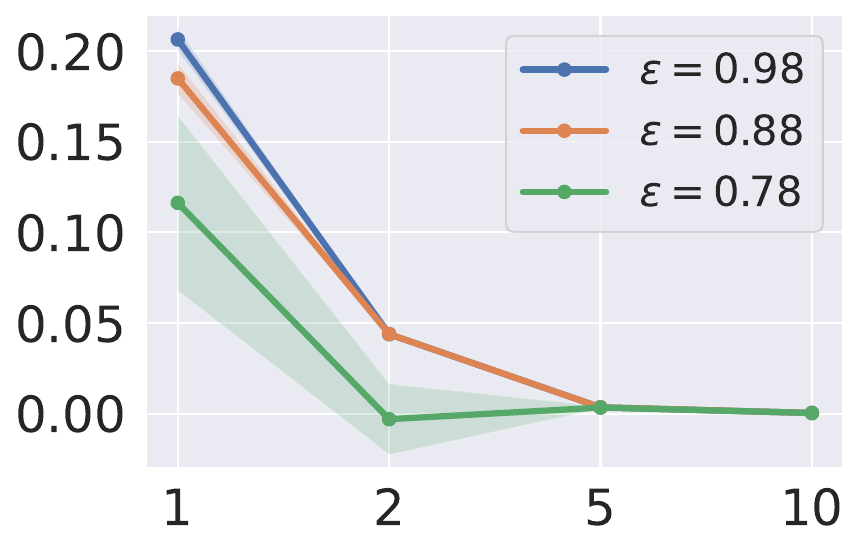}&
\\[-2mm]
        & \makecell{\scriptsize{$R$}}
        & \makecell{\scriptsize{$R$}}
        & \makecell{\scriptsize{$R$}}
        & \makecell{\scriptsize{$R$}}
        & \makecell{\scriptsize{$R$}}
        & \makecell{\scriptsize{$R$}}
\end{tabular}
}
\end{subtable}
}
\caption{\small The gap of accuracy disparity $AD^R_{\trob,p,\eps} - AD^R_{\tstd}$ (\nth{1} row, \rqtwo) and standard accuracy $acc^{R,\cdot}_{\tstd} - acc^{R,\cdot}_{\trob,p,\eps}$ (\nth{2} row, \rqthree) between robust and standard classifiers, w.r.t. the imbalance ratio $R$.
Different columns correspond to different datasets or $\ell_p$ norms of adversarial training. 
For each $\ell_p$ norm, we experiment with multiple perturbation scales $\eps$.
The shaded area in each subfigure represents the standard error of $5$ runs.
}%
\label{fig:main}
\end{figure*}

\section{Experiments}
\label{sec:exp}
We corroborate and strengthen our theoretical results regarding \textit{accuracy disparity} and \textit{standard accuracy} via experiments on one synthetic dataset and two real-world datasets. In what follows, we first introduce the experiment setup and then lay out the research questions we shall investigate.

\textbf{Adversarial Training.}\quad
Our theoretical findings are algorithm agnostic and only concern the definition of adversarial robustness. In the experiments, we choose a popular algorithm, adversarial training~\citep{kurakin2016adversarial, madry2017towards}, to perform the robust training.

\textbf{Metrics.}\quad
We use $acc^{R,+}_{\tstd},acc^{R,-}_{\tstd},acc^{R,\cdot}_{\text{std}}$ to denote the \textit{standard accuracy} of the standard classifier trained on the dataset with imbalance ratio $R$, measured on the minority class, and the majority class, and both classes, respectively.
Likewise, we use $acc^{R,+}_{\trob,p,\eps},acc^{R,-}_{\trob,p,\eps},acc^{R,\cdot}_{\trob,p,\eps}$ to denote the standard accuracy of the robust classifier trained with $\ell_p$ perturbations of scale $\eps$, calculated on the three types of populations.
We then use $AD^{R}_{\text{std}}, AD^{R}_{\trob,p,\eps}$ to denote the \textit{accuracy disparity} of the standard classifiers and robust classifiers, formally defined as 
$AD^{R}_{\text{std}}  = acc^{R,-}_{\text{std}}-acc^{R,+}_{\text{std}}$ and $ AD^{R}_{\trob,p,\eps} = acc^{R,-}_{\text{rob},p,\eps}-acc^{R,+}_{\text{rob},p,\eps}$.

\textbf{Research Questions.}\quad
We lay out the research questions (RQs) based on Section \ref{Section AT} and \ref{Section SD}.

\begin{quote}
\rqone. 
Will adversarial training exacerbate accuracy disparity compared with standard training, \ie, $AD^{R}_{\trob,p,\eps} > AD^{R}_{\tstd}$, and when?
\medskip

\rqtwo. Will a more severe class imbalance (\ie, a larger imbalance ratio $R$) lead to a more significant accuracy disparity gap (\ie, a larger $ (AD^{R}_{\trob,p,\eps} - AD^{R}_{\tstd})$), and when?
\medskip

\rqthree. 
Will adversarial training worsen the standard accuracy, \ie, $acc^{R,\cdot}_{\trob, p, \eps} < acc^{R,\cdot}_{\tstd}$, and when? 
\end{quote}

The basis of \rqone and \rqtwo from the theoretical side are~\Cref{cor:imbalance,Theorem zero-one};
the basis of \rqthree are~\Cref{corollary.degrade,Theorem degrade}. 
We next investigate these questions from the empirical side to gain insights into how well and how far the theoretical results can be supported and extended.

\textbf{Experimental Setup.}\quad
We evaluate the above three questions using three datasets: 
a synthetic dataset of a mixture of Gaussians, as well as two real-world datasets MNIST~\citep{lecun1998gradient} and CIFAR-10~\citep{krizhevsky2009learning}.
Results on additional datasets, including two synthetic datasets featuring stable distributions (Cauchy and Holtsmark), as well as two more real-world datasets, Fashion-MNIST~\citep{xiao2017fashion} and ImageNet~\citep{deng2009imagenet}, are provided in~\Cref{adxsubsec:exp-stable,adxsubsec:exp-fmnist}. 
For each dataset, we investigate both the balanced case ($R=1$) and the imbalanced cases $R \in \{2,5,10\}$.
For the synthetic dataset and MNIST, we use a linear classifier; for CIFAR, we use a neural network with two linear layers.
When performing adversarial training, we use the fast gradient method~(FGM)~\citep{goodfellow2014explaining} and projected gradient descent~(PGD)~\citep{madry2017towards} to craft the adversarial examples.
We experiment with both $p=2,\infty$, each with multiple perturbation scales $\eps$.
For each set of experiments, we report results averaged over 5 runs with different random seeds to account for variability.
More details are deferred to Appendix \ref{adxsubsec:exp-setup}.

\subsection{Analysis of the Increased Accuracy Disparity \textmd{(\rqone and \rqtwo)}}
\label{subsec:exp-analysis-1}

By comparing the accuracy disparity of standard classifier and a variety of adversarial classifiers, we offer the following answers to \rqone and \rqtwo.

\begin{quote}
\ansone: Yes, when $R>1$.

\medskip

\anstwo: The increase of the accuracy disparity gap with the imbalance ratio consistently happens for the synthetic dataset and MNIST, but not for CIFAR.

\end{quote}

We now present the concrete experimental results along with detailed discussions.
We plot the accuracy disparity gap in the \nth{1} row of Figure \ref{fig:main}; 
the raw numbers can be referred to in Appendix \ref{adxsubsec:exp-result}
We draw the following conclusions.

Regarding \rqone, the accuracy disparity gap is invariably larger than $0$ in the class imbalance setting (\ie, $R>1$) on all three datasets. 
This provides an affirmative answer for \rqone and matches our theoretical result in Theorem \ref{cor:imbalance}.
Actually, in the balanced case, the gap is also close to $0$ in most of the cases, apart from an intriguingly low number for CIFAR.
We figure that this is associated with the relative difficulty of learning the two classes---in the synthetic dataset and MNIST, the difficulty levels of learning the two classes are alike; for CIFAR, the class `cat' is much harder to learn than the class `dog', as already demonstrated in previous work~\citep{croce2021robustbench}.

Regarding \rqtwo, the accuracy disparity gap grows with the class imbalance ratio on both the synthetic dataset and MNIST. 
However, the same phenomenon does not occur for CIFAR, where the gap drops to almost $0$ for larger $R$ (\ie, $R=5,10$).
We note that the result on the synthetic dataset and MNIST aligns well with Theorem \ref{cor:imbalance}; 
this serves as a verification for the theoretical result on exact Gaussian mixtures and an implication on the potential of extending the result to real-world datasets that can be roughly modeled as Gaussian mixtures.
{In comparison, what we observe on CIFAR resembles the theoretical analysis on the heavy-tail distribution (Theorem \ref{Theorem zero-one}), where the standard classifier can only achieve accuracy close to $0$ on the minority class, accounting for the accuracy disparity gap of nearly $0$. We conjecture that the \textit{distributional difference} between MNIST and CIFAR contributes to the distinction in the experimental results. To strengthen our hypothesis, we first perform two additional experiments in Appendix \ref{adxsubsec:exp-analysis}, ruling out alternative explanation regarding the insufficiency of training on CIFAR (caused either by the small dataset size or the limited model capacity, in a relative sense compared to MNIST). 
We additionally provide an empirical comparison of the statistical properties between MNIST and CIFAR in Appendix \ref{adxsubsec:exp-analysis}, demonstrating that the empirical distribution of CIFAR is indeed more scattered than MNIST. }

\subsection{Analysis of the Decreased Standard Accuracy \textmd{(\rqthree)}}
\label{subsec:exp-analysis-2}

\looseness=-1
In this part, 
we shift our focus from the accuracy \textit{disparity} (defined as the gap of the per class standard accuracy) to the \textit{overall} standard accuracy (measured on both classes). In a nutshell:

\begin{quote}
\ansthree: Yes, adversarial training almost always hurts the standard accuracy in the scenarios we experiment with.
\end{quote}

\looseness=-1
We plot the gap of the standard accuracy between the standard classifier and the robust classifier in the \nth{2} row of Figure \ref{fig:main} (raw numbers in Appendix \ref{adxsubsec:exp-result}).
First, we see that in the class imbalance setting (\ie, $R=1$), the gap is almost always larger than $0$, consistent with the theoretical results in \Cref{corollary.degrade,Theorem degrade}.
Furthermore, in the class imbalance setting which is beyond the scope of our theoretical results, we observe an interesting decrease of the accuracy gap with the increase of the imbalance ratio $R$.
We offer the following explanation.
The impact of class imbalance gradually takes the dominance (in the sense of encouraging the prediction to favor the majority); in this case, whether performing standard or adversarial training will not have much influence on the outcome.

\section{Related Work}
\label{sec:morerw}
We study the interactions of fairness, adversarial robustness, and accuracy within machine learning models. As such, we will mainly review the works that focus on the relation between these angles.

\looseness=-1
\paragraph{Robustness and Fairness.}
What initially motivates this work is the observation of a trade-off between adversarial robustness and fairness~\citep{liu2021trustworthy}. Here, fairness refers to the \textit{class-wise} performance of the robust classifier (a broader definition would be the performance across subgroups defined by sensitive attributes~\citep{hardt2016equality, zafar2017fairness}), and is measured by either the robust accuracy or the standard accuracy. The former corresponds to a phenomenon known as ``robustness disparity''~\citep{nanda2021fairness}, %
and is verified on a wide range of datasets, model architectures, as well as attacks and defenses~\citep{nanda2021fairness, tian2021analysis}. The latter concerns the ``accuracy disparity''~\citep{chi2021understanding} of the robust classifier, and it is empirically shown that not only does such accuracy disparity exists~\citep{croce2021robustbench, benz2021triangular}, but is further exacerbated compared to the standard classifier~\citep{benz2021robustness}. As a complement to these empirical observations, we aim to provide an in-depth theoretic study towards {understanding} the impact of adversarial robustness on accuracy disparity. 

\textit{Detailed comparisons with two closest works.}\quad
\citet{xu2021robust,ma2022on} identify and analyze the significant disparity of standard accuracy and robust accuracy among different classes or subgroups of data for adversarially trained models.
Our work differs from theirs in several key aspects.
\textit{First}, their theoretical analysis is restricted to specific choices of parameters, resulting in an oversimplified problem and less convincing conclusions. 
In contrast, our approach accommodates arbitrary means, covariance matrices, and perturbation types.
Additionally, differences exist in the settings and targets of study.
\citet{xu2021robust} focus on a balanced class setting but with varying ``difficulty levels'' as measured by the magnitude of variance, whereas we address class imbalance.
\citet{ma2022on} measure fairness through the variance of robust risk, while we measure fairness by class-wise accuracy disparity (corresponding to the standard risk).
Importantly, we have identified \textit{critical flaws} in the proof of Theorem 5.7 in~\citet{ma2022on}.
On page 18 of their publication\footnote{\url{https://openreview.net/attachment?id=LqGA2JMLwBw\&name=supplementary\_material}}, 
a non-trivial gap exists between their Equations (51) and (52) even given their unnatural assumptions on $\epsilon_{train}$ and $\epsilon_{test}$, which the authors made no attempt to address. 
Furthermore, the last inequality in Equation (52) is not correct, as the term is plainly smaller than $1$ for small $\epsilon_{train}$ and $\epsilon_{test}$. 
These critical issues undermine the validity of their findings.

\paragraph{Robustness and (Standard) Accuracy.} Ever since the seminal work~\citep{tsipras2018robustness}, there has been a line of research studying the fundamental trade-off between
adversarial robustness and standard accuracy. This includes some empirical evaluations~\citep{raghunathan2019adversarial, su2018robustness}, theoretical results regarding the statistical/information limit or sample complexity for robust classification~\citep{bhagoji2019lower, chen2020more, dan2020sharp}, as well as algorithms that explicitly exploit such trade-off based on theoretical insights~\citep{zhang2019theoretically, raghunathan20a, zhang2020attacks, yang2020closer}. Despite the fruitful results that have been achieved in this field thus far, a rigorous understanding towards how enforcing adversarial robustness {decreases} the standard accuracy is still lacking. Our work does not target this problem directly, but as a by-product,
we show that for balanced dataset and stable distributions, robustness \textit{in general} comes at a cost of degraded performance for standard accuracy due to a change of ``direction'' (see Subsection \ref{subsec.main}), thus offering a new perspective towards interpreting this intriguing phenomenon.

\section{Conclusion, Limitation and Future Directions} \label{sec:conclusion}
In this work, we provide an in-depth and fine-grained study towards understanding the impact of adversarial robustness on accuracy disparity when class imbalance is present. To this end, we offer a complete characterization regarding the classification of a Gaussian mixture with linear models, and decompose the overall effect of enforcing adversarial robustness into two disjoint parts: an inherent one that will degrade the standard accuracy due to a change of ``direction'', and the other caused additionally by the class imbalance ratio that will increase the accuracy disparity due to a change of ``norm''. We proceed to analyze the general stable distribution. While the intrinsic effect of robustness can generalize and consistently decrease the standard accuracy even for the balanced class setting, we uncover that the imbalance ratio plays a fundamentally different role in the accuracy disparity due to the heavy tail of the stable distribution. Finally, we support and strengthen our theoretical results with experiments on both synthetic and real-world datasets. 

\noindent \textbf{Limitation.} \quad An obvious limitation of the paper is that the analyses are restricted to binary classification. Generalization to multi-class classification requires modifying the decision rule; for instance, using argmax of the logits. However, this will lead to a Voronoi diagram partition of the space for the $k>2$ classes, which is challenging to precisely and analytically characterize (\eg, it is no longer easy to compute the probability mass on each convex body within the diagram and in general this partition does not have analytical characterization). 

\looseness=-1
\noindent \textbf{Future directions.} \quad As real-world datasets contain both class imbalance and discrepancy of class-wise distributions, a more complete theory should consider the usage of different covariance matrices and analyze its interaction with the class imbalance ratio. Some additional future directions include 1) introduce the protected attribute $A$ under each label $Y$, which will make the results more appealing to the fairness community; 2) allow for different test and training distributions, and check whether robustness provably helps in the presence of distribution/subpopulation shifts.      
On the empirical side, our theoretical insights could lead to the design of future robust training algorithms that aim to achieve a certain notion of accuracy parity among classes. 
Furthermore, the distributions of real-world datasets in the feature space might be closer to a GMM, and hence one could enforce the robustness constraint on the feature distributions.
Last but not least, MNIST and CIFAR exhibit significantly different conclusions on \rqtwo; 
the nice correspondences between them and the findings in Gaussian and stable distributions could motivate a deeper understanding towards the distributional characteristics of real-world datasets.

\newpage

\bibliography{references}
\bibliographystyle{icml2023}

\newpage

\appendix

\onecolumn

\section{Omitted Proofs from Section \ref{Section AT}} \label{Appendix AT}

\subsection{Proof of the Main Results} \label{appsub.main}
\begin{proof}[Proof of Theorem \ref{thm:general}]
For the standard loss, we have
\begin{align*}
    \ell_{\text{std}}(w,b) = \frac{R}{R+1}\Phi\left(\frac{b+w^{\top}\theta^-}{\sqrt{w^{\top}\Sigma w}}\right) + \frac{1}{R+1} \Phi\left(\frac{-b-w^{\top}\theta^+}{\sqrt{w^{\top}\Sigma w}}\right).
\end{align*}
Since $\ell_{\text{std}}$ is scale-invariant, we can assume w.l.o.g. that $w^{\top}\Sigma w = 1$. We then obtain an equivalent constrained optimization problem 
\begin{align*}
    \min_{w,b}&\quad R\Phi(b+w^{\top}\theta^-) + \Phi(-b - w^{\top}\theta^+) \\
\text{s.t.} &\quad w^{\top}\Sigma w = 1.
\end{align*}
The Lagrangian can be written as 
\begin{align*}
    \mathcal{L}_{\text{std}}(w, b, \nu) = R\Phi(b+w^{\top}\theta^-) + \Phi(-b - w^{\top}\theta^+) + \nu(w^{\top}\Sigma w - 1),
\end{align*}
and the KKT conditions give us
\begin{align} \label{Eq w}
    \frac{R}{\sqrt{2\pi}}\exp\left(-\frac{(b+w^{\top}\theta^-)^2}{2}\right)\theta^- - \frac{1}{\sqrt{2\pi}} \exp\left(-\frac{(b+w^{\top}\theta^+)^2}{2}\right)\theta^+ + 2\nu \Sigma w = 0
\end{align}
and
\begin{align} \label{Eq b}
    \frac{R}{\sqrt{2\pi}}\exp\left(-\frac{(b+w^{\top}\theta^-)^2}{2}\right) - \frac{1}{\sqrt{2\pi}} \exp\left(-\frac{(b+w^{\top}\theta^+)^2}{2}\right) = 0.
\end{align}
Plugging Eq. (\ref{Eq b}) into Eq. (\ref{Eq w}), we can conclude that
\begin{align*}
    \Sigma w_{\text{std}} = (\theta^+ - \theta^-) \cdot C_{\text{std}}
\end{align*}
for some positive constant $C_{\text{std}}$, and $w_{\text{std}}$ additionally satisfies
\begin{align*}
    w_{\text{std}}^{\top}\Sigma w_{\text{std}} = 1
\end{align*}
Therefore, following the statement of Theorem \ref{thm:general}, suppose $\Sigma u = \theta^+ - \theta^-$, then we can pick 
\begin{align*}
    w_{\text{std}} = \frac{u}{r},
\end{align*}
where $u^{\top}\Sigma u = r^2$. After determining $w_{\text{std}}$, $b_{\text{std}}$ can be solved directly from Eq. (\ref{Eq b}), which gives
\begin{align*}
    b_{\text{std}}=-\frac{2\log R + (w_{\text{std}}^{\top}\theta^+)^2 - (w_{\text{std}}^{\top}\theta^-)^2}{2(w_{\text{std}}^{\top}\theta^+ - w_{\text{std}}^{\top}\theta^-)}.
\end{align*}
Note
\begin{align*}
    w_{\text{std}}^{\top}\theta^+ - w_{\text{std}}^{\top}\theta^- = \bigg< \frac{u}{r}, \Sigma u \bigg> = r.
\end{align*}
Therefore,
\begin{align*}
    \ell_{\text{std}}^+(w_{\text{std}}, b_{\text{std}}) &= \Phi\left(-b_{\text{std}}-w_{\text{std}}^{\top}\theta^+\right)\\
    &= \Phi\left(\frac{-r^2 + 2\log R}{2r}\right) \\
    &= \Phi\left(\frac{-\langle u, \theta^+ -\theta^- \rangle + 2\log R}{2r}\right), 
\end{align*}
whereas
\begin{align*}
    \ell_{\text{std}}^-(w_{\text{std}}, b_{\text{std}}) &= \Phi\left(b_{\text{std}}+w_{\text{std}}^{\top}\theta^-\right) \\
    &= \Phi\left(\frac{-r^2 - 2\log R}{2r}\right) \\
    &= \Phi\left(\frac{-\langle u, \theta^+ -\theta^- \rangle - 2\log R}{2r}\right).
\end{align*}
Similarly, for the robust $\ell_p$ loss  
\begin{align*}
    \ell_{\text{rob}}(w,b) = \frac{R}{R+1}\Phi\left(\frac{b+w^{\top}\theta^- +  \varepsilon \|w\|_q}{\sqrt{w^{\top}\Sigma w}}\right) + \frac{1}{R+1} \Phi\left(\frac{-b-w^{\top}\theta^+ +  \varepsilon \|w\|_q}{\sqrt{w^{\top}\Sigma w}}\right),
\end{align*}
we can assume w.l.o.g. that $w^{\top}\Sigma w = 1$ and write down the Lagrangian 
\begin{align*}
    \mathcal{L}_{\text{rob}}(w,b,\lambda) = R\Phi(b+w^{\top}\theta^-+  \varepsilon \|w\|_q) + \Phi(-b - w^{\top}\theta^++  \varepsilon \|w\|_q) + \mu(w^{\top}\Sigma w - 1),
\end{align*}
and the KKT conditions give us 
\begin{align*}
    \Sigma w_{\text{rob}, p} = (\theta^+-\theta^- - 2\varepsilon\partial{\|w_{\text{rob}, p} \|_q}) \cdot C_{\text{rob},p}.
\end{align*}
for some positive constant $C_{\text{rob},p}$. A crucial observation here is that the subdifferential set
$\partial{\|w_{\text{rob}, p} \|_q}$ is invariant when scaled by a positive constant (guaranteed by Danskin's theorem), so if we follow the statement of Theorem \ref{thm:general} and suppose $\Sigma v = \theta^+ - \theta^- - 2\varepsilon\partial{\|v\|_q}$, then to satisfy 
the constraint $w_{\text{rob}, p}^{\top}\Sigma w_{\text{rob}, p} = 1$, we can pick
\begin{align*}
    w_{\text{rob}, p} = \frac{v}{s},
\end{align*}
where $v^{\top}\Sigma v = s^2$. Similarly, $b_{\text{rob}, p}$ can be derived from the KKT conditions as
\begin{align*}
    b_{\text{rob}, p} = -\frac{2\log R + (w_{\text{rob}, p}^{\top}\theta^+)^2 - (w_{\text{rob}, p}^{\top}\theta^-)^2 - 2\epsilon\|w_{\text{rob}, p}\|_q(w_{\text{rob}, p}^{\top}\theta^+ + w_{\text{rob}, p}^{\top}\theta^-)}{2(w_{\text{rob}, p}^{\top}\theta^+ - w_{\text{rob}, p}^{\top}\theta^- - 2\epsilon\|w_{\text{rob}, p}\|_q)}.
\end{align*}
Note
\begin{align*}
    \langle w_{\text{rob}, p}, \theta^+-\theta^-\rangle &= \bigg<\frac{v}{s}, \Sigma v + 2\epsilon\partial{\|v\|_q}\bigg> \\
&=s +2\epsilon\frac{\langle v,\partial{\|v\|_q} \rangle}{s} \\
&=s + 2\epsilon \frac{\|v\|_q}{s} \\
&= s+2\epsilon\|w_{\text{rob}, p}\|_q,
\end{align*}
where we use Danskin's theorem again in the second-to-last inequality. Therefore,
\begin{align*}
    \ell_{\text{std}}^+(w_{\text{rob}, p}, b_{\text{rob}, p}) = \Phi\left(-b_{\text{rob}, p}-w_{\text{rob}, p}^{\top}\theta^+\right) =  \Phi\left(\frac{-\langle v, \theta^+ -\theta^- \rangle + 2\log R}{2s}\right), 
\end{align*}
whereas
\begin{align*}
    \ell_{\text{std}}^-(w_{\text{rob}, p}, b_{\text{rob}, p}) = \Phi\left(b_{\text{rob}, p}+w_{\text{rob}, p}^{\top}\theta^-\right) =  \Phi\left(\frac{-\langle v, \theta^+ -\theta^- \rangle - 2\log R}{2s}\right).
\end{align*}
This finishes the proof as desired.
\end{proof}

\begin{proof}[Proof of Proposition \ref{Claim.direction}]
Plugging in the equation $\Sigma u = \theta^+-\theta^-$ as well as the definitions of $r$ and $s$, it suffices to show
\begin{align*}
    \sqrt{u^{\top}\Sigma u}\sqrt{v^{\top}\Sigma v} \ge v^{\top}\Sigma u.
\end{align*}
Since $\Sigma$ is positive semi-definite, $\Sigma^{\frac{1}{2}}$ is well-defined. Now denote $u' = \Sigma^{\frac{1}{2}} u$ and $v' = \Sigma^{\frac{1}{2}} v$, then the above inequality is equivalent to 
\begin{align*}
    \|u'\|_2 \|v'\|_2 \ge v'^{\top} u',
\end{align*}
which holds due to the Cauchy-Schwarz inequality. Furthermore, the inequality holds strictly as long as $u'$ and $v'$ are not parallel. Combining the fact that $u$ is parallel to $w_{\text{std}}$, and $v$ is parallel to $w_{\text{rob},p}$ finishes the proof as desired.

\end{proof}

\begin{proof}[Proof of Theorem \ref{corollary.degrade}]
It follows directly from Proposition \ref{Claim.direction} since $\Phi$ is monotonic.
\end{proof}

\begin{proof}[Proof of Proposition \ref{Claim norm}]
We have
\begin{align*}
    u^{\top}\Sigma u - v^{\top}\Sigma v &= (u-v)^{\top}\Sigma(u-v) + 2(u-v)^{\top}\Sigma v\\
&\ge 2v^{\top}\Sigma (u-v) \\
&= 2\langle v, 2\varepsilon\partial{\|v\|_q}\rangle\\
&=4\varepsilon\|v\|_q >0,
\end{align*}
where we take the difference between 
\begin{align*}
    \Sigma u = \theta^+-\theta^-  \quad \text{and}\quad \Sigma v = \theta^+-\theta^-- 2\varepsilon\partial{\|v\|_q}
\end{align*}
in the second-to-last equality, and use Danskin's theorem \citep{bertsekas1997nonlinear} in the last equality.
\end{proof}

\begin{proof}[Proof of Theorem \ref{cor:imbalance}]
Denote 
\begin{align*}
    p = \frac{-\langle u, \theta^+ -\theta^- \rangle}{2r}, \quad q = \frac{-\langle v, \theta^+ -\theta^- \rangle}{2s}.
\end{align*}
Note $\langle u, \theta^+ -\theta^- \rangle = u^{\top}\Sigma u \ge 0$, $\langle v, \theta^+ -\theta^- \rangle = v^{\top}\Sigma v + 2\varepsilon \|v\|_q > 0$, so Proposition \ref{Claim.direction} implies $p \le q <0$. Further denote
\begin{align*}
     m = \frac{\log R}{r} > 0, \quad k = \frac{r}{s} > 1,
\end{align*}
then $\frac{\log R}{s} = km$. We first show the {``increased accuracy disparity''} part in the theorem, which is equivalent to
\begin{align*}
    \Phi(q+km) - \Phi(q-km) > \Phi(p+m) - \Phi(p-m).
\end{align*}
In fact, we have
\begin{align*}
    \Phi(q+km) - \Phi(q-km) > \Phi(q+m) - \Phi(q-m),
\end{align*}
so it suffices to show
\begin{align*}
    \Phi(q+m) - \Phi(q-m) \ge \Phi(p+m) - \Phi(p-m).
\end{align*}
Define $F(x) = \Phi(x+m) - \Phi(x-m)$, and we will show $F(x)$ is increasing on $(-\infty, 0]$. In fact, we have
\begin{align*}
    F'(x) &= \frac{1}{\sqrt{2\pi}}\left(\exp\left(-\frac{(x+m)^2}{2}\right) - \exp\left(-\frac{(x-m)^2}{2}\right)\right) \\
    &= \frac{1}{\sqrt{2\pi}}\exp\left(-\frac{(x-m)^2}{2}\right)\left(e^{-2xm}-1\right) \ge 0
\end{align*}
since $m \ge 0$ and $x <0$. 

To show that the gap between the two accuracy disparities is {monotonic}, it suffices to prove
\begin{align*}
    g(R) = G(m) := \Phi(q+km) - \Phi(q-km) - \Phi(p+m) + \Phi(p-m)
\end{align*}
is increasing w.r.t. $m$ when $q+ km <0$. In fact, we have
\begin{align*}
    \sqrt{2\pi}G'(m) &= k\left(\exp\left(-\frac{(q+km)^2}{2}\right) + \exp\left(-\frac{(q-km)^2}{2}\right)\right)\\ 
    &- \left(\exp\left(-\frac{(p+m)^2}{2}\right) + \exp\left(-\frac{(p-m)^2}{2}\right)\right).
\end{align*}
Define 
\begin{align*}
    L(k) := k\left(\exp\left(-\frac{(q+km)^2}{2}\right) + \exp\left(-\frac{(q-km)^2}{2}\right)\right),
\end{align*}
then it suffices to show that $L'(k) > 0$ when $q+km<0$. In fact, we have
\begin{align*}
    L'(k) &= \exp\left(-\frac{(q+km)^2}{2}\right) + \exp\left(-\frac{(q-km)^2}{2}\right) \\
    &-km(q+km)\exp\left(-\frac{(q+km)^2}{2}\right) - km(km-q)\exp\left(-\frac{(q-km)^2}{2}\right) \\
    &=\exp\left(-\frac{(q-km)^2}{2}\right)\left[e^{-2qkm}(1-km(q+km)) + (1-km(km-q))\right].
\end{align*}
Let $km=a$ and $-q=b$, it suffices to show 
\begin{align*}
    e^{2ab} > \frac{ab+(a^2-1)}{ab-(a^2-1)}
\end{align*}
when $b>a >0$. In fact, we have
\begin{align*}
    \frac{ab+(a^2-1)}{ab-(a^2-1)} \le \frac{ab+(ab-1)}{ab-(ab-1)} = 2ab-1 < 2ab+1 \le e^{2ab}.
\end{align*}

\end{proof}

\subsection{Derivation of the Optimal Classifiers for the Toy Example} \label{appsub.toy}

Given $X \sim \mathcal{N}(\theta, \Sigma)$ and some $w \in \mathbb{R}^d$, we have 
$
    w^{\top}X \sim \mathcal{N}(w^{\top}\theta, w^{\top}\Sigma w)
$
due to the fact that Gaussian distribution is closed under linear transformation.

\paragraph{The optimal standard classifier.} Using the property above, we have
\begin{align*}
    \ell_{\text{std}}(w,b) 
=&\frac{R}{R+1}\Phi\left(\frac{b-\eta\sum_{i=1}^m w_i - \gamma \sum_{j=m+1}^{m+n}w_j}{\sqrt{\sum_{k=1}^{m+n} w_k^2}}\right)\\
+ &\frac{1}{R+1}\Phi\left(\frac{-b-\eta\sum_{i=1}^m w_i - \gamma \sum_{j=m+1}^{m+n}w_j}{\sqrt{\sum_{k=1}^{m+n} w_k^2}}\right),
\end{align*}
We can assume w.l.o.g. that $\sum_{k=1}^{m+n} w_k^2 = 1$ as $\ell_{\text{std}}$ is scale-invariant. Hence, by Cauchy-Schwarz,
\begin{align*}
    \eta\sum_{i=1}^m w_i + \gamma \sum_{j=m+1}^{m+n}w_j \le \sqrt{\sum_{k=1}^{m+n} w_k^2}\sqrt{m\eta^2 + n\gamma^2} =\sqrt{m\eta^2 + n\gamma^2}.
\end{align*}
Further calculating the derivative w.r.t. $b$, we have $$
    b_{\text{std}} = -\frac{\log R}{2\sqrt{m\eta^2 + n\gamma^2}}.$$ 
    
\paragraph{The optimal robust classifier.} Similarly, for the robust loss, we have
\begin{center}
\resizebox{.98\linewidth}{!}{
\begin{minipage}{\linewidth}
\begin{align*}
    \ell_{\text{rob},\infty}(w,b) &=\frac{R}{R+1}\Phi\left(\frac{b-(\eta\sum_{i=1}^m w_i - \varepsilon\sum_{i=1}^m |w_i|) - (\gamma\sum_{j=m+1}^{m+n} w_j - \varepsilon\sum_{j=m+1}^{m+n} |w_j|)}{\sqrt{\sum_{k=1}^{m+n} w_k^2}}\right)\\ &+ \frac{1}{R+1}\Phi\left(\frac{-b-(\eta\sum_{i=1}^m w_i - \varepsilon\sum_{i=1}^m |w_i|) - (\gamma\sum_{j=m+1}^{m+n} w_j - \varepsilon\sum_{j=m+1}^{m+n} |w_j|)}{\sqrt{\sum_{k=1}^{m+n} w_k^2}}\right).
\end{align*}
\end{minipage}
}
\end{center}
We assume w.l.o.g. that $\sum_{k=1}^{m+n} w_k^2 = 1$. Using $\gamma < \varepsilon < \eta$ and Cauchy-Schwarz, we have
\begin{align*}
    \left(\eta\sum_{i=1}^m w_i - \varepsilon\sum_{i=1}^m |w_i|\right) + \left(\gamma\sum_{j=m+1}^{m+n} w_j - \varepsilon\sum_{j=m+1}^{m+n} |w_j|\right) \le (\eta-\varepsilon)\sum_{i=1}^m |w_i| \le (\eta-\varepsilon)\sqrt{m},
\end{align*}
and the inequalities are achieved when $w_1 = \cdots = w_m = \frac{1}{\sqrt{m}}$, and $w_{m+1} = \cdots = w_{m+n}=0$. Further calculating the derivative w.r.t. $b$, we have
$$
    b_{\text{rob},\infty} = -\frac{\log R}{2(\eta-\varepsilon)\sqrt{m}}.
$$

\section{Omitted Proofs from Section \ref{Section SD}} \label{Appendix SD}
Throughout this section, we will use $\Phi_\alpha$ to denote the cumulative distribution function of the standard $S\alpha S$ distribution $f(x;\alpha,1,0)$.
\subsection{Detailed Analysis of Subsection \ref{subsect.positive} --- Multivariate $S\alpha S$ Distribution with Independent Components} \label{appsub.ic}
Using the ``closed under linear transformation'' property, we have
    \begin{align*}
\ell_{\text{std}}(w,b) %
=\frac{1}{2}\Phi_\alpha\left(\frac{b+w^{\top}\theta^-}{\|w\|_\alpha}\right) + \frac{1}{2}\Phi_\alpha\left(\frac{-b-w^{\top}\theta^+}{\|w\|_\alpha}\right)
\end{align*}
for the standard loss and
\begin{align*}
    \ell_{\text{rob},p}(w,b) %
=\frac{1}{2}\Phi_\alpha\left(\frac{b+w^{\top}\theta^- + \varepsilon\|w\|_q}{\|w\|_\alpha}\right) + \frac{1}{2}\Phi_\alpha\left(\frac{-b-w^{\top}\theta^+ + \varepsilon\|w\|_q}{\|w\|_\alpha}\right)
\end{align*}
for the robust $\ell_p$ loss. Following the same procedure as in Section \ref{Section AT}, we assume w.l.o.g. that $\|w_{\text{std}}\|_\alpha = \|w_{\text{rob},\infty}\|_\alpha=1$. We use $v_1 \propto v_2$ to describe two vectors $v_1$ and $v_2$ differing by some \textit{positive} constant coordinate-wisely.

\paragraph{Analysis of $w$.} Introducing the Lagrangians and the KKT conditions give us
\begin{align}  \label{eq.KKT.std}
    \partial \|w_{\text{std}}\|_{\alpha} \propto \theta^+ - \theta^-.  
\end{align}
Similarly, for the robust $\ell_p$ loss, we have
\begin{align}  \label{eq.KKT.rob}
\partial \|w_{\text{rob},p}\|_{\alpha} \propto \theta^+ - \theta^- - 2\varepsilon\partial\|w_{\text{rob}}\|_q.
\end{align}

\paragraph{Analysis of $b$.} Fixing some optimal $w$ with $\|w\|_\alpha=1$, we can take partial derivatives w.r.t. $b$, and obtain
\begin{align*}
    \frac{\partial \ell_{\text{std}}(w,b)}{\partial b} = \frac{1}{2}\varphi_\alpha(b+w^{\top}\theta^-) - \frac{1}{2}\varphi_\alpha(-b-w^{\top}\theta^-),
\end{align*}
where we use $\varphi_\alpha$ to denote the probability density function of $f(x;\alpha, 1,0)$. Since $\varphi_\alpha$ is symmetric and monotonically decreasing on $(0,\infty)$ (see Theorem 1 in~\citep{gawronski1984bell}), we have either 
\begin{align*}
    b_{\text{std}} + w_{\text{std}}^{\top}\theta^- = -b_{\text{std}} - w_{\text{std}}^{\top}\theta^+ \quad \text{or} \quad b_{\text{std}} + w_{\text{std}}^{\top}\theta^- = b_{\text{std}} + w_{\text{std}}^{\top}\theta^+,
\end{align*}
and the latter is impossible due to Eq.~\eqref{eq.KKT.std} and Danskin's theorem. As a consequence, we have
$
    b_{\text{std}} = - \frac{w_{\text{std}}^{\top}(\theta^+ + \theta^-)}{2},
$
and similarly 
$
    b_{\text{rob},p} = - \frac{w_{\text{rob},p}^{\top}(\theta^+ + \theta^-)}{2}.
$

\paragraph{Putting together.} Combining the analysis for $w$ and $b$, we have
\begin{align*}
    \ell_{\text{std}}^+(w_{\text{std}},b_{\text{std}}) = \ell_{\text{std}}^-(w_{\text{std}},b_{\text{std}}) =  \Phi_\alpha\left(-\frac{w_{\text{std}}^{\top}(\theta^+ - \theta^-)}{2}\right)
\end{align*}
and
\begin{align*}
    \ell_{\text{nat}}^+(w_{\text{rob},p},b_{\text{rob},p}) = \ell_{\text{nat}}^-(w_{\text{rob},p},b_{\text{rob},p}) = \Phi_\alpha\left(-\frac{w_{\text{rob},p}^{\top}(\theta^+ - \theta^-)}{2}\right).
\end{align*}
To understand whether enforcing adversarial robustness provably hurt standard accuracy, it suffices to check whether the inequality
\begin{align*}
    w_{\text{rob},p}^{\top}(\theta^+ - \theta^-) <  w_{\text{std}}^{\top}(\theta^+ - \theta^-)
\end{align*}
holds. Note under the constraint $\|w\|_\alpha=1$, $w_{\text{std}}$ and $w_{\text{rob}, p}$ solve the following optimization problems respectively: 
\begin{align*}
    w_{\text{std}} = \arg\max_w w^{\top}(\theta^+-\theta^-), \quad w_{\text{rob}, p} = \arg\max_w w^{\top}(\theta^+-\theta^-)-2\varepsilon \|w\|_q.
\end{align*}
We can then prove Theorem \ref{Theorem same} and \ref{Theorem degrade} based on such formulation.

\begin{proof}[Proof of Theorem \ref{Theorem same}] We will discuss the two corner cases separately.

\paragraph{Case 1.} When $q = \alpha$, $w_{\text{rob}, p}$ has the same optimization formulation as $w_{\text{std}}$ since $\|w\|_q = \|w\|_\alpha = 1$ is a constant, so they have the same optimal values.

\paragraph{Case 2.} Suppose $\bar{\theta}:=\theta^+-\theta^-$ is isotropic, \ie, $|\bar{\theta}| \parallel \mathrm{1}_d$ and $\alpha \ge 1$. 
By H\"older's inequality, we have $w_{\text{std}} \parallel \bar{\theta}$. For $w_{\text{rob},p}$, denote $C= |\bar{\theta}_1|$, then
\begin{align*}
    w^{\top}(\theta^+-\theta^-)-2\varepsilon \|w\|_q &\le (C-2\varepsilon C_1) \|w\|_1 \\
    &\le C_2(C-2\varepsilon C_1) \|w\|_\alpha \\
    &=C_2(C-2\varepsilon C_1)
\end{align*}
by H\"older's inequality, where $C_1 = d^{\frac{1}{q}-1}$, $C_2 = d^{\frac{1}{\alpha}-1}$ and
\begin{align*}
    C - 2\varepsilon C_1 \ge C- 2\varepsilon >0 
\end{align*}
by our assumption on the perturbation radius. The two inequalities hold simultaneously when $w_{\text{rob},p} \parallel \bar{\theta}$. 
As a consequence, both the optimal standard and robust classifier achieve the same value.
\end{proof}

\begin{proof}[Proof of Theorem \ref{Theorem degrade}]
Denote the dual index of $\alpha$ as $\alpha'$, then by H\"older's inequality, 
\begin{align*}
    w^{\top}(\theta^{+} - \theta^-) \le \|w\|_\alpha\|\theta^+-\theta^-\|_{\alpha'} = \|\theta^+-\theta^-\|_{\alpha'},
\end{align*}
and there is exactly one $w$ with $\|w\|_{\alpha}=1$ that makes the equality hold. As a consequence, to show
\begin{align*}
    w_{\text{rob}, p}^{\top}(\theta^{+} - \theta^-) <  w_{\text{std}}^{\top}(\theta^{+} - \theta^-)
\end{align*}
it suffices to show $w_{\text{std}} \nparallel w_{\text{rob}, p}$, under the conditions listed in the theorem statement. In fact, if $w_{\text{std}} \parallel w_{\text{rob}, p}$ (we use $w$ to represent the corresponding direction), then by the KKT conditions, we have
\begin{align*}
    \partial\|w\|_q \parallel \partial\|w\|_\alpha,
\end{align*}
implying that $|w_1| = \cdots = |w_d|$ (since $\alpha> 1$, $1<q < \infty$ and $q\neq \alpha$). By the KKT condition of the standard classifier, we have
\begin{align*}
    \partial\|w\|_\alpha \propto \bar{\theta}
\end{align*}
which further implies that $|\bar{\theta}| \parallel \mathrm{1}_d$, and this is already precluded by the assumption in the theorem statement. As a consequence, we must have $w_{\text{std}} \nparallel w_{\text{rob}, p}$, and 
    \begin{align*}
    w_{\text{rob}, p}^{\top}(\theta^{+} - \theta^-) <  w_{\text{std}}^{\top}(\theta^{+} - \theta^-).
\end{align*}

\end{proof}

\subsection{Detailed analysis of Subsection \ref{subsec.perplex}} \label{appsub.perplex}
We will now analyze the intercept $b$ and the slope $w$ respectively. 

\paragraph{Analysis of $b$.} Following the same procedure as in Section \ref{Section AT} and the previous subsection, we assume w.l.o.g. that $\|w_{\text{std}}\|_1 = \|w_{\text{rob},\infty}\|_1=1$, then the standard loss and the robust $\ell_\infty$ loss write as
\begin{align*}
   \ell_{\text{std}}(w,b) = \frac{R}{R+1}\Phi_1\left({b+w^{\top}\theta^-}\right) + \frac{1}{R+1}\Phi_1\left({-b-w^{\top}\theta^+}\right)
\end{align*}
and
\begin{align*}
    \ell_{\text{rob},\infty}(w,b) = \frac{R}{R+1}\Phi_1\left({b+w^{\top}\theta^- +\varepsilon}\right) + \frac{1}{R+1}\Phi_1\left({-b-w^{\top}\theta^++\varepsilon}\right).
\end{align*}
Fixing $w$ with $\|w\|_1=1$, we are interested in finding the optimal $b$ that minimizes $\ell_{\text{std}}$ and $\ell_{\text{rob},\infty}$. Taking partial derivatives w.r.t. $b$ and simplifying the expressions, we have
\begin{align*}
    \text{sgn}\left(\frac{\partial \ell_{\text{std}}(w,b)}{\partial b}\right) = \text{sgn}\left(q_1(b)\right), \quad \text{sgn}\left(\frac{\partial \ell_{\text{rob},\infty}(w,b)}{\partial b}\right) = \text{sgn}\left(q_2(b)\right),
\end{align*}
where $q_1(b)$ and $q_2(b)$ are two quadratic functions defined through
\begin{align*}
    q_1(b) := (R-1)b^2 + \left(2Rw^{\top}\theta^+-2Rw^{\top}\theta^-\right)b + \left(R(w^{\top}\theta^+)^2 - (w^{\top}\theta^-)^2 + R-1\right)
\end{align*}
and 
\begin{align*}
    q_2(b) &:= (R-1)b^2 + \left(2Rw^{\top}\theta^+-2Rw^{\top}\theta^- - (2R+2)\varepsilon\right)b\\
    & \ +\left(R(w^{\top}\theta^+)^2 - (w^{\top}\theta^-)^2 + R-1 + (R-1)\varepsilon^2  - 2Rw^{\top}\theta^+\varepsilon - 2w^{\top}\theta^-\varepsilon\right),
\end{align*}
whose discriminants are given by
$$
\Delta_1 := R\left(w^{\top}\theta^+-w^{\top}\theta^-\right)^2 - (R-1)^2
$$
and
$$
\Delta_2 = R\left(w^{\top}\theta^+-w^{\top}\theta^--2\varepsilon\right)^2 - (R-1)^2 
$$
respectively. The expressions immediately lead us to the following proposition.

\begin{proposition} \label{prop.discriminant}
When the class imbalance ratio satisfies
\begin{align*}
    R \ge 2+ 4\|\theta^+-\theta^-\|_\infty^2,
\end{align*}
we have $\Delta_1, \Delta_2 < 0$, implying that the quadratic functions $q_1(b)$ and $q_2(b)$ are always positive.
\end{proposition}
\begin{proof}
By H\"older's inequality, we have
\begin{align*}
    |w^{\top}\theta^+-w^{\top}\theta^- | \le \|w\|_1\|\theta^+-\theta^-\|_\infty = \|\theta^+-\theta^-\|_\infty.
\end{align*}
The result then follows from the assumption on $\varepsilon$, as well as the fact that
\begin{align*}
    \frac{(R-1)^2}{R} \ge R+2.
\end{align*}
\end{proof}
\vspace{-4mm}
Without further digging into the analysis of $w$, Proposition \ref{prop.discriminant} itself is sufficient to characterize the behavior of the optimal standard classifier as well as the robust $\ell_\infty$ classifier --- since the quadratic functions are always positive and have the same signs as the partial derivatives, the optimal value is attained at $b = -\infty$ for both losses. Theorem \ref{Theorem zero-one} is a direct consequence of such observation.

\paragraph{Analysis of $w$.} We now switch to analyzing $w$. Following similar procedures as in Section \ref{Section AT} and the previous subsection, we can introduce the Lagrangians, and the KKT conditions reveal that both $w_{\text{std}}$ and $w_{\text{rob},\infty}$ satisfy the relation
$$
\partial \|w\|_1 \propto \theta^+- \theta^-.
$$
This further implies 
\begin{align} \label{Eq.cauchy-dual}
   \langle w_{\text{std}},\theta^+-\theta^- \rangle = \langle w_{\text{rob},\infty},\theta^+-\theta^- \rangle = \|\theta^+-\theta^-\|_\infty. 
\end{align}
Now
\begin{align*}
    \Delta_1 = R\|\theta^+-\theta^-\|_\infty^2 - (R-1)^2 \quad \Delta_2 = R\left(\|\theta^+-\theta^-\|_\infty-2\varepsilon\right)^2 - (R-1)^2,
\end{align*}
and we assume they are both non-negative. Plugging the optimal $b_{\text{std}}$ and $b_{\text{rob},\infty}$ (the larger root of the quadratic functions) back to the standard loss, we have 
\begin{align} \label{Eq.cauchy-std}
    \ell^+_{\text{std}}(w_{\text{std}}, b_{\text{std}}) = \Phi_1\left(\frac{-\|\theta^+-\theta^-\|_\infty - \sqrt{\Delta_1}}{R-1}\right), 
\quad \ell^-_{\text{std}}(w_{\text{std}}, b_{\text{std}}) = \Phi_1\left(\frac{-R\|\theta^+-\theta^-\|_\infty + \sqrt{\Delta_1}}{R-1}\right),
\end{align}
whereas 
\begin{align}  \label{Eq.cauchy-rob}
   \ell^+_{\text{std}}(w_{\text{rob},\infty}, b_{\text{rob},\infty}) &= \Phi_1\left(\frac{-\|\theta^+-\theta^-\|_\infty - (R+1)\varepsilon-\sqrt{\Delta_2}}{R-1}\right) \nonumber  \\
 \ell^-_{\text{std}}(w_{\text{rob},\infty}, b_{\text{rob},\infty}) &= \Phi_1\left(\frac{-R\|\theta^+-\theta^-\|_\infty + (R+1)\varepsilon + \sqrt{\Delta_2}}{R-1}\right). 
\end{align}
Define 
$$
d(\varepsilon) := (R+1)\varepsilon +\sqrt{\Delta_2},
$$
then it is straightforward to see that $d(0) = \sqrt{\Delta_1}$. By comparing Eq.~\eqref{Eq.cauchy-std} and~\eqref{Eq.cauchy-rob}, we know that it is essentially the relation between $d(\varepsilon)$ and $d(0)$ that determines the change of accuracy disparity; specifically, to prove Theorem \ref{theorem.reduce} it suffices to show $d(\varepsilon) < d(0)$.

\begin{proof}[Proof of Theorem \ref{theorem.reduce}]
Calculating the derivative, we have
$$
d'(s) = R+1 - \frac{2R}{\sqrt{R - \left(\frac{R-1}{\|\theta^+-\theta^-\|_\infty-2s}\right)^2}}.
$$
When $s \le \frac{\kappa}{2}\|\theta^+-\theta^-\|_\infty$, it is straightforward to see that $d'(s) < 0$, hence $d(\varepsilon) < d(0)$ by our assumption on the perturbation radius. 

\end{proof}

\subsection{Elliptically-Contoured Multivariate $S\alpha S$ Distribution} \label{appsub.ec}
Here we will introduce another multivariate $S\alpha S$ distribution whose joint characteristic function has closed form.

\paragraph{Elliptically-Contoured.} We say a multivariate distribution $X$ is elliptically-contoured~\citep{samorodnitsky1996stable}, if it has joint characteristic function
\begin{align*}
    \mathbb{E}\exp\left(is^{\top}X\right) = \exp\{-(s^{\top}\Sigma s)^{\alpha/2} + is^{\top}\mu\},
\end{align*}
where $\Sigma$ is a positive semi-definite \textit{shape matrix} and $\mu$ is the location parameter. We use $S\alpha S_{EC}(\mu, \Sigma)$ to denote such elliptically-contoured multivariate $S\alpha S$ distribution.
  
Suppose the data are generated through a mixture of elliptically-contoured multivariate $S\alpha S$ distributions: $\mathcal{P}^+ = S\alpha S_{EC}(\theta^+, \Sigma)$ and $\mathcal{P}^- = S\alpha S_{EC}(\theta^-, \Sigma)$.
We will now show in the following theorem that, since the shape matrix is an analogy to the covariance matrix as in the Gaussian case, we can obtain a similar conclusion of Corollary \ref{corollary.degrade} when the data are drawn from a mixture of elliptically-contoured $S\alpha S$ distribution.

\begin{theorem}\label{thm:elliptical}
Suppose the data are generated through a mixture of  elliptically-contoured multivariate $S\alpha S$ distributions. When there is no class imbalance, \ie, $R=1$, enforcing {adversarial robustness with $\ell_p$-constraint will increase the error on both classes}, so long as the ``direction'' of the optimal robust classifier is not parallel to its counterpart, \ie, $\Sigma^{\frac{1}{2}}w_{\emph{std}} \nparallel \Sigma^{\frac{1}{2}}w_{\emph{rob},p}$. Furthermore, if the shape matrix $\Sigma$ is positive definite, then the requirement $\Sigma^{\frac{1}{2}}w_{\emph{std}} \nparallel \Sigma^{\frac{1}{2}}w_{\emph{rob},p}$ is equivalent to $w_{\emph{std}} \nparallel w_{\emph{rob},p}$, \ie, a shift in direction in the standard sense.
\end{theorem}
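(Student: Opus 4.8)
The plan is to reduce the elliptically-contoured case to the Gaussian argument behind Theorem~\ref{thm:general} and Proposition~\ref{Claim.direction}, using that a one-dimensional projection of an elliptically-contoured $S\alpha S$ vector is again univariate $S\alpha S$, so that the whole analysis survives the replacement of the cumulative function $\Phi$ by $\Phi_\alpha$. Concretely, for $X\sim S\alpha S_{EC}(\mu,\Sigma)$ and any $w$, substituting $s=tw$ into the joint characteristic function gives $\mathbb{E}\exp(it\,w^\top X)=\exp\{-|t\sqrt{w^\top\Sigma w}|^\alpha+it\,w^\top\mu\}$, so that $w^\top X\sim f(x;\alpha,\sqrt{w^\top\Sigma w},w^\top\mu)$. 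With $R=1$ the two objectives therefore read
\begin{align*}
\ell_{\text{std}}(w,b) &= \tfrac12\Phi_\alpha\!\left(\tfrac{b+w^\top\theta^-}{\sqrt{w^\top\Sigma w}}\right) + \tfrac12\Phi_\alpha\!\left(\tfrac{-b-w^\top\theta^+}{\sqrt{w^\top\Sigma w}}\right), \\
\ell_{\text{rob},p}(w,b) &= \tfrac12\Phi_\alpha\!\left(\tfrac{b+w^\top\theta^-+\varepsilon\|w\|_q}{\sqrt{w^\top\Sigma w}}\right) + \tfrac12\Phi_\alpha\!\left(\tfrac{-b-w^\top\theta^++\varepsilon\|w\|_q}{\sqrt{w^\top\Sigma w}}\right),
\end{align*}
differing from their Gaussian counterparts only in this substitution.

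Next I would observe that the KKT derivation in the proof of Theorem~\ref{thm:general} uses only two properties of the cumulative function: that it is symmetric (its density is even) and strictly monotone. Both hold for $\Phi_\alpha$, whose density $\varphi_\alpha$ is even and strictly decreasing on $(0,\infty)$~\citep{gawronski1984bell}. Hence the stationarity conditions transfer verbatim: writing $\Sigma u=\theta^+-\theta^-$, $\Sigma v=\theta^+-\theta^--2\varepsilon\partial\|v\|_q$, $r^2=u^\top\Sigma u$, and $s^2=v^\top\Sigma v$ as in Theorem~\ref{thm:general}, one obtains $w_{\text{std}}=u/r$ and $w_{\text{rob},p}=v/s$, while the intercept balance forces $b=-w^\top(\theta^++\theta^-)/2$ when $R=1$. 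Since $\log R=0$, the class-wise errors collapse to $\Phi_\alpha(-\langle u,\theta^+-\theta^-\rangle/(2r))$ for the standard classifier and $\Phi_\alpha(-\langle v,\theta^+-\theta^-\rangle/(2s))$ for the robust one, and these are equal across the two classes.

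Then I would invoke Proposition~\ref{Claim.direction}, which is a distribution-free Cauchy--Schwarz statement about the quadratic forms induced by $\Sigma$ and so applies unchanged; it yields $\langle u,\theta^+-\theta^-\rangle/(2r)\ge\langle v,\theta^+-\theta^-\rangle/(2s)$, strictly when $\Sigma^{1/2}w_{\text{std}}\nparallel\Sigma^{1/2}w_{\text{rob},p}$. Applying the monotonicity of $\Phi_\alpha$ to the negatives gives $\ell_{\text{std}}^\pm(w_{\text{rob},p},b_{\text{rob},p})\ge\ell_{\text{std}}^\pm(w_{\text{std}},b_{\text{std}})$, strictly under the non-parallel hypothesis, exactly mirroring Theorem~\ref{corollary.degrade}. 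For the positive-definite addendum, $\Sigma^{1/2}$ is invertible, so $\Sigma^{1/2}a=c\,\Sigma^{1/2}b$ iff $a=cb$; hence $\Sigma^{1/2}w_{\text{std}}\parallel\Sigma^{1/2}w_{\text{rob},p}$ iff $w_{\text{std}}\parallel w_{\text{rob},p}$, letting the condition be phrased in the ordinary Euclidean sense. The main obstacle, and the only step requiring genuine care, is the intercept analysis: I must confirm that the stationary point in $b$ is the balanced one and not a spurious root, which rests on the even, unimodal density of $\Phi_\alpha$ rather than on any Gaussian-specific cancellation.
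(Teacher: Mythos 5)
Your proposal is correct and follows essentially the same route as the paper's own proof: project onto $w$ via the characteristic function to replace $\Phi$ by $\Phi_\alpha$, observe that the KKT/intercept analysis behind Theorem~\ref{thm:general} uses only the evenness and strict monotonicity of the density (which $\Phi_\alpha$ retains), and then invoke the distribution-free Cauchy--Schwarz argument of Proposition~\ref{Claim.direction} together with the invertibility of $\Sigma^{\frac{1}{2}}$ for the positive-definite addendum. The paper compresses the KKT-transfer and intercept steps into a single remark, whereas you spell them out explicitly, but the underlying argument is identical.
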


\begin{proof}[Proof of Theorem \ref{thm:elliptical}]
Note for $X \sim S\alpha S_{EC}(\mu, \Sigma), w \in \mathbb{R}^d$ and arbitrary $t \in \mathbb{R}$, we can set $s = tw$, and the definition of elliptically-contoured multivariate $S\alpha S$ distribution gives us
$
    \mathbb{E}\exp\left(it w^{\top}X\right) = \exp\{-|t\sqrt{w^{\top}\Sigma w}|^{\alpha} + it w^{\top}\mu\},
$
so according to the definition of univariate $S\alpha S$ distribution,
\begin{align*}
    w^{\top}X \sim f(x; \alpha, \sqrt{w^{\top}\Sigma w}, w^{\top}\mu).
\end{align*}
Therefore, the ``closed under linear transformation'' property yields
\begin{align*}
\ell_{\text{std}}(w,b) %
= \frac{1}{2}\Phi_\alpha\left(\frac{b+w^{\top}\theta^-}{\sqrt{w^{\top}\Sigma w}}\right) + \frac{1}{2} \Phi_\alpha\left(\frac{-b-w^{\top}\theta^+}{\sqrt{w^{\top}\Sigma w}}\right),
\end{align*}
and similarly
\begin{align*}
    \ell_{\text{rob},p}(w,b) %
=\frac{1}{2}\Phi_\alpha\left(\frac{b+w^{\top}\theta^- + \varepsilon \|w\|_q}{\sqrt{w^{\top}\Sigma w}}\right) +\frac{1}{2} \Phi_\alpha\left(\frac{-b-w^{\top}\theta^+ + \varepsilon \|w\|_q}{\sqrt{w^{\top}\Sigma w}}\right).
\end{align*}
The only difference compared to the Gaussian case discussed in Section \ref{Section AT} is that the cumulative function is replaced by $\Phi_{\alpha}$; moreover, a closer examination of the proof of Theorem \ref{thm:general} (see Appendix \ref{Appendix AT}) shows that we only resort to the symmetry and monotonicity of $\Phi$, and these properties are still preserved in $\Phi_{\alpha}$. As a consequence, we obtain Theorem \ref{thm:elliptical} as desired.
\end{proof}

\section{Additional Experimental Details, Results, and Analyses}
\label{adxsec:exp}

\subsection{Omitted Details of Experimental Setup}
\label{adxsubsec:exp-setup}

In this part, we provide a concrete description of our experimental setups.
Additionally, we release our code at \url{https://github.com/Accuracy-Disparity/AT-on-AD}, where we include the datasets, the code, and the instructions for reproducing the experiments.

\noindent \textbf{Datasets.}\quad
We altogether experiment with {seven} groups of datasets, including three groups of synthetic datasets and {four} groups of real-world datasets. Here, for ease of reference, we refer to the balanced and imbalanced datasets ($R=1,2,5,10$) constructed from the original balanced dataset as a \textit{group} of dataset, which consists of four datasets corresponding to four choices of $R$.

We will first describe the dataset properties of the original balanced dataset, and then explain how we construct the imbalanced datasets in the dataset group. In the end, we introduce the training, validation, and test split for performing training and evaluation.

\noindent \textit{Dataset Properties.}~
The three balanced \textbf{synthetic} datasets are constructed to be a mixture of two Gaussian, Cauchy, or other stable distributions with $1 < \alpha < 2$. We set the sample size of both classes as $N=10,000$ and the sample dimension size as $d=100$. For the Gaussian case, the two means are sampled from $U[0,1]^d$ and $U[-1,0]^d$, and their (same) variances are set as $AA^\top$ where $A \sim \mathcal{N}(0,I_d)$. 
For the Cauchy case and the $S\alpha S$ stable distribution, we construct the data with \noindent \textit{independent components}.
In both cases, for each dimension, we sample the location parameter from $U[0,0.5]$ (or $U[-0.5,0]$) and set the scale parameter as $1$.
We set the parameter $\alpha=1$ to construct the Cauchy dataset and $\alpha=1.5$ for the other one.
We release our synthetic datasets at \url{https://github.com/Accuracy-Disparity/AT-on-AD}.
The {four} balanced \textbf{real-world} datasets are built upon the handwritten digits dataset \textit{MNIST}~\citep{lecun1998gradient} under the Creative Commons Attribution-Share Alike 3.0 license, the fashion products dataset \textit{Fashion-MNIST}~\citep{xiao2017fashion} under the MIT license, \textit{CIFAR-10}~\citep{krizhevsky2009learning} under the MIT license, {and \textit{ImageNet}}. 
MNIST and Fashion-MNIST consist of grey-scale images of dimensionality $28 \times 28$; CIFAR-10 consists of colored images of dimensionality $32 \times 32 \times 3$. 
Originally, the three datasets are used for 10-class classification. 
To adapt for the binary classification task we consider here, we choose two classes from all ten --- digit 1 and digit 7 for MNIST, T-shirt and trouser for Fashion-MNSIT, and cat and dog for CIFAR-10.
{{
ImageNet consists of colored images of various dimensionality. We use the downsampled dataset with image dimensionality $64 \times 64 \times 3$~\citep{chrabaszcz2017downsampled}.
The dataset contains $1,000$ classes following a semantic hierarchy\footnote{A diagram of the hierarchy can be found at \url{https://observablehq.com/@mbostock/imagenet-hierarchy}}.
For our binary classification task, we choose two ``macro'' classes in the hierarchy, ``car'' and ``edible fruit'', each consisting of $10$ classes. 
We do not choose two lowest level of classes (among all $1,000$ classes) from ImageNet, since the number of samples per class (around $1,000$) is insufficient for training the deep neural network.
}}
By their original construction, these real-world datasets are class balanced. 

\noindent \textit{Construction of the Imbalanced Dataset.}~
We next explain how we obtain the dataset group consisting of datasets of various class imbalance ratios ($R=2,5,10$) from the balanced one ($R=1$) we initially construct. 
Concretely, we obtain datasets with increasing class imbalance ratio sequentially. For each dataset, we hold the majority class samples as fixed (\ie, same as the $R=1$ case), and subsample the minority class samples from the previously constructed set of minority class samples. (For example, we sample from the minority set in the $R=2$ dataset to obtain the minority set for the $R=5$ dataset.)

\noindent \textit{Dataset Partition.}~
For each dataset in each dataset group, we split the dataset into three disjoint partitions: training, validation, and testing. 
For the synthetic dataset, we set the ratio of the three partitions to be 8:1:1, which gives us a total number of 8000 training samples, 1000 validation samples, and 1000 testing samples for the majority class in each dataset. (The sizes of the three partitions for the minority class are the sizes for the majority class divided by the imbalance ratio $R$).
For the real-world datasets {MNIST, Fashion-MNIST and CIFAR}, since the datasets are originally split into training and testing, we further split the training set into training and validation with a ratio of 8:1.
For MNIST and Fashion-MNIST, the numbers of training, validation, and testing samples for the majority class are 5333, 533, and 1000 respectively. 
For CIFAR-10, the numbers are 4444, 556, and 1000 respectively.
{For ImageNet, there are in all 13000 images for each ``macro'' class in the training set. 
For the majority class, the numbers of training, validation, and testing samples are 10000, 2000, and 1000 respectively.}
We release all our datasets at \url{https://github.com/Accuracy-Disparity/AT-on-AD}.

\noindent \textbf{Models.}\quad
For most experiments, we mainly adopt simple models on these datasets --- linear classifiers on the three synthetic datasets, MNIST, and Fashion-MNIST, as well as networks with two linear layers on CIFAR-10.

The reason we mainly experiment with simple models is that they are sufficiently powerful for the binary classification task, as we can see from the accuracy results in Table \ref{tab:acc-overall-1} and \ref{tab:acc-overall-2} in Appendix \ref{adxsubsec:exp-result}.
Actually, on CIFAR-10, we also experiment with a more complicated convolutional neural network VGG-11~\citep{Simonyan2015very}; we derive similar conclusions on this complicated network as we did on simple networks. The details can be found in Appendix \ref{adxsubsec:exp-analysis}.

{Out of interest for deep neural networks on complicated datasets, we also experiment with ImageNet and apply a deep neural network ResNet18~\citep{he2016deep}.
}

\noindent \textbf{Training protocols.}\quad
We perform \textit{standard training} and \textit{adversarial training} on the binary classification task. We first describe the common training protocols for both training schemes, and then go into details on the specificity of adversarial training.

\looseness=-1
We adopt stochastic gradient descent~(SGD) optimizer~\citep{kiefer1952stochastic} or Adam optimizer~\citep{kingma2015adam} for network training. We take the cross entropy loss as the training objective. 
We perform model selection on a held-out validation set (as introduced in the data part).
We perform a grid search for the hyper-parameters including learning rate, batch size, and hidden layer size (when applicable) based on the model’s performance on the validation set. 
The search space for learning rate is $\{0.1,0.05,0.01,0.005,0.001,0.0005,0.0002,0.0001\}$, for batch size is $\{32, 64, 128\}$, and for hidden layer size is $\{100, 200, 500, 1000, 2000\}$.
For each model, we perform training for a maximum of $500$ training epochs; we keep track of the best model throughout the training based on the validation loss and apply early stopping~\citep{prechelt1998early} when the lowest validation loss does not decrease for the past $50$ epochs.
{For ImageNet specifically, we perform early stopping when the loss does not decrease for the past $10$ epochs.}

For adversarial training, we follow Goodfellow et al.~\citep{goodfellow2014explaining} and Madry et al.~\citep{madry2017towards} to craft the adversarial examples via the fast gradient method (FGM) or the projected gradient descent (PGD). We adopt the former for linear classifiers and the latter for two layer neural networks and deep neural networks~(VGG-11 and ResNet18).
For all datasets and all $\ell_p$ norms ($p \in \{ 2, \infty \}$), we experiment with three perturbation scales $\eps$. The perturbation scales are selected based on the $\ell_p$ distances between the empirical means of the two classes. For most of the datasets, we choose the three values as $1/4, 3/8, 1/2$ of the distance. For CIFAR-10 and $\ell_2$ only, we select slightly larger perturbation scales, following the practice in Tsipras et al.~\citep{tsipras2018robustness}.
For PGD attack specifically, we set the step number to be $50$ and the limit on the per step size to be $2.5 \cdot \eps / 50$ following Madry et al.~\citep{madry2017towards}.
{On ImageNet, we limit the step number to $10$ considering the high computation cost.}

\subsection{Additional Experiments on Synthetic Datasets of Stable Distributions}
\label{adxsubsec:exp-stable}

\renewcommand{\thesubfigure}{\alph{subfigure}}

\begin{figure}

\newlength{\utilheightsyn}
\settoheight{\utilheightsyn}{\includegraphics[width=.25\linewidth]{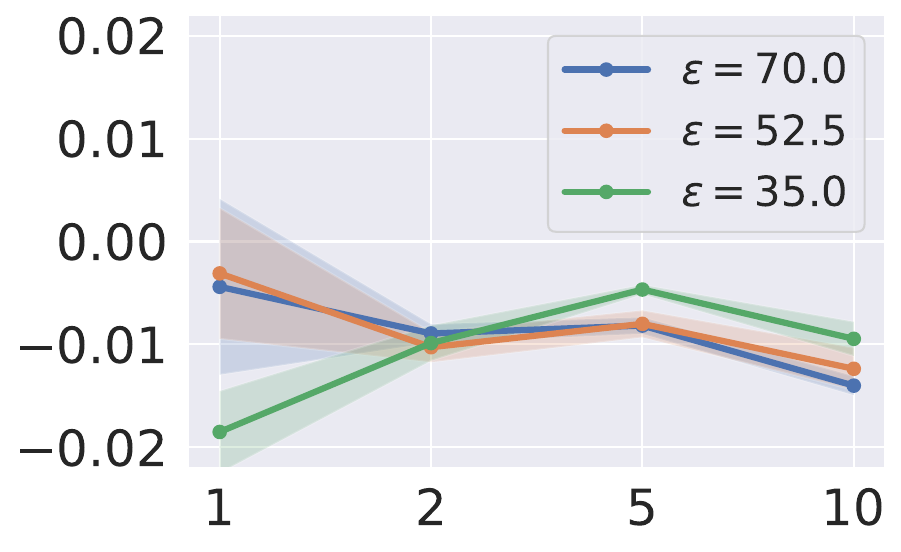}}%

\newcommand{\rowname}[1]%
{\rotatebox{90}{\makebox[\utilheightsyn][c]{\tiny #1}}}

\centering

{
\renewcommand{\tabcolsep}{10pt}

\begin{subtable}[]{\linewidth}
\centering
\resizebox{0.95\linewidth}{!}{%
\begin{tabular}{@{}p{4mm}@{}c@{}c@{}c@{}c@{}c@{}c@{}}
        & \makecell{\small{\textbf{Cauchy, $p=\infty$}}}
        & \makecell{\small{\textbf{Cauchy, $p=2$}}}
        & \makecell{\small{\textbf{Holtsmark, $p=\infty$}}}
        & \makecell{\small{\textbf{Holtsmark, $p=2$}}}
        \\
\rowname{\makecell{\scriptsize $AD^{R}_{\trob,p,\eps} - AD^{R}_{\tstd}$}}&
\includegraphics[height=\utilheightsyn]{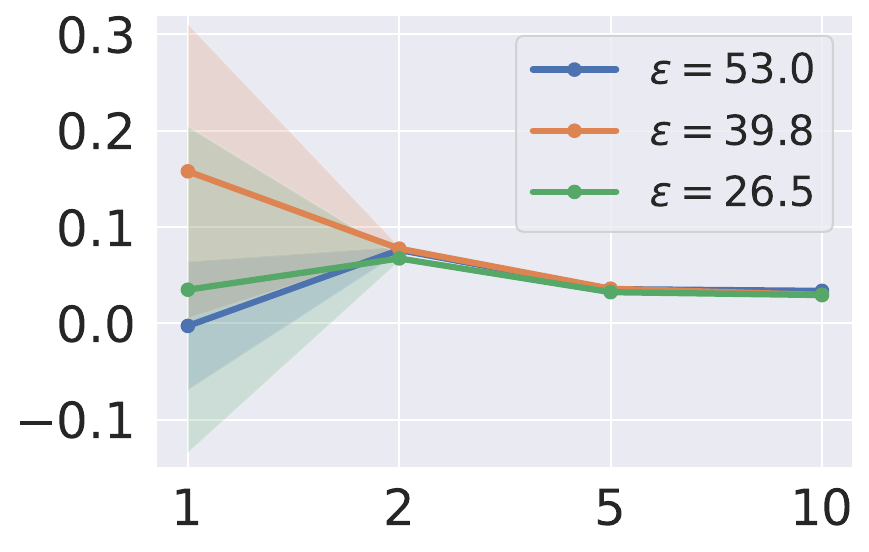}&
\includegraphics[height=\utilheightsyn]{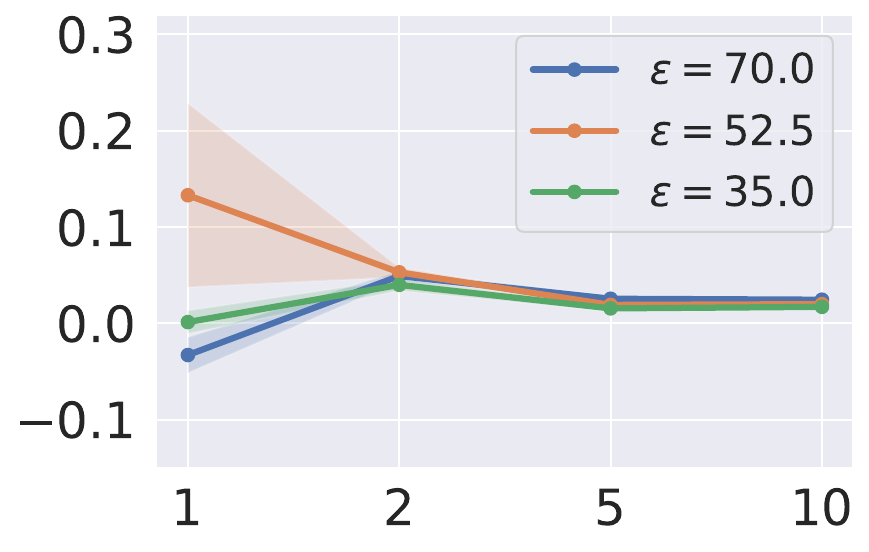}&
\includegraphics[height=\utilheightsyn]{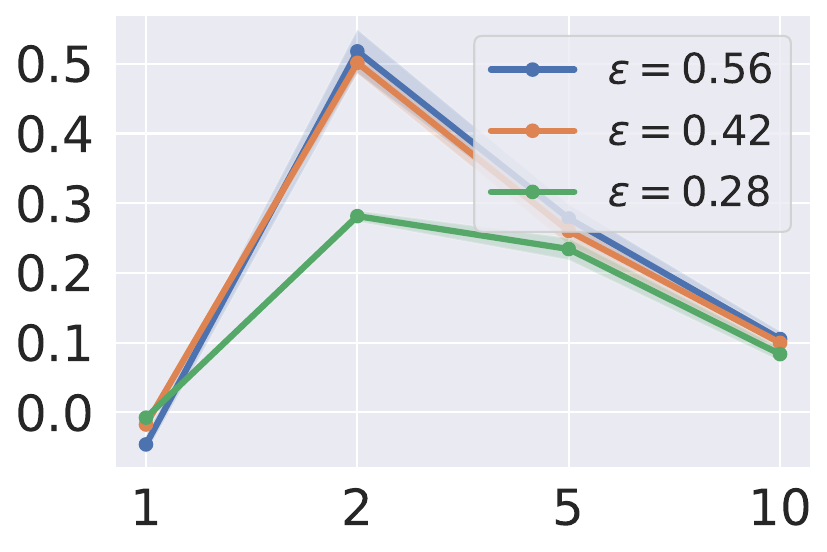}&
\includegraphics[height=\utilheightsyn]{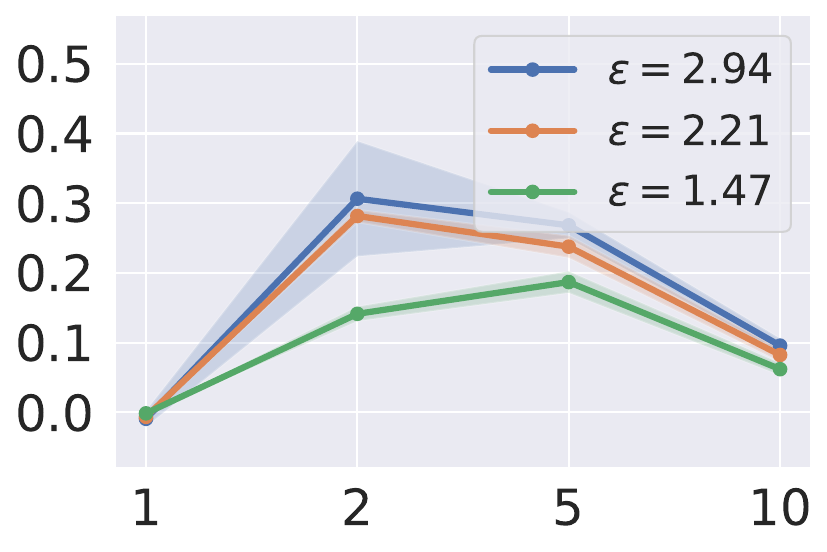}&
\\
\rowname{\makecell{\scriptsize $acc^{R,\cdot}_{\tstd} - acc^{R,\cdot}_{\trob,p,\eps}$}}&
\includegraphics[height=\utilheightsyn]{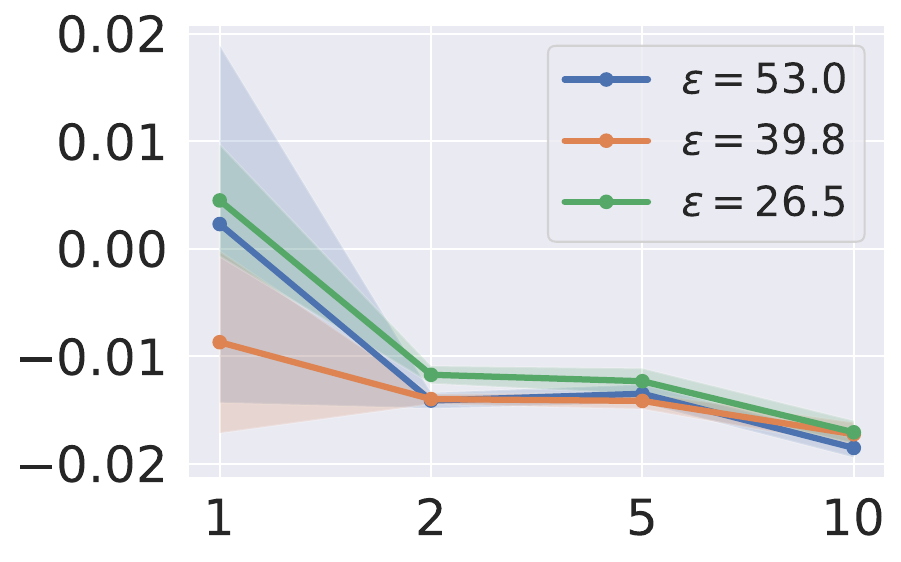}&
\includegraphics[height=\utilheightsyn]{figs/cauchy_2_l2_acc.pdf}&
\includegraphics[height=\utilheightsyn]{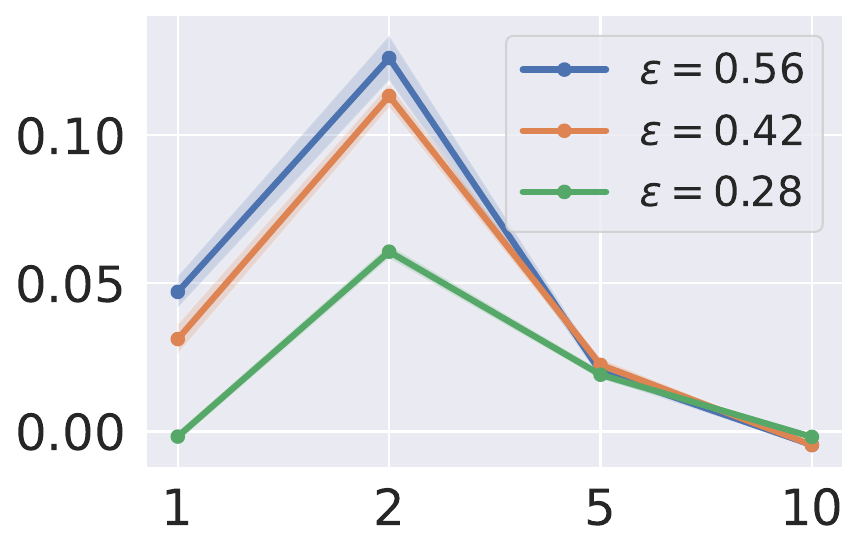}&
\includegraphics[height=\utilheightsyn]{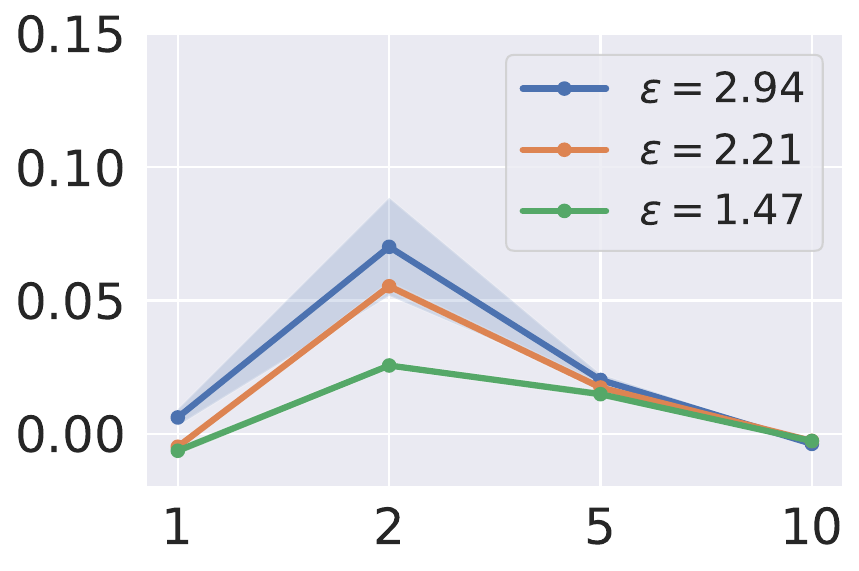}&
\\[-2mm]
        & \makecell{\scriptsize{$R$}}
        & \makecell{\scriptsize{$R$}}
        & \makecell{\scriptsize{$R$}}
        & \makecell{\scriptsize{$R$}}
\end{tabular}
}
\end{subtable}

}
\vspace{-2mm}
\caption{\small The gap of accuracy disparity $AD^R_{\trob,p,\eps} - AD^R_{\tstd}$ (\nth{1} row, \rqtwo) and the gap of standard accuracy $acc^{R,\cdot}_{\tstd} - acc^{R,\cdot}_{\trob,p,\eps}$ (\nth{2} row, \rqthree)  w.r.t. the imbalance ratio $R$.
For the robust classifiers, we consider $p \in \{2, \infty\}$ and multiple perturbation scales.
We present the results on the two synthetic datasets of $S \alpha S$ distribution where $\alpha = 1$ or $1.5$.
The shaded area represents the standard error of $5$ runs.
}%
\label{fig:app-syn}
\end{figure}

As a complement to the Gaussian case evaluated in Section \ref{sec:exp} in the main paper, here, we evaluate general symmetric $\alpha$-stable distributions corresponding to the theoretical results in Section \ref{Section SD}.
We consider two values of $\alpha$ --- the special case $\alpha=1$ which is commonly known as the Cauchy distribution, as well as the case $\alpha=1.5$ which is known as the Holtsmark distribution~\citep{holtsmark1919verbreiterung} and can be viewed as an ``intermediate'' between Gaussian ($\alpha=2$) and Cauchy.
The construction details of the dataset are described in Appendix \ref{adxsubsec:exp-setup}.

We present the results on these two datasets
in Figure \ref{fig:app-syn}.

In the Cauchy case, the two classes in our constructed dataset are barely separable, so in the balanced case $R=1$, the classification accuracy for both classes are close to random guess with value around $0.5$. (Detailed numbers can be found in Table \ref{tab:acc-per-class-syn}.)
In the class imbalance settings ($R=2,5,10$), the predictions of both the standard and the robust classifiers favor the majority class, and so the accuracy disparity is close to $1$ in both cases, leading to the accuracy disparity gap of nearly $0$.
This result provides the answers to \rqone and \rqtwo and aligns well with the theoretical result in Theorem \ref{Theorem zero-one}.
Regarding \rqthree which asks whether and when adversarial training will worsen the standard accuracy, we comment that on this Cauchy dataset we investigate, the classification outcomes are similar for the standard and robust classifiers.
In the balanced class setting, the hardness of the dataset dominates and leads to random guesses in both training scenarios; in the class imbalance setting, the class imbalance dominates and leads to unanimous preference towards the majority class.
Thus, it is difficult to draw conclusions regarding the factor of adversarial training.

For the Holtsmark distribution, unsurprisingly, we find that the phenomenon is in the between of the Guassian and Cauchy in terms of accuracy disparity. 
Concretely, we do see that adversarial training will exacerbate the accuracy disparity in the class imbalance setting, but the gap does not increase with the imbalance ratio $R$. This observation aligns partially with Theorem \ref{cor:imbalance} and partially with Theorem \ref{Theorem zero-one}, but not both.
In terms of the standard accuracy, we see that adversarial training leads to a decrease of standard accuracy, which is consistent with Theorem \ref{Theorem degrade}.

In all, from these experiments on synthetic datasets of stable distributions, we see that 1) class imbalance will be a dominating factor which surpasses the influence of adversarial training for the Cauchy case; 2) $\alpha$ value between $1$ and $2$ will lead to intermediate behavior of Gaussian (Theorem \ref{cor:imbalance}) and Cauchy (Theorem \ref{Theorem zero-one}) regarding the accuracy disparity; 3) adversarial training will invariably decrease the standard accuracy empirically.

\subsection{Additional Experiments on Real-World Datasets Fashion-MNIST {and ImageNet}}
\label{adxsubsec:exp-fmnist}

\renewcommand{\thesubfigure}{\alph{subfigure}}

\begin{figure}

\newlength{\utilheightapp}
\settoheight{\utilheightapp}{\includegraphics[width=.25\linewidth]{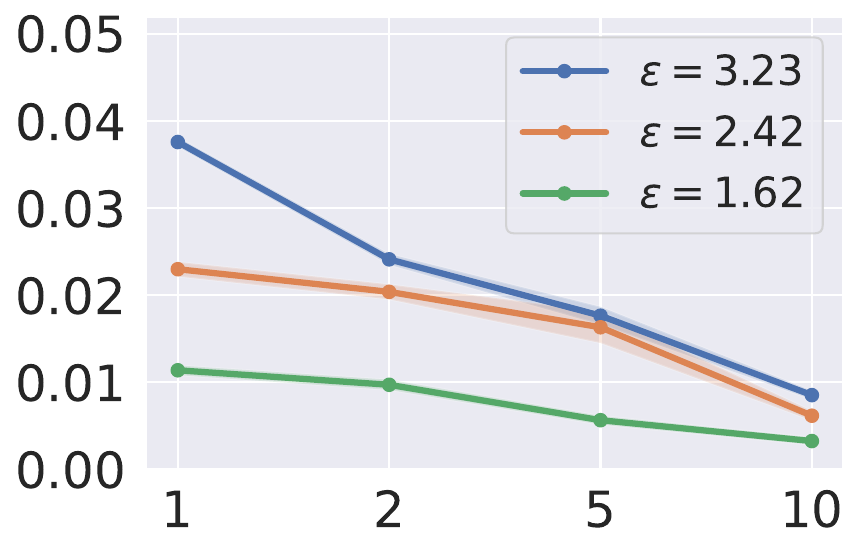}}%

\newcommand{\rowname}[1]%
{\rotatebox{90}{\makebox[\utilheightapp][c]{\tiny #1}}}

\centering

{
\renewcommand{\tabcolsep}{10pt}

\begin{subtable}[]{\linewidth}
\centering
\resizebox{0.95\linewidth}{!}{%
\begin{tabular}{@{}p{4mm}@{}c@{}c@{}c@{}c@{}c@{}c@{}}
        & \makecell{\small{\textbf{Fashion-MNIST, $p=\infty$}}}
        & \makecell{\small{\textbf{Fashion-MNIST, $p=2$}}}
        & \makecell{\small{\textbf{ImageNet, $p=\infty$}}}
        & \makecell{\small{\textbf{ImageNet, $p=2$}}}
        \\
\rowname{\makecell{\scriptsize $AD^{R}_{\trob,p,\eps} - AD^{R}_{\tstd}$}}&
\includegraphics[height=\utilheightapp]{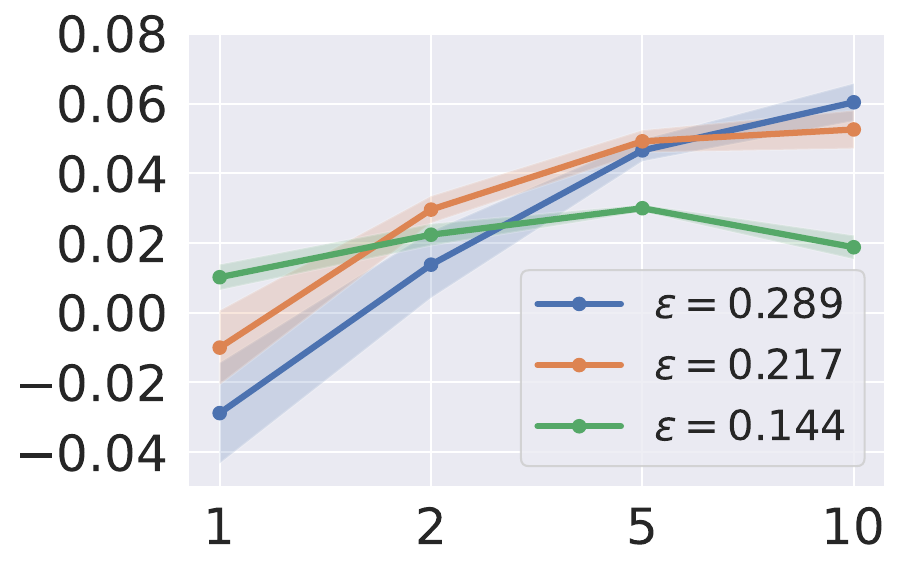}&
\includegraphics[height=\utilheightapp]{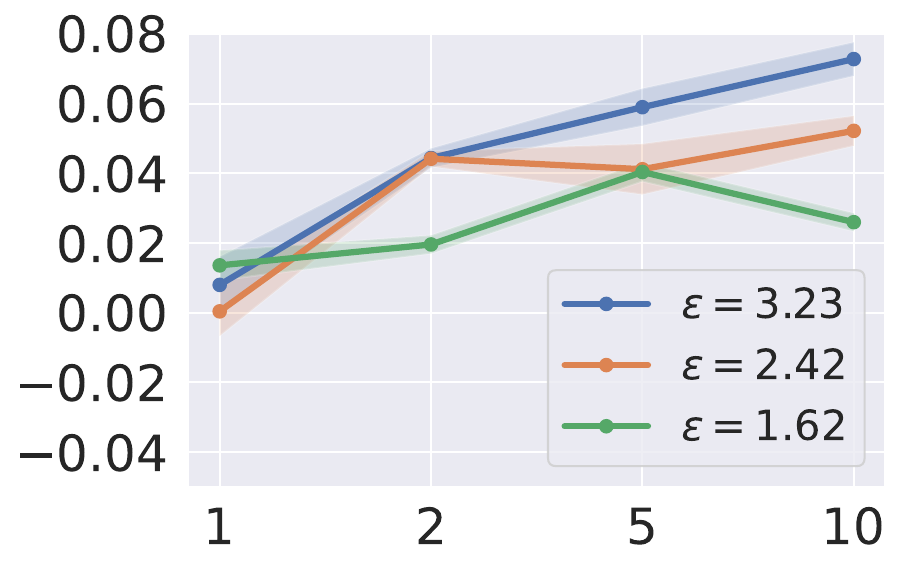}&
\includegraphics[height=\utilheightapp]{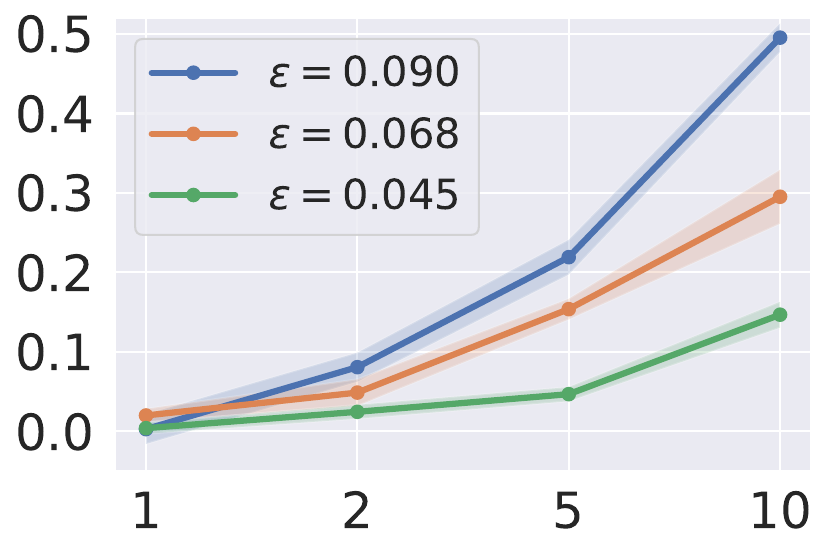}&
\includegraphics[height=\utilheightapp]{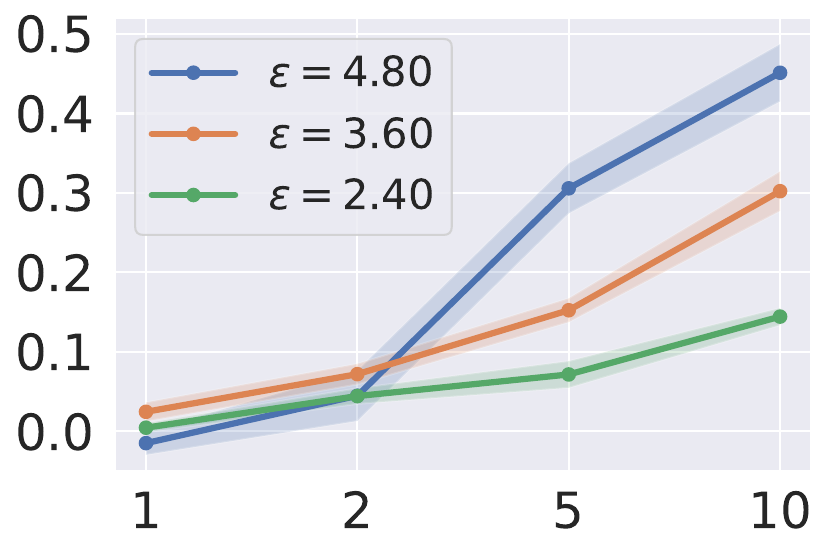}&
\\
\rowname{\makecell{\scriptsize $acc^{R,\cdot}_{\tstd} - acc^{R,\cdot}_{\trob,p,\eps}$}}&
\includegraphics[height=\utilheightapp]{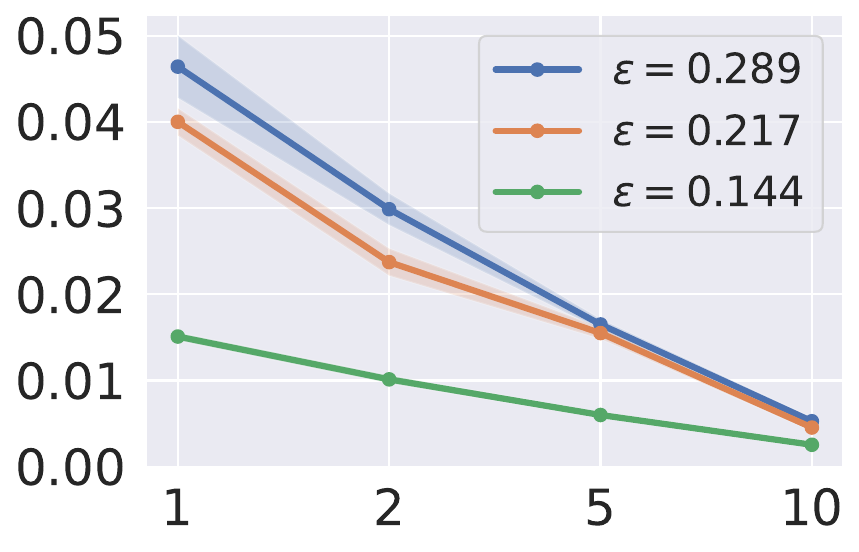}&
\includegraphics[height=\utilheightapp]{figs/fmnist_l2_acc.pdf}&
\includegraphics[height=\utilheightapp]{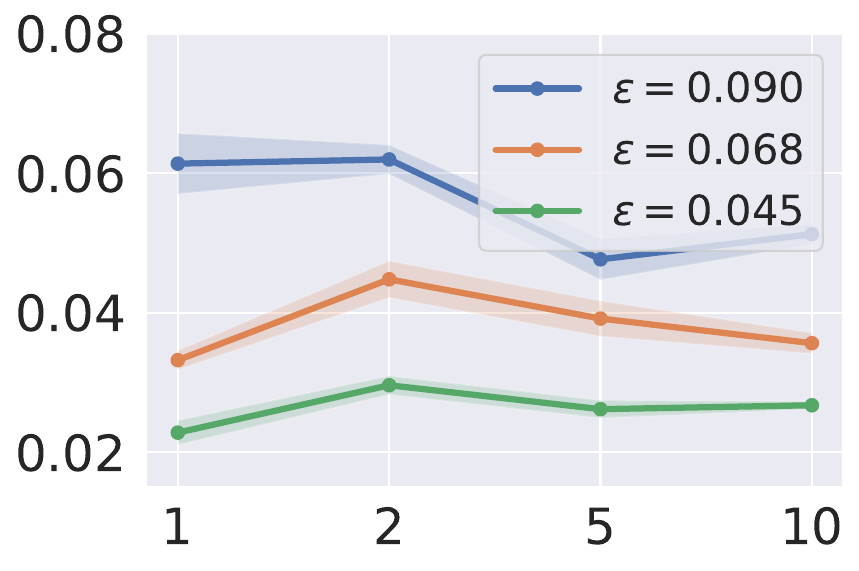}&
\includegraphics[height=\utilheightapp]{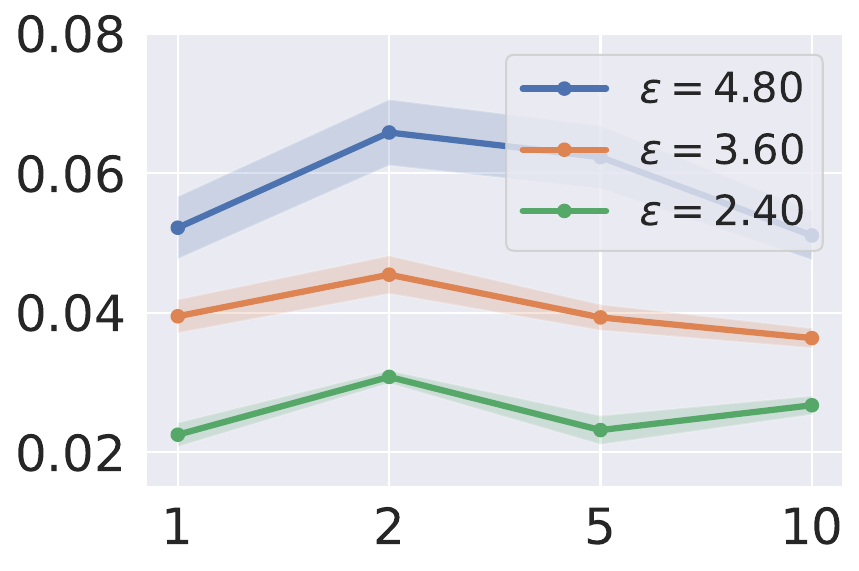}&
\\[-2mm]
        & \makecell{\scriptsize{$R$}}
        & \makecell{\scriptsize{$R$}}
        & \makecell{\scriptsize{$R$}}
        & \makecell{\scriptsize{$R$}}
\end{tabular}
}
\end{subtable}

}
\vspace{-2mm}
\caption{\small The gap of accuracy disparity $AD^R_{\trob,p,\eps} - AD^R_{\tstd}$ (\nth{1} row, \rqtwo) and the gap of standard accuracy $acc^{R,\cdot}_{\tstd} - acc^{R,\cdot}_{\trob,p,\eps}$ (\nth{2} row, \rqthree)  w.r.t. the imbalance ratio $R$.
For the robust classifiers, we consider $p \in \{2, \infty\}$ and multiple perturbation scales.
We present the results on  {two real-world datasets \textbf{Fashion-MNIST} and \textbf{ImageNet}}.
The shaded area represents the standard error of $5$ runs.
}%
\label{fig:app-fmnist}
\end{figure}

\looseness=-1
{We evaluate two additional real-world datasets Fashion-MNIST~\citep{xiao2017fashion} and ImageNet~\citep{deng2009imagenet}.}
Fashion-MNIST is often used as a drop-in replacement for MNIST~\citep{lecun1998gradient}.
{ImageNet is a large-scale dataset consisting of $1,000$ classes of high-dimensional images.
The dataset description and the details on the construction of the dataset are provided in Appendix \ref{adxsubsec:exp-setup}.
}

We present the results in Figure \ref{fig:app-fmnist}.
Comparing the results with that in Figure \ref{fig:main} in Section \ref{sec:exp} of the main paper, we obtain highly similar observations and conclusions w.r.t. all our three research questions.
{Concretely, both Fashion-MNIST and ImageNet display similar behavior with the Gaussian mixture case --- adversarial training exacerbates accuracy disparity compared with standard training which is more severe with increased imbalance (\nth{1} row), and adversarial training worsens the standard accuracy when $R=1$ (\nth{2} row).

In order to understand how ``close'' these two real-world datasets are to the Gaussian mixture, we follow the approach described in Appendix \ref{adxsubsec:exp-analysis} to compute the outlier ratio (\ie, the ratio to samples that are $1$ sigma away from the empirical mean after performing preconditioning).
The outlier ratios are $0.30$ and $0.26$ for the two classes of Fashion-MNIST, and $0.00$ and $0.00$ for ImageNet.
Compared with the outlier ratios $0.52$ and $0.52$ for CIFAR (see Appendix \ref{adxsubsec:exp-analysis}), Fashion-MNIST and ImageNet are indeed less heavy-tailed.

Thus, we show that Fashion-MNIST and ImageNet are additional evidences for the potential of extending the theoretical results to real-world datasets that can be roughly modeled as Gaussian mixtures.
}

\subsection{Additional Analysis on the Statistical Properties of Real-World Datasets}
\label{adxsubsec:exp-analysis}

From Section \ref{subsec:exp-analysis-1} in the main paper, we see that the results on MNIST resemble the theoretical analysis on the Gaussian distribution, while the results on CIFAR resemble the analysis on the Cauchy distribution.
In order to understand whether CIFAR is indeed more heavy-tailed than MNIST, we look into the statistical properties of the two datasets and make a comparison.

We compute the ratio of the outlier of the dataset as a proxy of how ``heavy'' the tail is.
Concretely, we first perform a preconditioning on the dataset such that the covariance of the preconditioned dataset becomes an identity matrix.
We leverage the PCA whitening approach~\citep{friedman1987exploratory} to achieve the goal.
Then, we compute an empirical mean of the dataset and check for how many instances are $1$ sigma away from the empirical mean (\ie, the distance between the instance and the empirical mean is larger than $\sqrt{d}$, where $d$ is the dataset dimensionality).

We follow the above approach to compute the outlier ratio for two classes separately in both datasets.
The outlier ratios are as high as 0.52 and 0.52 for the two classes of CIFAR, while only 0.13 and 0.16 for MNIST.
This means that CIFAR is indeed much more heavy-tailed than MNIST, supporting the experimental results in Section~\ref{subsec:exp-analysis-1}.

\looseness=-1
\noindent \textbf{Ruling out Other Possible Influencing Factors.}~
As we can observe from Table \ref{tab:acc-per-class-real}, the reason why there is only small accuracy disparity gap for CIFAR in the class imbalance case is that the accuracy of the standard classifier on the minority class is close to $0$.
The heavy tailed property is one possible explanation (Theorem \ref{Theorem zero-one}); the other straightforward 
hypothesis is that the standard classifier is not well trained on CIFAR, either because the dataset size is relatively small, or because the model capacity is limited. 
We then separately study the influence of the two factors:

\begin{itemize}%
    \item \textbf{Dataset Size.}~
    \looseness=-1
    Since we cannot enlarge the dataset size for CIFAR, we instead shrink the dataset size for MNIST. 
    We use $53$ majority class samples and $5$ minority class samples (\ie, $R=10$) to train the model, which is $1/100$ of the size of the original training data. 
    Compared with the original result, the majority class accuracy remains $1.00$, and the minority class accuracy drops from $0.97$ to $0.88$.
    \item \textbf{Model Capacity.}~
    Instead of the original two linear layer network, we adopt a deep convolutional network VGG-11~\citep{Simonyan2015very} to train the standard classifier on the $R=10$ case for CIFAR.
    As a result, the minority class accuracy increases from $0.00$ to $0.23$.
\end{itemize}

From the above results, we see that neither shrinking the dataset size nor increasing the model capacity can significantly impact the accuracy disparity gap.
Thus, we can confidently rule out these alternative explanations. 
We conclude that the main contributor that leads to the distinction is the distributional characteristic (specifically, the tail property) of the dataset.

\subsection{Full Experimental Results}
\label{adxsubsec:exp-result}

{
\setlength{\tabcolsep}{3pt} %
\begin{table}[tbp]
    \centering

    \caption{\small \textbf{Accuracy disparity gap} $AD_R^{\trob,p,\eps} - AD_R^{\tstd}$ for various choices of $p$ and $\eps$ on {seven} dataset groups. Results are averaged over $5$ runs with different random seeds. (Corresponding to \nth{1} rows of~\Cref{fig:main,fig:app-fmnist,fig:app-syn}.)
    }
    \label{tab:ad}

\resizebox{.58\linewidth}{!}{%
        \begin{tabular}{lcccccc
}
    \toprule
    \multirow{2}{*}{\makecell[c]{\textbf{Synthetic}\\\textbf{Gaussian}}}
     & \multicolumn{3}{c}{\small $p=\infty$} & \multicolumn{3}{c}{\small $p=2$} \\
    \cmidrule(lr){2-4}\cmidrule(lr){5-7}
     & $\eps=1.00$ & $\eps=0.75$ & $\eps=0.50$ & $\eps=5.00$ & $\eps=3.50$ & $\eps=2.50$ \\ \midrule
$R=1$  & 0.00 {\tiny $\pm$  0.01}  & -0.02 {\tiny $\pm$  0.00}  & -0.01 {\tiny $\pm$  0.00}  & 0.01 {\tiny $\pm$  0.02}  & -0.01 {\tiny $\pm$  0.02}  & -0.02 {\tiny $\pm$  0.00} \\
$R=2$  & 0.16 {\tiny $\pm$  0.06}  & 0.20 {\tiny $\pm$  0.02}  & 0.08 {\tiny $\pm$  0.00}  & 0.28 {\tiny $\pm$  0.12}  & 0.33 {\tiny $\pm$  0.10}  & 0.21 {\tiny $\pm$  0.02} \\
$R=5$  & 0.85 {\tiny $\pm$  0.02}  & 0.56 {\tiny $\pm$  0.02}  & 0.22 {\tiny $\pm$  0.01}  & 0.98 {\tiny $\pm$  0.00}  & 0.91 {\tiny $\pm$  0.01}  & 0.59 {\tiny $\pm$  0.02} \\
$R=10$  & 0.95 {\tiny $\pm$  0.00}  & 0.73 {\tiny $\pm$  0.02}  & 0.27 {\tiny $\pm$  0.01}  & 1.00 {\tiny $\pm$  0.00}  & 0.97 {\tiny $\pm$  0.00}  & 0.67 {\tiny $\pm$  0.01} \\
\bottomrule \\
\end{tabular}
}

\resizebox{.58\linewidth}{!}{%
        \begin{tabular}{lcccccc
}
    \toprule
    \multirow{2}{*}{\makecell[c]{\textbf{Synthetic}\\\textbf{Cauchy}}}
     & \multicolumn{3}{c}{\small $p=\infty$} & \multicolumn{3}{c}{\small $p=2$} \\
    \cmidrule(lr){2-4}\cmidrule(lr){5-7}
     & $\eps=53.00$ & $\eps=39.75$ & $\eps=26.50$ & $\eps=70.0$ & $\eps=52.5$ & $\eps=35.0$ \\ \midrule
$R=1$  & -0.03 {\tiny $\pm$  0.02}  & 0.13 {\tiny $\pm$  0.10}  & 0.00 {\tiny $\pm$  0.01}  & -0.00 {\tiny $\pm$  0.07}  & 0.16 {\tiny $\pm$  0.15}  & 0.03 {\tiny $\pm$  0.17} \\
$R=2$  & 0.05 {\tiny $\pm$  0.01}  & 0.05 {\tiny $\pm$  0.00}  & 0.04 {\tiny $\pm$  0.01}  & 0.08 {\tiny $\pm$  0.00}  & 0.08 {\tiny $\pm$  0.00}  & 0.07 {\tiny $\pm$  0.00} \\
$R=5$  & 0.03 {\tiny $\pm$  0.00}  & 0.02 {\tiny $\pm$  0.00}  & 0.02 {\tiny $\pm$  0.00}  & 0.04 {\tiny $\pm$  0.00}  & 0.04 {\tiny $\pm$  0.00}  & 0.03 {\tiny $\pm$  0.00} \\
$R=10$  & 0.02 {\tiny $\pm$  0.00}  & 0.02 {\tiny $\pm$  0.00}  & 0.02 {\tiny $\pm$  0.00}  & 0.03 {\tiny $\pm$  0.00}  & 0.03 {\tiny $\pm$  0.00}  & 0.03 {\tiny $\pm$  0.00} \\
\bottomrule \\
\end{tabular}
}

\resizebox{.58\linewidth}{!}{%
        \begin{tabular}{lcccccc
}
    \toprule
    \multirow{2}{*}{\makecell[c]{\textbf{Synthetic}\\\textbf{Holtsmark}}}
     & \multicolumn{3}{c}{\small $p=\infty$} & \multicolumn{3}{c}{\small $p=2$} \\
    \cmidrule(lr){2-4}\cmidrule(lr){5-7}
     & $\eps=0.56$ & $\eps=0.42$ & $\eps=0.28$ & $\eps=2.94$ & $\eps=2.21$ & $\eps=1.47$ \\ \midrule
$R=1$  & -0.01 {\tiny $\pm$  0.01}  & -0.01 {\tiny $\pm$  0.00}  & -0.00 {\tiny $\pm$  0.00}  & -0.05 {\tiny $\pm$  0.01}  & -0.02 {\tiny $\pm$  0.00}  & -0.01 {\tiny $\pm$  0.00} \\
$R=2$  & 0.31 {\tiny $\pm$  0.08}  & 0.28 {\tiny $\pm$  0.01}  & 0.14 {\tiny $\pm$  0.01}  & 0.52 {\tiny $\pm$  0.03}  & 0.50 {\tiny $\pm$  0.02}  & 0.28 {\tiny $\pm$  0.01} \\
$R=5$  & 0.27 {\tiny $\pm$  0.02}  & 0.24 {\tiny $\pm$  0.02}  & 0.19 {\tiny $\pm$  0.02}  & 0.28 {\tiny $\pm$  0.02}  & 0.26 {\tiny $\pm$  0.02}  & 0.23 {\tiny $\pm$  0.02} \\
$R=10$  & 0.10 {\tiny $\pm$  0.01}  & 0.08 {\tiny $\pm$  0.01}  & 0.06 {\tiny $\pm$  0.01}  & 0.10 {\tiny $\pm$  0.01}  & 0.10 {\tiny $\pm$  0.01}  & 0.08 {\tiny $\pm$  0.01} \\

\bottomrule \\
\end{tabular}
}

\resizebox{.58\linewidth}{!}{%
        \begin{tabular}{lcccccc
}
    \toprule
    \multirow{2}{*}{\makecell[l]{\textbf{MNIST}}}
     & \multicolumn{3}{c}{\small $p=\infty$} & \multicolumn{3}{c}{\small $p=2$} \\
    \cmidrule(lr){2-4}\cmidrule(lr){5-7}
     & $\eps=0.43$ & $\eps=0.33$ & $\eps=0.22$ & $\eps=2.70$ & $\eps=2.02$ & $\eps=1.35$ \\ \midrule
$R=1$  & 0.03 {\tiny $\pm$  0.00}  & 0.02 {\tiny $\pm$  0.00}  & 0.01 {\tiny $\pm$  0.00}  & 0.02 {\tiny $\pm$  0.00}  & 0.03 {\tiny $\pm$  0.00}  & 0.02 {\tiny $\pm$  0.00} \\
$R=2$  & 0.03 {\tiny $\pm$  0.00}  & 0.03 {\tiny $\pm$  0.00}  & 0.01 {\tiny $\pm$  0.00}  & 0.05 {\tiny $\pm$  0.00}  & 0.04 {\tiny $\pm$  0.01}  & 0.02 {\tiny $\pm$  0.00} \\
$R=5$  & 0.07 {\tiny $\pm$  0.00}  & 0.04 {\tiny $\pm$  0.00}  & 0.02 {\tiny $\pm$  0.00}  & 0.07 {\tiny $\pm$  0.01}  & 0.06 {\tiny $\pm$  0.00}  & 0.05 {\tiny $\pm$  0.00} \\
$R=10$  & 0.09 {\tiny $\pm$  0.00}  & 0.04 {\tiny $\pm$  0.00}  & 0.02 {\tiny $\pm$  0.00}  & 0.09 {\tiny $\pm$  0.00}  & 0.08 {\tiny $\pm$  0.01}  & 0.05 {\tiny $\pm$  0.00} \\
\bottomrule \\
\end{tabular}
}

\resizebox{.58\linewidth}{!}{%
        \begin{tabular}{lcccccc
}
    \toprule
    \multirow{2}{*}{\makecell[l]{\textbf{Fashion-}\\\textbf{MNIST}}}
     & \multicolumn{3}{c}{\small $p=\infty$} & \multicolumn{3}{c}{\small $p=2$} \\
    \cmidrule(lr){2-4}\cmidrule(lr){5-7}
     & $\eps=0.29$ & $\eps=0.22$ & $\eps=0.14$ & $\eps=3.23$ & $\eps=2.42$ & $\eps=1.62$ \\ \midrule
$R=1$  & 0.01 {\tiny $\pm$  0.01}  & 0.00 {\tiny $\pm$  0.01}  & 0.01 {\tiny $\pm$  0.00}  & -0.03 {\tiny $\pm$  0.01}  & -0.01 {\tiny $\pm$  0.01}  & 0.01 {\tiny $\pm$  0.00} \\
$R=2$  & 0.04 {\tiny $\pm$  0.00}  & 0.04 {\tiny $\pm$  0.00}  & 0.02 {\tiny $\pm$  0.00}  & 0.01 {\tiny $\pm$  0.01}  & 0.03 {\tiny $\pm$  0.00}  & 0.02 {\tiny $\pm$  0.00} \\
$R=5$  & 0.06 {\tiny $\pm$  0.01}  & 0.04 {\tiny $\pm$  0.01}  & 0.04 {\tiny $\pm$  0.00}  & 0.05 {\tiny $\pm$  0.00}  & 0.05 {\tiny $\pm$  0.00}  & 0.03 {\tiny $\pm$  0.00} \\
$R=10$  & 0.07 {\tiny $\pm$  0.00}  & 0.05 {\tiny $\pm$  0.00}  & 0.03 {\tiny $\pm$  0.00}  & 0.06 {\tiny $\pm$  0.01}  & 0.05 {\tiny $\pm$  0.01}  & 0.02 {\tiny $\pm$  0.00} \\
\bottomrule \\
\end{tabular}
}

\resizebox{.58\linewidth}{!}{%
        \begin{tabular}{lcccccc
}
    \toprule
    \multirow{2}{*}{\makecell[l]{\textbf{CIFAR}}}
     & \multicolumn{3}{c}{\small $p=\infty$} & \multicolumn{3}{c}{\small $p=2$} \\
    \cmidrule(lr){2-4}\cmidrule(lr){5-7}
     & $\eps=0.030$ & $\eps=0.023$ & $\eps=0.015$ & $\eps=0.98$ & $\eps=0.88$ & $\eps=0.78$ \\ \midrule
$R=1$  & -0.63 {\tiny $\pm$  0.19}  & 0.01 {\tiny $\pm$  0.25}  & -0.31 {\tiny $\pm$  0.09}  & -0.30 {\tiny $\pm$  0.18}  & -0.11 {\tiny $\pm$  0.03}  & -0.07 {\tiny $\pm$  0.05} \\
$R=2$  & 0.43 {\tiny $\pm$  0.03}  & 0.43 {\tiny $\pm$  0.03}  & 0.23 {\tiny $\pm$  0.09}  & 0.43 {\tiny $\pm$  0.03}  & 0.19 {\tiny $\pm$  0.04}  & -0.01 {\tiny $\pm$  0.04} \\
$R=5$  & 0.06 {\tiny $\pm$  0.01}  & 0.06 {\tiny $\pm$  0.01}  & 0.06 {\tiny $\pm$  0.01}  & 0.06 {\tiny $\pm$  0.01}  & 0.06 {\tiny $\pm$  0.01}  & -0.02 {\tiny $\pm$  0.01} \\
$R=10$  & 0.00 {\tiny $\pm$  0.00}  & 0.00 {\tiny $\pm$  0.00}  & 0.00 {\tiny $\pm$  0.00}  & 0.00 {\tiny $\pm$  0.00}  & 0.00 {\tiny $\pm$  0.00}  & -0.01 {\tiny $\pm$  0.01} \\
\bottomrule \\
\end{tabular}
}

\resizebox{.58\linewidth}{!}{%
        \begin{tabular}{lcccccc
}
    \toprule
    \multirow{2}{*}{\makecell[l]{\textbf{ImageNet}}}
     & \multicolumn{3}{c}{\small $p=\infty$} & \multicolumn{3}{c}{\small $p=2$} \\
    \cmidrule(lr){2-4}\cmidrule(lr){5-7}
     & $\eps=0.090$ & $\eps=0.068$ & $\eps=0.045$ & $\eps=4.80$ & $\eps=3.60$ & $\eps=2.40$ \\ \midrule
$R=1$  & -0.01 {\tiny $\pm$  0.01}  & 0.02 {\tiny $\pm$  0.01}  & 0.00 {\tiny $\pm$  0.01}  & 0.00 {\tiny $\pm$  0.02}  & 0.02 {\tiny $\pm$  0.01}  & 0.00 {\tiny $\pm$  0.01} \\
$R=2$  & 0.04 {\tiny $\pm$  0.03}  & 0.07 {\tiny $\pm$  0.01}  & 0.04 {\tiny $\pm$  0.01}  & 0.08 {\tiny $\pm$  0.02}  & 0.05 {\tiny $\pm$  0.02}  & 0.02 {\tiny $\pm$  0.01} \\
$R=5$  & 0.31 {\tiny $\pm$  0.03}  & 0.15 {\tiny $\pm$  0.01}  & 0.07 {\tiny $\pm$  0.02}  & 0.22 {\tiny $\pm$  0.02}  & 0.15 {\tiny $\pm$  0.01}  & 0.05 {\tiny $\pm$  0.01} \\
$R=10$  & 0.45 {\tiny $\pm$  0.04}  & 0.30 {\tiny $\pm$  0.02}  & 0.14 {\tiny $\pm$  0.01}  & 0.50 {\tiny $\pm$  0.02}  & 0.30 {\tiny $\pm$  0.03}  & 0.15 {\tiny $\pm$  0.02} \\
\bottomrule \\
\end{tabular}
}

\end{table}
}

{
\setlength{\tabcolsep}{3pt} %

\begin{table}[tbp]
    \centering

    \caption{\small \textbf{Standard accuracy gap} $acc_R^{\tstd} - acc_R^{\trob,p,\eps}$ for various choices of $p$ and $\eps$ on {seven} dataset groups. Results are averaged over $5$ runs with different random seeds. (Corresponding to \nth{2} rows of~\Cref{fig:main,fig:app-fmnist,fig:app-syn}.)
    }
    \label{tab:acc}

\resizebox{.58\linewidth}{!}{%
        \begin{tabular}{lcccccc
}
    \toprule
    \multirow{2}{*}{\makecell[c]{\textbf{Synthetic}\\\textbf{Gaussian}}}
     & \multicolumn{3}{c}{\small $p=\infty$} & \multicolumn{3}{c}{\small $p=2$} \\
    \cmidrule(lr){2-4}\cmidrule(lr){5-7}
     & $\eps=1.00$ & $\eps=0.75$ & $\eps=0.50$ & $\eps=5.00$ & $\eps=3.50$ & $\eps=2.50$ \\ \midrule
$R=1$  & 0.16 {\tiny $\pm$  0.00}  & 0.12 {\tiny $\pm$  0.00}  & 0.05 {\tiny $\pm$  0.00}  & 0.26 {\tiny $\pm$  0.00}  & 0.20 {\tiny $\pm$  0.00}  & 0.13 {\tiny $\pm$  0.00} \\
$R=2$  & 0.15 {\tiny $\pm$  0.01}  & 0.11 {\tiny $\pm$  0.00}  & 0.05 {\tiny $\pm$  0.00}  & 0.25 {\tiny $\pm$  0.01}  & 0.20 {\tiny $\pm$  0.00}  & 0.12 {\tiny $\pm$  0.00} \\
$R=5$  & 0.14 {\tiny $\pm$  0.00}  & 0.10 {\tiny $\pm$  0.00}  & 0.04 {\tiny $\pm$  0.00}  & 0.16 {\tiny $\pm$  0.00}  & 0.15 {\tiny $\pm$  0.00}  & 0.10 {\tiny $\pm$  0.00} \\
$R=10$  & 0.09 {\tiny $\pm$  0.00}  & 0.07 {\tiny $\pm$  0.00}  & 0.03 {\tiny $\pm$  0.00}  & 0.09 {\tiny $\pm$  0.00}  & 0.09 {\tiny $\pm$  0.00}  & 0.06 {\tiny $\pm$  0.00} \\
\bottomrule \\
\end{tabular}
}

\resizebox{.58\linewidth}{!}{%
        \begin{tabular}{lcccccc
}
    \toprule
    \multirow{2}{*}{\makecell[c]{\textbf{Synthetic}\\\textbf{Cauchy}}}
     & \multicolumn{3}{c}{\small $p=\infty$} & \multicolumn{3}{c}{\small $p=2$} \\
    \cmidrule(lr){2-4}\cmidrule(lr){5-7}
     & $\eps=53.00$ & $\eps=39.75$ & $\eps=26.50$ & $\eps=70.0$ & $\eps=52.5$ & $\eps=35.0$ \\ \midrule
$R=1$  & -0.00 {\tiny $\pm$  0.01}  & -0.00 {\tiny $\pm$  0.01}  & -0.02 {\tiny $\pm$  0.00}  & 0.00 {\tiny $\pm$  0.02}  & -0.01 {\tiny $\pm$  0.01}  & 0.00 {\tiny $\pm$  0.01} \\
$R=2$  & -0.01 {\tiny $\pm$  0.00}  & -0.01 {\tiny $\pm$  0.00}  & -0.01 {\tiny $\pm$  0.00}  & -0.01 {\tiny $\pm$  0.00}  & -0.01 {\tiny $\pm$  0.00}  & -0.01 {\tiny $\pm$  0.00} \\
$R=5$  & -0.01 {\tiny $\pm$  0.00}  & -0.01 {\tiny $\pm$  0.00}  & -0.00 {\tiny $\pm$  0.00}  & -0.01 {\tiny $\pm$  0.00}  & -0.01 {\tiny $\pm$  0.00}  & -0.01 {\tiny $\pm$  0.00} \\
$R=10$  & -0.01 {\tiny $\pm$  0.00}  & -0.01 {\tiny $\pm$  0.00}  & -0.01 {\tiny $\pm$  0.00}  & -0.02 {\tiny $\pm$  0.00}  & -0.02 {\tiny $\pm$  0.00}  & -0.02 {\tiny $\pm$  0.00} \\
\bottomrule \\
\end{tabular}
}

\resizebox{.58\linewidth}{!}{%
        \begin{tabular}{lcccccc
}
    \toprule
    \multirow{2}{*}{\makecell[c]{\textbf{Synthetic}\\\textbf{Holtsmark}}}
     & \multicolumn{3}{c}{\small $p=\infty$} & \multicolumn{3}{c}{\small $p=2$} \\
    \cmidrule(lr){2-4}\cmidrule(lr){5-7}
     & $\eps=0.56$ & $\eps=0.42$ & $\eps=0.28$ & $\eps=2.94$ & $\eps=2.21$ & $\eps=1.47$ \\ \midrule
$R=1$  & 0.01 {\tiny $\pm$  0.00}  & -0.00 {\tiny $\pm$  0.00}  & -0.01 {\tiny $\pm$  0.00}  & 0.05 {\tiny $\pm$  0.01}  & 0.03 {\tiny $\pm$  0.00}  & -0.00 {\tiny $\pm$  0.00} \\
$R=2$  & 0.07 {\tiny $\pm$  0.02}  & 0.06 {\tiny $\pm$  0.00}  & 0.03 {\tiny $\pm$  0.00}  & 0.13 {\tiny $\pm$  0.01}  & 0.11 {\tiny $\pm$  0.00}  & 0.06 {\tiny $\pm$  0.00} \\
$R=5$  & 0.02 {\tiny $\pm$  0.00}  & 0.02 {\tiny $\pm$  0.00}  & 0.01 {\tiny $\pm$  0.00}  & 0.02 {\tiny $\pm$  0.00}  & 0.02 {\tiny $\pm$  0.00}  & 0.02 {\tiny $\pm$  0.00} \\
$R=10$  & -0.00 {\tiny $\pm$  0.00}  & -0.00 {\tiny $\pm$  0.00}  & -0.00 {\tiny $\pm$  0.00}  & -0.00 {\tiny $\pm$  0.00}  & -0.00 {\tiny $\pm$  0.00}  & -0.00 {\tiny $\pm$  0.00} \\\bottomrule \\
\end{tabular}
}

\resizebox{.58\linewidth}{!}{%
        \begin{tabular}{lcccccc
}
    \toprule
    \multirow{2}{*}{\makecell[l]{\textbf{MNIST}}}
     & \multicolumn{3}{c}{\small $p=\infty$} & \multicolumn{3}{c}{\small $p=2$} \\
    \cmidrule(lr){2-4}\cmidrule(lr){5-7}
     & $\eps=0.43$ & $\eps=0.33$ & $\eps=0.22$ & $\eps=2.70$ & $\eps=2.02$ & $\eps=1.35$ \\ \midrule
$R=1$  & 0.02 {\tiny $\pm$  0.00}  & 0.01 {\tiny $\pm$  0.00}  & 0.01 {\tiny $\pm$  0.00}  & 0.03 {\tiny $\pm$  0.00}  & 0.02 {\tiny $\pm$  0.00}  & 0.01 {\tiny $\pm$  0.00} \\
$R=2$  & 0.02 {\tiny $\pm$  0.00}  & 0.01 {\tiny $\pm$  0.00}  & 0.00 {\tiny $\pm$  0.00}  & 0.03 {\tiny $\pm$  0.00}  & 0.02 {\tiny $\pm$  0.00}  & 0.01 {\tiny $\pm$  0.00} \\
$R=5$  & 0.01 {\tiny $\pm$  0.00}  & 0.00 {\tiny $\pm$  0.00}  & -0.00 {\tiny $\pm$  0.00}  & 0.01 {\tiny $\pm$  0.00}  & 0.01 {\tiny $\pm$  0.00}  & 0.01 {\tiny $\pm$  0.00} \\
$R=10$  & 0.01 {\tiny $\pm$  0.00}  & 0.00 {\tiny $\pm$  0.00}  & 0.00 {\tiny $\pm$  0.00}  & 0.01 {\tiny $\pm$  0.00}  & 0.00 {\tiny $\pm$  0.00}  & 0.00 {\tiny $\pm$  0.00} \\
\bottomrule \\
\end{tabular}
}

\resizebox{.58\linewidth}{!}{%
        \begin{tabular}{lcccccc
}
    \toprule
    \multirow{2}{*}{\makecell[l]{\textbf{Fashion-}\\\textbf{MNIST}}}
     & \multicolumn{3}{c}{\small $p=\infty$} & \multicolumn{3}{c}{\small $p=2$} \\
    \cmidrule(lr){2-4}\cmidrule(lr){5-7}
     & $\eps=0.29$ & $\eps=0.22$ & $\eps=0.14$ & $\eps=3.23$ & $\eps=2.42$ & $\eps=1.62$ \\ \midrule
$R=1$  & 0.04 {\tiny $\pm$  0.00}  & 0.02 {\tiny $\pm$  0.00}  & 0.01 {\tiny $\pm$  0.00}  & 0.05 {\tiny $\pm$  0.00}  & 0.04 {\tiny $\pm$  0.00}  & 0.02 {\tiny $\pm$  0.00} \\
$R=2$  & 0.02 {\tiny $\pm$  0.00}  & 0.02 {\tiny $\pm$  0.00}  & 0.01 {\tiny $\pm$  0.00}  & 0.03 {\tiny $\pm$  0.00}  & 0.02 {\tiny $\pm$  0.00}  & 0.01 {\tiny $\pm$  0.00} \\
$R=5$  & 0.02 {\tiny $\pm$  0.00}  & 0.02 {\tiny $\pm$  0.00}  & 0.01 {\tiny $\pm$  0.00}  & 0.02 {\tiny $\pm$  0.00}  & 0.02 {\tiny $\pm$  0.00}  & 0.01 {\tiny $\pm$  0.00} \\
$R=10$  & 0.01 {\tiny $\pm$  0.00}  & 0.01 {\tiny $\pm$  0.00}  & 0.00 {\tiny $\pm$  0.00}  & 0.01 {\tiny $\pm$  0.00}  & 0.00 {\tiny $\pm$  0.00}  & 0.00 {\tiny $\pm$  0.00} \\
\bottomrule \\
\end{tabular}
}

\resizebox{.58\linewidth}{!}{%
        \begin{tabular}{lcccccc
}
    \toprule
    \multirow{2}{*}{\makecell[l]{\textbf{CIFAR}}}
     & \multicolumn{3}{c}{\small $p=\infty$} & \multicolumn{3}{c}{\small $p=2$} \\
    \cmidrule(lr){2-4}\cmidrule(lr){5-7}
     & $\eps=0.030$ & $\eps=0.023$ & $\eps=0.015$ & $\eps=0.98$ & $\eps=0.88$ & $\eps=0.78$ \\ \midrule
$R=1$  & 0.21 {\tiny $\pm$  0.01}  & 0.18 {\tiny $\pm$  0.01}  & 0.12 {\tiny $\pm$  0.05}  & 0.19 {\tiny $\pm$  0.01}  & 0.14 {\tiny $\pm$  0.01}  & -0.02 {\tiny $\pm$  0.01} \\
$R=2$  & 0.04 {\tiny $\pm$  0.00}  & 0.04 {\tiny $\pm$  0.00}  & -0.00 {\tiny $\pm$  0.02}  & 0.04 {\tiny $\pm$  0.00}  & 0.02 {\tiny $\pm$  0.00}  & -0.01 {\tiny $\pm$  0.01} \\
$R=5$  & 0.00 {\tiny $\pm$  0.00}  & 0.00 {\tiny $\pm$  0.00}  & 0.00 {\tiny $\pm$  0.00}  & 0.00 {\tiny $\pm$  0.00}  & 0.00 {\tiny $\pm$  0.00}  & -0.00 {\tiny $\pm$  0.00} \\
$R=10$  & 0.00 {\tiny $\pm$  0.00}  & 0.00 {\tiny $\pm$  0.00}  & 0.00 {\tiny $\pm$  0.00}  & 0.00 {\tiny $\pm$  0.00}  & 0.00 {\tiny $\pm$  0.00}  & -0.00 {\tiny $\pm$  0.00} \\
\bottomrule \\
\end{tabular}
}

\resizebox{.58\linewidth}{!}{%
        \begin{tabular}{lcccccc
}
    \toprule
    \multirow{2}{*}{\makecell[l]{\textbf{ImageNet}}}
     & \multicolumn{3}{c}{\small $p=\infty$} & \multicolumn{3}{c}{\small $p=2$} \\
    \cmidrule(lr){2-4}\cmidrule(lr){5-7}
     & $\eps=0.090$ & $\eps=0.068$ & $\eps=0.045$ & $\eps=4.80$ & $\eps=3.60$ & $\eps=2.40$ \\ \midrule
$R=1$  & 0.05 {\tiny $\pm$  0.00}  & 0.04 {\tiny $\pm$  0.00}  & 0.02 {\tiny $\pm$  0.00}  & 0.06 {\tiny $\pm$  0.00}  & 0.03 {\tiny $\pm$  0.00}  & 0.02 {\tiny $\pm$  0.00} \\
$R=2$  & 0.07 {\tiny $\pm$  0.00}  & 0.05 {\tiny $\pm$  0.00}  & 0.03 {\tiny $\pm$  0.00}  & 0.06 {\tiny $\pm$  0.00}  & 0.04 {\tiny $\pm$  0.00}  & 0.03 {\tiny $\pm$  0.00} \\
$R=5$  & 0.06 {\tiny $\pm$  0.00}  & 0.04 {\tiny $\pm$  0.00}  & 0.02 {\tiny $\pm$  0.00}  & 0.05 {\tiny $\pm$  0.00}  & 0.04 {\tiny $\pm$  0.00}  & 0.03 {\tiny $\pm$  0.00} \\
$R=10$  & 0.05 {\tiny $\pm$  0.00}  & 0.04 {\tiny $\pm$  0.00}  & 0.03 {\tiny $\pm$  0.00}  & 0.05 {\tiny $\pm$  0.00}  & 0.04 {\tiny $\pm$  0.00}  & 0.03 {\tiny $\pm$  0.00} \\
\bottomrule \\
\end{tabular}
}

\end{table}
}

{
\setlength{\tabcolsep}{3pt} %

\begin{table}[tbp]
    \centering

    \caption{\small \textbf{Per class accuracy} for the standard and robust classifiers on synthetic datasets. 
    We denote the majority class as ``class $-$'' and the minority class as ``class $+$'' following~\Cref{sec:pre}.
    ``std'' refers to the standard classifier and ``rob'' refers to the robust classifier with the specified $p$ and $\varepsilon$.
    The presented results are averaged over $5$ runs.
    }
    \label{tab:acc-per-class-syn}

\vspace{1mm}

\resizebox{.9\textwidth}{!}{%
        \begin{tabular}{lcccccccc
}
    \toprule
    \multirow{2}{*}{\makecell[c]{\textbf{Synthetic Gaussian}}}
     & \multicolumn{2}{c}{\small $R=1$} 
     & \multicolumn{2}{c}{\small $R=2$} 
     & \multicolumn{2}{c}{\small $R=5$} 
     & \multicolumn{2}{c}{\small $R=10$} 
     \\
    \cmidrule(lr){2-3}\cmidrule(lr){4-5}\cmidrule(lr){6-7}\cmidrule(lr){8-9}
     & class $-$ & class $+$
     & class $-$ & class $+$
     & class $-$ & class $+$
     & class $-$ & class $+$\\ \midrule
std & 1.00 {\tiny $\pm$  0.00}& 1.00 {\tiny $\pm$  0.00} & 1.00 {\tiny $\pm$  0.00}& 1.00 {\tiny $\pm$  0.00} & 1.00 {\tiny $\pm$  0.00}& 1.00 {\tiny $\pm$  0.00} & 1.00 {\tiny $\pm$  0.00}& 1.00 {\tiny $\pm$  0.00} \\\midrule
rob ($p=2$, $\varepsilon=5.00)$  & 0.85  {\tiny $\pm$  0.01} & 0.84  {\tiny $\pm$  0.01}  & 0.89  {\tiny $\pm$  0.02} & 0.73  {\tiny $\pm$  0.05}  & 0.99  {\tiny $\pm$  0.00} & 0.15  {\tiny $\pm$  0.02}  & 1.00  {\tiny $\pm$  0.00} & 0.05  {\tiny $\pm$  0.00} \\
rob ($p=2$, $\varepsilon=3.75)$  & 0.87  {\tiny $\pm$  0.00} & 0.89  {\tiny $\pm$  0.00}  & 0.95  {\tiny $\pm$  0.00} & 0.75  {\tiny $\pm$  0.01}  & 0.99  {\tiny $\pm$  0.00} & 0.43  {\tiny $\pm$  0.02}  & 1.00  {\tiny $\pm$  0.00} & 0.26  {\tiny $\pm$  0.01} \\
rob ($p=2$, $\varepsilon=2.50)$  & 0.94  {\tiny $\pm$  0.00} & 0.95  {\tiny $\pm$  0.00}  & 0.98  {\tiny $\pm$  0.00} & 0.89  {\tiny $\pm$  0.00}  & 0.99  {\tiny $\pm$  0.00} & 0.77  {\tiny $\pm$  0.01}  & 0.99  {\tiny $\pm$  0.00} & 0.72  {\tiny $\pm$  0.01} \\
\midrule
rob ($p=\infty$, $\varepsilon=1.00)$  & 0.74  {\tiny $\pm$  0.01} & 0.73  {\tiny $\pm$  0.01}  & 0.84  {\tiny $\pm$  0.03} & 0.56  {\tiny $\pm$  0.09}  & 1.00  {\tiny $\pm$  0.00} & 0.02  {\tiny $\pm$  0.00}  & 1.00  {\tiny $\pm$  0.00} & 0.00  {\tiny $\pm$  0.00} \\
rob ($p=\infty$, $\varepsilon=0.75)$  & 0.80  {\tiny $\pm$  0.01} & 0.80  {\tiny $\pm$  0.01}  & 0.90  {\tiny $\pm$  0.03} & 0.58  {\tiny $\pm$  0.07}  & 1.00  {\tiny $\pm$  0.00} & 0.09  {\tiny $\pm$  0.01}  & 1.00  {\tiny $\pm$  0.00} & 0.02  {\tiny $\pm$  0.00} \\
rob ($p=\infty$, $\varepsilon=0.50)$  & 0.86  {\tiny $\pm$  0.00} & 0.88  {\tiny $\pm$  0.00}  & 0.95  {\tiny $\pm$  0.01} & 0.74  {\tiny $\pm$  0.01}  & 0.99  {\tiny $\pm$  0.00} & 0.40  {\tiny $\pm$  0.02}  & 0.99  {\tiny $\pm$  0.00} & 0.32  {\tiny $\pm$  0.01} \\
\bottomrule
\\
\end{tabular}
}

\resizebox{.9\textwidth}{!}{%
        \begin{tabular}{lcccccccc
}
    \toprule
    \multirow{2}{*}{\makecell[c]{\textbf{Synthetic Cauchy}}}
     & \multicolumn{2}{c}{\small $R=1$} 
     & \multicolumn{2}{c}{\small $R=2$} 
     & \multicolumn{2}{c}{\small $R=5$} 
     & \multicolumn{2}{c}{\small $R=10$} 
     \\
    \cmidrule(lr){2-3}\cmidrule(lr){4-5}\cmidrule(lr){6-7}\cmidrule(lr){8-9}
     & class $-$ & class $+$
     & class $-$ & class $+$
     & class $-$ & class $+$
     & class $-$ & class $+$\\ \midrule
std & 0.50 {\tiny $\pm$  0.00}& 0.52 {\tiny $\pm$  0.01} & 0.95 {\tiny $\pm$  0.00}& 0.04 {\tiny $\pm$  0.00} & 0.98 {\tiny $\pm$  0.00}& 0.02 {\tiny $\pm$  0.00} & 0.98 {\tiny $\pm$  0.00}& 0.01 {\tiny $\pm$  0.00} \\\midrule
rob ($p=2$, $\varepsilon=70.00)$  & 0.48  {\tiny $\pm$  0.01} & 0.54  {\tiny $\pm$  0.01}  & 0.98  {\tiny $\pm$  0.00} & 0.02  {\tiny $\pm$  0.00}  & 0.99  {\tiny $\pm$  0.00} & 0.00  {\tiny $\pm$  0.00}  & 0.99  {\tiny $\pm$  0.00} & 0.00  {\tiny $\pm$  0.00} \\
rob ($p=2$, $\varepsilon=52.50)$  & 0.57  {\tiny $\pm$  0.05} & 0.46  {\tiny $\pm$  0.06}  & 0.98  {\tiny $\pm$  0.00} & 0.02  {\tiny $\pm$  0.00}  & 0.99  {\tiny $\pm$  0.00} & 0.01  {\tiny $\pm$  0.00}  & 0.99  {\tiny $\pm$  0.00} & 0.01  {\tiny $\pm$  0.00} \\
rob ($p=2$, $\varepsilon=35.00)$  & 0.51  {\tiny $\pm$  0.00} & 0.54  {\tiny $\pm$  0.01}  & 0.98  {\tiny $\pm$  0.00} & 0.02  {\tiny $\pm$  0.00}  & 0.99  {\tiny $\pm$  0.00} & 0.01  {\tiny $\pm$  0.00}  & 0.99  {\tiny $\pm$  0.00} & 0.01  {\tiny $\pm$  0.00} \\
\midrule
rob ($p=\infty$, $\varepsilon=53.00)$  & 0.49  {\tiny $\pm$  0.03} & 0.52  {\tiny $\pm$  0.04}  & 1.00  {\tiny $\pm$  0.00} & 0.01  {\tiny $\pm$  0.00}  & 1.00  {\tiny $\pm$  0.00} & 0.00  {\tiny $\pm$  0.00}  & 1.00  {\tiny $\pm$  0.00} & 0.00  {\tiny $\pm$  0.00} \\
rob ($p=\infty$, $\varepsilon=39.75)$  & 0.58  {\tiny $\pm$  0.08} & 0.45  {\tiny $\pm$  0.09}  & 1.00  {\tiny $\pm$  0.00} & 0.00  {\tiny $\pm$  0.00}  & 1.00  {\tiny $\pm$  0.00} & 0.00  {\tiny $\pm$  0.00}  & 1.00  {\tiny $\pm$  0.00} & 0.00  {\tiny $\pm$  0.00} \\
rob ($p=\infty$, $\varepsilon=26.50)$  & 0.51  {\tiny $\pm$  0.09} & 0.50  {\tiny $\pm$  0.09}  & 0.99  {\tiny $\pm$  0.00} & 0.01  {\tiny $\pm$  0.00}  & 1.00  {\tiny $\pm$  0.00} & 0.00  {\tiny $\pm$  0.00}  & 1.00  {\tiny $\pm$  0.00} & 0.00  {\tiny $\pm$  0.00} \\
\bottomrule
\\
\end{tabular}
}

\resizebox{.9\textwidth}{!}{%
        \begin{tabular}{lcccccccc
}
    \toprule
    \multirow{2}{*}{\makecell[c]{\textbf{Synthetic Holtsmark}}}
     & \multicolumn{2}{c}{\small $R=1$} 
     & \multicolumn{2}{c}{\small $R=2$} 
     & \multicolumn{2}{c}{\small $R=5$} 
     & \multicolumn{2}{c}{\small $R=10$} 
     \\
    \cmidrule(lr){2-3}\cmidrule(lr){4-5}\cmidrule(lr){6-7}\cmidrule(lr){8-9}
     & class $-$ & class $+$
     & class $-$ & class $+$
     & class $-$ & class $+$
     & class $-$ & class $+$\\ \midrule
std & 0.81 {\tiny $\pm$  0.00}& 0.82 {\tiny $\pm$  0.00} & 0.94 {\tiny $\pm$  0.00}& 0.60 {\tiny $\pm$  0.00} & 0.97 {\tiny $\pm$  0.00}& 0.27 {\tiny $\pm$  0.02} & 0.99 {\tiny $\pm$  0.00}& 0.10 {\tiny $\pm$  0.01} \\\midrule
rob ($p=2$, $\varepsilon=2.94)$  & 0.80  {\tiny $\pm$  0.01} & 0.82  {\tiny $\pm$  0.00}  & 0.97  {\tiny $\pm$  0.01} & 0.32  {\tiny $\pm$  0.07}  & 1.00  {\tiny $\pm$  0.00} & 0.02  {\tiny $\pm$  0.00}  & 1.00  {\tiny $\pm$  0.00} & 0.01  {\tiny $\pm$  0.00} \\
rob ($p=2$, $\varepsilon=2.21)$  & 0.81  {\tiny $\pm$  0.00} & 0.83  {\tiny $\pm$  0.00}  & 0.97  {\tiny $\pm$  0.00} & 0.35  {\tiny $\pm$  0.01}  & 0.99  {\tiny $\pm$  0.00} & 0.05  {\tiny $\pm$  0.01}  & 1.00  {\tiny $\pm$  0.00} & 0.03  {\tiny $\pm$  0.00} \\
rob ($p=2$, $\varepsilon=1.47)$  & 0.81  {\tiny $\pm$  0.00} & 0.83  {\tiny $\pm$  0.00}  & 0.95  {\tiny $\pm$  0.00} & 0.48  {\tiny $\pm$  0.01}  & 0.99  {\tiny $\pm$  0.00} & 0.10  {\tiny $\pm$  0.01}  & 0.99  {\tiny $\pm$  0.00} & 0.04  {\tiny $\pm$  0.00} \\
\midrule
rob ($p=\infty$, $\varepsilon=0.56)$  & 0.74  {\tiny $\pm$  0.01} & 0.79  {\tiny $\pm$  0.00}  & 0.98  {\tiny $\pm$  0.00} & 0.13  {\tiny $\pm$  0.03}  & 1.00  {\tiny $\pm$  0.00} & 0.02  {\tiny $\pm$  0.00}  & 1.00  {\tiny $\pm$  0.00} & 0.01  {\tiny $\pm$  0.00} \\
rob ($p=\infty$, $\varepsilon=0.42)$  & 0.77  {\tiny $\pm$  0.01} & 0.80  {\tiny $\pm$  0.00}  & 0.98  {\tiny $\pm$  0.00} & 0.14  {\tiny $\pm$  0.01}  & 0.99  {\tiny $\pm$  0.00} & 0.03  {\tiny $\pm$  0.01}  & 1.00  {\tiny $\pm$  0.00} & 0.01  {\tiny $\pm$  0.00} \\
rob ($p=\infty$, $\varepsilon=0.28)$  & 0.81  {\tiny $\pm$  0.00} & 0.82  {\tiny $\pm$  0.00}  & 0.97  {\tiny $\pm$  0.00} & 0.35  {\tiny $\pm$  0.00}  & 0.99  {\tiny $\pm$  0.00} & 0.05  {\tiny $\pm$  0.01}  & 0.99  {\tiny $\pm$  0.00} & 0.02  {\tiny $\pm$  0.00} \\
\bottomrule
\\
\end{tabular}
}

\end{table}

\begin{table}[tbp]
    \centering

    \caption{\small \textbf{Per class accuracy} for the standard and robust classifiers on real-world datasets. 
    We denote the majority class as ``class $-$'' and the minority class as ``class $+$'' following~\Cref{sec:pre}.
    ``std'' refers to the standard classifier and ``rob'' refers to the robust classifier with the specified $p$ and $\varepsilon$.
    The presented results are averaged over $5$ runs.
    }
    \label{tab:acc-per-class-real}

\vspace{1mm}

\resizebox{.9\textwidth}{!}{%
        \begin{tabular}{lcccccccc
}
    \toprule
    \multirow{2}{*}{\makecell[c]{\textbf{MNIST}}}
     & \multicolumn{2}{c}{\small $R=1$} 
     & \multicolumn{2}{c}{\small $R=2$} 
     & \multicolumn{2}{c}{\small $R=5$} 
     & \multicolumn{2}{c}{\small $R=10$} 
     \\
    \cmidrule(lr){2-3}\cmidrule(lr){4-5}\cmidrule(lr){6-7}\cmidrule(lr){8-9}
     & class $-$ & class $+$
     & class $-$ & class $+$
     & class $-$ & class $+$
     & class $-$ & class $+$\\ \midrule
std & 1.00 {\tiny $\pm$  0.00}& 0.99 {\tiny $\pm$  0.00} & 1.00 {\tiny $\pm$  0.00}& 0.99 {\tiny $\pm$  0.00} & 1.00 {\tiny $\pm$  0.00}& 0.98 {\tiny $\pm$  0.00} & 1.00 {\tiny $\pm$  0.00}& 0.97 {\tiny $\pm$  0.00} \\\midrule
rob ($p=2$, $\varepsilon=2.70)$  & 0.99  {\tiny $\pm$  0.00} & 0.96  {\tiny $\pm$  0.00}  & 1.00  {\tiny $\pm$  0.00} & 0.95  {\tiny $\pm$  0.00}  & 1.00  {\tiny $\pm$  0.00} & 0.91  {\tiny $\pm$  0.00}  & 1.00  {\tiny $\pm$  0.00} & 0.88  {\tiny $\pm$  0.00} \\
rob ($p=2$, $\varepsilon=2.02)$  & 1.00  {\tiny $\pm$  0.00} & 0.97  {\tiny $\pm$  0.00}  & 1.00  {\tiny $\pm$  0.00} & 0.96  {\tiny $\pm$  0.00}  & 1.00  {\tiny $\pm$  0.00} & 0.94  {\tiny $\pm$  0.00}  & 1.00  {\tiny $\pm$  0.00} & 0.93  {\tiny $\pm$  0.00} \\
rob ($p=2$, $\varepsilon=1.35)$  & 1.00  {\tiny $\pm$  0.00} & 0.98  {\tiny $\pm$  0.00}  & 1.00  {\tiny $\pm$  0.00} & 0.97  {\tiny $\pm$  0.00}  & 1.00  {\tiny $\pm$  0.00} & 0.96  {\tiny $\pm$  0.00}  & 1.00  {\tiny $\pm$  0.00} & 0.95  {\tiny $\pm$  0.00} \\
\midrule
rob ($p=\infty$, $\varepsilon=0.43)$  & 0.98  {\tiny $\pm$  0.00} & 0.96  {\tiny $\pm$  0.00}  & 0.99  {\tiny $\pm$  0.00} & 0.93  {\tiny $\pm$  0.00}  & 0.99  {\tiny $\pm$  0.00} & 0.91  {\tiny $\pm$  0.01}  & 1.00  {\tiny $\pm$  0.00} & 0.87  {\tiny $\pm$  0.00} \\
rob ($p=\infty$, $\varepsilon=0.33)$  & 0.99  {\tiny $\pm$  0.00} & 0.96  {\tiny $\pm$  0.00}  & 1.00  {\tiny $\pm$  0.00} & 0.94  {\tiny $\pm$  0.01}  & 1.00  {\tiny $\pm$  0.00} & 0.92  {\tiny $\pm$  0.00}  & 1.00  {\tiny $\pm$  0.00} & 0.88  {\tiny $\pm$  0.01} \\
rob ($p=\infty$, $\varepsilon=0.22)$  & 0.99  {\tiny $\pm$  0.00} & 0.97  {\tiny $\pm$  0.00}  & 0.99  {\tiny $\pm$  0.00} & 0.96  {\tiny $\pm$  0.00}  & 1.00  {\tiny $\pm$  0.00} & 0.93  {\tiny $\pm$  0.00}  & 1.00  {\tiny $\pm$  0.00} & 0.92  {\tiny $\pm$  0.00} \\
\bottomrule
\\
\end{tabular}
}

\resizebox{.9\textwidth}{!}{%
        \begin{tabular}{lcccccccc
}
    \toprule
    \multirow{2}{*}{\makecell[c]{\textbf{Fashion-MNIST}}}
     & \multicolumn{2}{c}{\small $R=1$} 
     & \multicolumn{2}{c}{\small $R=2$} 
     & \multicolumn{2}{c}{\small $R=5$} 
     & \multicolumn{2}{c}{\small $R=10$} 
     \\
    \cmidrule(lr){2-3}\cmidrule(lr){4-5}\cmidrule(lr){6-7}\cmidrule(lr){8-9}
     & class $-$ & class $+$
     & class $-$ & class $+$
     & class $-$ & class $+$
     & class $-$ & class $+$\\ \midrule
std & 0.98 {\tiny $\pm$  0.00}& 0.99 {\tiny $\pm$  0.00} & 0.98 {\tiny $\pm$  0.00}& 0.99 {\tiny $\pm$  0.00} & 0.97 {\tiny $\pm$  0.00}& 1.00 {\tiny $\pm$  0.00} & 0.95 {\tiny $\pm$  0.00}& 1.00 {\tiny $\pm$  0.00} \\\midrule
rob ($p=2$, $\varepsilon=3.23)$  & 0.94  {\tiny $\pm$  0.00} & 0.96  {\tiny $\pm$  0.00}  & 0.92  {\tiny $\pm$  0.00} & 0.98  {\tiny $\pm$  0.00}  & 0.90  {\tiny $\pm$  0.00} & 0.99  {\tiny $\pm$  0.00}  & 0.87  {\tiny $\pm$  0.00} & 1.00  {\tiny $\pm$  0.00} \\
rob ($p=2$, $\varepsilon=2.42)$  & 0.96  {\tiny $\pm$  0.00} & 0.97  {\tiny $\pm$  0.00}  & 0.93  {\tiny $\pm$  0.00} & 0.99  {\tiny $\pm$  0.00}  & 0.92  {\tiny $\pm$  0.00} & 0.99  {\tiny $\pm$  0.00}  & 0.89  {\tiny $\pm$  0.00} & 1.00  {\tiny $\pm$  0.00} \\
rob ($p=2$, $\varepsilon=1.62)$  & 0.97  {\tiny $\pm$  0.00} & 0.99  {\tiny $\pm$  0.00}  & 0.95  {\tiny $\pm$  0.00} & 0.99  {\tiny $\pm$  0.00}  & 0.93  {\tiny $\pm$  0.00} & 1.00  {\tiny $\pm$  0.00}  & 0.92  {\tiny $\pm$  0.00} & 1.00  {\tiny $\pm$  0.00} \\
\midrule
rob ($p=\infty$, $\varepsilon=0.29)$  & 0.95  {\tiny $\pm$  0.00} & 0.93  {\tiny $\pm$  0.01}  & 0.94  {\tiny $\pm$  0.01} & 0.97  {\tiny $\pm$  0.00}  & 0.91  {\tiny $\pm$  0.00} & 0.99  {\tiny $\pm$  0.00}  & 0.89  {\tiny $\pm$  0.01} & 1.00  {\tiny $\pm$  0.00} \\
rob ($p=\infty$, $\varepsilon=0.22)$  & 0.95  {\tiny $\pm$  0.01} & 0.94  {\tiny $\pm$  0.01}  & 0.93  {\tiny $\pm$  0.00} & 0.98  {\tiny $\pm$  0.00}  & 0.91  {\tiny $\pm$  0.00} & 0.99  {\tiny $\pm$  0.00}  & 0.89  {\tiny $\pm$  0.00} & 1.00  {\tiny $\pm$  0.00} \\
rob ($p=\infty$, $\varepsilon=0.14)$  & 0.96  {\tiny $\pm$  0.00} & 0.98  {\tiny $\pm$  0.00}  & 0.95  {\tiny $\pm$  0.00} & 0.99  {\tiny $\pm$  0.00}  & 0.94  {\tiny $\pm$  0.00} & 1.00  {\tiny $\pm$  0.00}  & 0.93  {\tiny $\pm$  0.00} & 1.00  {\tiny $\pm$  0.00} \\
\bottomrule
\\
\end{tabular}
}

\resizebox{.9\textwidth}{!}{%
        \begin{tabular}{lcccccccc
}
    \toprule
    \multirow{2}{*}{\makecell[c]{\textbf{CIFAR}}}
     & \multicolumn{2}{c}{\small $R=1$} 
     & \multicolumn{2}{c}{\small $R=2$} 
     & \multicolumn{2}{c}{\small $R=5$} 
     & \multicolumn{2}{c}{\small $R=10$} 
     \\
    \cmidrule(lr){2-3}\cmidrule(lr){4-5}\cmidrule(lr){6-7}\cmidrule(lr){8-9}
     & class $-$ & class $+$
     & class $-$ & class $+$
     & class $-$ & class $+$
     & class $-$ & class $+$\\ \midrule
std & 0.71 {\tiny $\pm$  0.01}& 0.74 {\tiny $\pm$  0.01} & 0.90 {\tiny $\pm$  0.01}& 0.33 {\tiny $\pm$  0.02} & 0.99 {\tiny $\pm$  0.00}& 0.05 {\tiny $\pm$  0.01} & 1.00 {\tiny $\pm$  0.00}& 0.00 {\tiny $\pm$  0.00} \\\midrule
rob ($p=2$, $\varepsilon=0.98)$  & 0.19  {\tiny $\pm$  0.10} & 0.85  {\tiny $\pm$  0.08}  & 1.00  {\tiny $\pm$  0.00} & 0.00  {\tiny $\pm$  0.00}  & 1.00  {\tiny $\pm$  0.00} & 0.00  {\tiny $\pm$  0.00}  & 1.00  {\tiny $\pm$  0.00} & 0.00  {\tiny $\pm$  0.00} \\
rob ($p=2$, $\varepsilon=0.88)$  & 0.53  {\tiny $\pm$  0.12} & 0.54  {\tiny $\pm$  0.14}  & 1.00  {\tiny $\pm$  0.00} & 0.00  {\tiny $\pm$  0.00}  & 1.00  {\tiny $\pm$  0.00} & 0.00  {\tiny $\pm$  0.00}  & 1.00  {\tiny $\pm$  0.00} & 0.00  {\tiny $\pm$  0.00} \\
rob ($p=2$, $\varepsilon=0.78)$  & 0.44  {\tiny $\pm$  0.08} & 0.77  {\tiny $\pm$  0.04}  & 0.98  {\tiny $\pm$  0.01} & 0.18  {\tiny $\pm$  0.07}  & 1.00  {\tiny $\pm$  0.00} & 0.00  {\tiny $\pm$  0.00}  & 1.00  {\tiny $\pm$  0.00} & 0.00  {\tiny $\pm$  0.00} \\
\midrule
rob ($p=\infty$, $\varepsilon=0.030)$  & 0.37  {\tiny $\pm$  0.10} & 0.70  {\tiny $\pm$  0.09}  & 1.00  {\tiny $\pm$  0.00} & 0.00  {\tiny $\pm$  0.00}  & 1.00  {\tiny $\pm$  0.00} & 0.00  {\tiny $\pm$  0.00}  & 1.00  {\tiny $\pm$  0.00} & 0.00  {\tiny $\pm$  0.00} \\
rob ($p=\infty$, $\varepsilon=0.023)$  & 0.51  {\tiny $\pm$  0.02} & 0.65  {\tiny $\pm$  0.01}  & 0.94  {\tiny $\pm$  0.01} & 0.18  {\tiny $\pm$  0.02}  & 1.00  {\tiny $\pm$  0.00} & 0.00  {\tiny $\pm$  0.00}  & 1.00  {\tiny $\pm$  0.00} & 0.00  {\tiny $\pm$  0.00} \\
rob ($p=\infty$, $\varepsilon=0.015)$  & 0.69  {\tiny $\pm$  0.03} & 0.79  {\tiny $\pm$  0.02}  & 0.91  {\tiny $\pm$  0.01} & 0.36  {\tiny $\pm$  0.04}  & 0.99  {\tiny $\pm$  0.00} & 0.07  {\tiny $\pm$  0.01}  & 1.00  {\tiny $\pm$  0.00} & 0.01  {\tiny $\pm$  0.01} \\
\bottomrule
\\
\end{tabular}
}

{
\resizebox{.9\textwidth}{!}{%
        \begin{tabular}{lcccccccc
}
    \toprule
    \multirow{2}{*}{\makecell[c]{\textbf{ImageNet}}}
     & \multicolumn{2}{c}{\small $R=1$} 
     & \multicolumn{2}{c}{\small $R=2$} 
     & \multicolumn{2}{c}{\small $R=5$} 
     & \multicolumn{2}{c}{\small $R=10$} 
     \\
    \cmidrule(lr){2-3}\cmidrule(lr){4-5}\cmidrule(lr){6-7}\cmidrule(lr){8-9}
     & class $-$ & class $+$
     & class $-$ & class $+$
     & class $-$ & class $+$
     & class $-$ & class $+$\\ \midrule
std & 0.99 {\tiny $\pm$  0.00}& 0.99 {\tiny $\pm$  0.00} & 1.00 {\tiny $\pm$  0.00}& 0.98 {\tiny $\pm$  0.00} & 1.00 {\tiny $\pm$  0.00}& 0.97 {\tiny $\pm$  0.00} & 1.00 {\tiny $\pm$  0.00}& 0.95 {\tiny $\pm$  0.00} \\\midrule
rob ($p=2$, $\varepsilon=4.80)$  & 0.93  {\tiny $\pm$  0.01} & 0.94  {\tiny $\pm$  0.01}  & 0.95  {\tiny $\pm$  0.01} & 0.89  {\tiny $\pm$  0.02}  & 0.99  {\tiny $\pm$  0.00} & 0.66  {\tiny $\pm$  0.03}  & 1.00  {\tiny $\pm$  0.00} & 0.49  {\tiny $\pm$  0.04} \\
rob ($p=2$, $\varepsilon=3.60)$  & 0.96  {\tiny $\pm$  0.00} & 0.93  {\tiny $\pm$  0.01}  & 0.98  {\tiny $\pm$  0.00} & 0.89  {\tiny $\pm$  0.01}  & 0.99  {\tiny $\pm$  0.00} & 0.81  {\tiny $\pm$  0.02}  & 0.99  {\tiny $\pm$  0.00} & 0.64  {\tiny $\pm$  0.03} \\
rob ($p=2$, $\varepsilon=2.40)$  & 0.97  {\tiny $\pm$  0.00} & 0.96  {\tiny $\pm$  0.00}  & 0.98  {\tiny $\pm$  0.00} & 0.92  {\tiny $\pm$  0.01}  & 0.99  {\tiny $\pm$  0.00} & 0.89  {\tiny $\pm$  0.01}  & 0.99  {\tiny $\pm$  0.00} & 0.79  {\tiny $\pm$  0.01} \\
\midrule
rob ($p=\infty$, $\varepsilon=0.090)$  & 0.93  {\tiny $\pm$  0.01} & 0.92  {\tiny $\pm$  0.01}  & 0.96  {\tiny $\pm$  0.01} & 0.87  {\tiny $\pm$  0.01}  & 0.99  {\tiny $\pm$  0.00} & 0.74  {\tiny $\pm$  0.02}  & 1.00  {\tiny $\pm$  0.00} & 0.45  {\tiny $\pm$  0.02} \\
rob ($p=\infty$, $\varepsilon=0.068)$  & 0.97  {\tiny $\pm$  0.00} & 0.94  {\tiny $\pm$  0.00}  & 0.97  {\tiny $\pm$  0.00} & 0.91  {\tiny $\pm$  0.01}  & 0.99  {\tiny $\pm$  0.00} & 0.81  {\tiny $\pm$  0.01}  & 0.99  {\tiny $\pm$  0.00} & 0.65  {\tiny $\pm$  0.03} \\
rob ($p=\infty$, $\varepsilon=0.045)$  & 0.97  {\tiny $\pm$  0.00} & 0.96  {\tiny $\pm$  0.00}  & 0.98  {\tiny $\pm$  0.00} & 0.94  {\tiny $\pm$  0.00}  & 0.98  {\tiny $\pm$  0.00} & 0.91  {\tiny $\pm$  0.01}  & 0.99  {\tiny $\pm$  0.00} & 0.79  {\tiny $\pm$  0.02} \\
\bottomrule
\\
\end{tabular}
}

}

\end{table}

}

{
\setlength{\tabcolsep}{3pt} %

\begin{table}[tbp]
    \centering

    \caption{\small \textbf{Overall accuracy} for the standard and robust classifiers on synthetic datasets. 
    ``std'' refers to the standard classifier and ``rob'' refers to the robust classifier with the specified $p$ and $\varepsilon$.
    The presented results are averaged over $5$ runs.
    }
    \label{tab:acc-overall-1}

\vspace{1mm}

\resizebox{.6\textwidth}{!}{%
        \begin{tabular}{lcccc
}
    \toprule
    \multirow{1}{*}{\makecell[c]{\textbf{Synthetic Gaussian}}}
     & \multicolumn{1}{c}{\small $R=1$} 
     & \multicolumn{1}{c}{\small $R=2$} 
     & \multicolumn{1}{c}{\small $R=5$} 
     & \multicolumn{1}{c}{\small $R=10$} 
     \\\midrule
std & 1.00 {\tiny $\pm$  0.00} & 1.00 {\tiny $\pm$  0.00} & 1.00 {\tiny $\pm$  0.00} & 1.00 {\tiny $\pm$  0.00} \\\midrule
rob ($p=2$, $\varepsilon=5.00)$  & 0.87  {\tiny $\pm$  0.01}  & 0.86  {\tiny $\pm$  0.01}  & 0.87  {\tiny $\pm$  0.01}  & 0.86  {\tiny $\pm$  0.01} \\
rob ($p=2$, $\varepsilon=3.75)$  & 0.90  {\tiny $\pm$  0.01}  & 0.90  {\tiny $\pm$  0.01}  & 0.90  {\tiny $\pm$  0.01}  & 0.90  {\tiny $\pm$  0.01} \\
rob ($p=2$, $\varepsilon=2.50)$  & 0.96  {\tiny $\pm$  0.01}  & 0.96  {\tiny $\pm$  0.00}  & 0.96  {\tiny $\pm$  0.00}  & 0.95  {\tiny $\pm$  0.01} \\
\midrule
rob ($p=\infty$, $\varepsilon=1.00)$  & 0.82  {\tiny $\pm$  0.03}  & 0.81  {\tiny $\pm$  0.03}  & 0.81  {\tiny $\pm$  0.03}  & 0.80  {\tiny $\pm$  0.03} \\
rob ($p=\infty$, $\varepsilon=0.75)$  & 0.84  {\tiny $\pm$  0.02}  & 0.84  {\tiny $\pm$  0.02}  & 0.84  {\tiny $\pm$  0.02}  & 0.85  {\tiny $\pm$  0.02} \\
rob ($p=\infty$, $\varepsilon=0.50)$  & 0.90  {\tiny $\pm$  0.01}  & 0.89  {\tiny $\pm$  0.01}  & 0.89  {\tiny $\pm$  0.01}  & 0.90  {\tiny $\pm$  0.01} \\
\bottomrule
\\
\end{tabular}
}

\resizebox{.6\textwidth}{!}{%
        \begin{tabular}{lcccc
}
    \toprule
    \multirow{1}{*}{\makecell[c]{\textbf{Synthetic Cauchy}}}
     & \multicolumn{1}{c}{\small $R=1$} 
     & \multicolumn{1}{c}{\small $R=2$} 
     & \multicolumn{1}{c}{\small $R=5$} 
     & \multicolumn{1}{c}{\small $R=10$} 
     \\\midrule
std & 1.00 {\tiny $\pm$  0.00} & 1.00 {\tiny $\pm$  0.00} & 1.00 {\tiny $\pm$  0.00} & 1.00 {\tiny $\pm$  0.00} \\\midrule
rob ($p=2$, $\varepsilon=5.00)$  & 0.84  {\tiny $\pm$  0.00}  & 0.85  {\tiny $\pm$  0.01}  & 0.86  {\tiny $\pm$  0.00}  & 0.91  {\tiny $\pm$  0.00} \\
rob ($p=2$, $\varepsilon=3.75)$  & 0.88  {\tiny $\pm$  0.00}  & 0.89  {\tiny $\pm$  0.00}  & 0.90  {\tiny $\pm$  0.00}  & 0.93  {\tiny $\pm$  0.00} \\
rob ($p=2$, $\varepsilon=2.50)$  & 0.95  {\tiny $\pm$  0.00}  & 0.95  {\tiny $\pm$  0.00}  & 0.96  {\tiny $\pm$  0.00}  & 0.97  {\tiny $\pm$  0.00} \\
\midrule
rob ($p=\infty$, $\varepsilon=1.00)$  & 0.74  {\tiny $\pm$  0.00}  & 0.75  {\tiny $\pm$  0.01}  & 0.84  {\tiny $\pm$  0.00}  & 0.91  {\tiny $\pm$  0.00} \\
rob ($p=\infty$, $\varepsilon=0.75)$  & 0.80  {\tiny $\pm$  0.00}  & 0.80  {\tiny $\pm$  0.00}  & 0.85  {\tiny $\pm$  0.00}  & 0.91  {\tiny $\pm$  0.00} \\
rob ($p=\infty$, $\varepsilon=0.50)$  & 0.87  {\tiny $\pm$  0.00}  & 0.88  {\tiny $\pm$  0.00}  & 0.90  {\tiny $\pm$  0.00}  & 0.94  {\tiny $\pm$  0.00} \\
\bottomrule
\\
\end{tabular}
}

\resizebox{.6\textwidth}{!}{%
        \begin{tabular}{lcccc
}
    \toprule
    \multirow{1}{*}{\makecell[c]{\textbf{Synthetic Holtsmark}}}
     & \multicolumn{1}{c}{\small $R=1$} 
     & \multicolumn{1}{c}{\small $R=2$} 
     & \multicolumn{1}{c}{\small $R=5$} 
     & \multicolumn{1}{c}{\small $R=10$} 
     \\\midrule
std & 0.51 {\tiny $\pm$  0.00} & 0.65 {\tiny $\pm$  0.00} & 0.82 {\tiny $\pm$  0.00} & 0.89 {\tiny $\pm$  0.00} \\\midrule
rob ($p=2$, $\varepsilon=70.00)$  & 0.51  {\tiny $\pm$  0.00}  & 0.66  {\tiny $\pm$  0.00}  & 0.83  {\tiny $\pm$  0.00}  & 0.90  {\tiny $\pm$  0.00} \\
rob ($p=2$, $\varepsilon=52.50)$  & 0.51  {\tiny $\pm$  0.01}  & 0.66  {\tiny $\pm$  0.00}  & 0.83  {\tiny $\pm$  0.00}  & 0.90  {\tiny $\pm$  0.00} \\
rob ($p=2$, $\varepsilon=35.00)$  & 0.53  {\tiny $\pm$  0.00}  & 0.66  {\tiny $\pm$  0.00}  & 0.82  {\tiny $\pm$  0.00}  & 0.90  {\tiny $\pm$  0.00} \\
\midrule
rob ($p=\infty$, $\varepsilon=53.00)$  & 0.51  {\tiny $\pm$  0.01}  & 0.67  {\tiny $\pm$  0.00}  & 0.83  {\tiny $\pm$  0.00}  & 0.91  {\tiny $\pm$  0.00} \\
rob ($p=\infty$, $\varepsilon=39.75)$  & 0.52  {\tiny $\pm$  0.01}  & 0.67  {\tiny $\pm$  0.00}  & 0.83  {\tiny $\pm$  0.00}  & 0.91  {\tiny $\pm$  0.00} \\
rob ($p=\infty$, $\varepsilon=26.50)$  & 0.50  {\tiny $\pm$  0.01}  & 0.66  {\tiny $\pm$  0.00}  & 0.83  {\tiny $\pm$  0.00}  & 0.91  {\tiny $\pm$  0.00} \\
\bottomrule
\\
\end{tabular}
}

\end{table}

\begin{table}[tbp]
    \centering

    \caption{\small \textbf{Overall accuracy} for the standard and robust classifiers on real-world datasets. 
    ``std'' refers to the standard classifier and ``rob'' refers to the robust classifier with the specified $p$ and $\varepsilon$.
    The presented results are averaged over $5$ runs.
    }
    \label{tab:acc-overall-2}

\vspace{1mm}

\resizebox{.6\textwidth}{!}{%
        \begin{tabular}{lcccc
}
    \toprule
    \multirow{1}{*}{\makecell[c]{\textbf{MNIST}}}
     & \multicolumn{1}{c}{\small $R=1$} 
     & \multicolumn{1}{c}{\small $R=2$} 
     & \multicolumn{1}{c}{\small $R=5$} 
     & \multicolumn{1}{c}{\small $R=10$} 
     \\\midrule
std & 1.00 {\tiny $\pm$  0.00} & 0.99 {\tiny $\pm$  0.00} & 0.99 {\tiny $\pm$  0.00} & 0.99 {\tiny $\pm$  0.00} \\\midrule
rob ($p=2$, $\varepsilon=2.70)$  & 0.98  {\tiny $\pm$  0.00}  & 0.98  {\tiny $\pm$  0.00}  & 0.98  {\tiny $\pm$  0.00}  & 0.98  {\tiny $\pm$  0.00} \\
rob ($p=2$, $\varepsilon=2.02)$  & 0.98  {\tiny $\pm$  0.00}  & 0.98  {\tiny $\pm$  0.00}  & 0.99  {\tiny $\pm$  0.00}  & 0.99  {\tiny $\pm$  0.00} \\
rob ($p=2$, $\varepsilon=1.35)$  & 0.99  {\tiny $\pm$  0.00}  & 0.99  {\tiny $\pm$  0.00}  & 0.99  {\tiny $\pm$  0.00}  & 0.99  {\tiny $\pm$  0.00} \\
\midrule
rob ($p=\infty$, $\varepsilon=0.43)$  & 0.97  {\tiny $\pm$  0.00}  & 0.97  {\tiny $\pm$  0.00}  & 0.98  {\tiny $\pm$  0.00}  & 0.99  {\tiny $\pm$  0.00} \\
rob ($p=\infty$, $\varepsilon=0.33)$  & 0.98  {\tiny $\pm$  0.00}  & 0.98  {\tiny $\pm$  0.00}  & 0.99  {\tiny $\pm$  0.00}  & 0.99  {\tiny $\pm$  0.00} \\
rob ($p=\infty$, $\varepsilon=0.22)$  & 0.98  {\tiny $\pm$  0.00}  & 0.98  {\tiny $\pm$  0.00}  & 0.99  {\tiny $\pm$  0.00}  & 0.99  {\tiny $\pm$  0.00} \\
\bottomrule
\\
\end{tabular}
}

\resizebox{.6\textwidth}{!}{%
        \begin{tabular}{lcccc
}
    \toprule
    \multirow{1}{*}{\makecell[c]{\textbf{Fashion-MNIST}}}
     & \multicolumn{1}{c}{\small $R=1$} 
     & \multicolumn{1}{c}{\small $R=2$} 
     & \multicolumn{1}{c}{\small $R=5$} 
     & \multicolumn{1}{c}{\small $R=10$} 
     \\\midrule
std & 0.99 {\tiny $\pm$  0.00} & 0.99 {\tiny $\pm$  0.00} & 0.99 {\tiny $\pm$  0.00} & 0.99 {\tiny $\pm$  0.00} \\\midrule
rob ($p=2$, $\varepsilon=3.23)$  & 0.95  {\tiny $\pm$  0.00}  & 0.96  {\tiny $\pm$  0.00}  & 0.97  {\tiny $\pm$  0.00}  & 0.98  {\tiny $\pm$  0.00} \\
rob ($p=2$, $\varepsilon=2.42)$  & 0.96  {\tiny $\pm$  0.00}  & 0.96  {\tiny $\pm$  0.00}  & 0.97  {\tiny $\pm$  0.00}  & 0.98  {\tiny $\pm$  0.00} \\
rob ($p=2$, $\varepsilon=1.62)$  & 0.98  {\tiny $\pm$  0.00}  & 0.98  {\tiny $\pm$  0.00}  & 0.99  {\tiny $\pm$  0.00}  & 0.99  {\tiny $\pm$  0.00} \\
\midrule
rob ($p=\infty$, $\varepsilon=0.29)$  & 0.94  {\tiny $\pm$  0.00}  & 0.96  {\tiny $\pm$  0.00}  & 0.97  {\tiny $\pm$  0.00}  & 0.98  {\tiny $\pm$  0.00} \\
rob ($p=\infty$, $\varepsilon=0.22)$  & 0.95  {\tiny $\pm$  0.00}  & 0.96  {\tiny $\pm$  0.00}  & 0.98  {\tiny $\pm$  0.00}  & 0.99  {\tiny $\pm$  0.00} \\
rob ($p=\infty$, $\varepsilon=0.14)$  & 0.97  {\tiny $\pm$  0.00}  & 0.98  {\tiny $\pm$  0.00}  & 0.99  {\tiny $\pm$  0.00}  & 0.99  {\tiny $\pm$  0.00} \\
\bottomrule
\\
\end{tabular}
}

\resizebox{.6\textwidth}{!}{%
        \begin{tabular}{lcccc
}
    \toprule
    \multirow{1}{*}{\makecell[c]{\textbf{CIFAR}}}
     & \multicolumn{1}{c}{\small $R=1$} 
     & \multicolumn{1}{c}{\small $R=2$} 
     & \multicolumn{1}{c}{\small $R=5$} 
     & \multicolumn{1}{c}{\small $R=10$} 
     \\\midrule
std & 0.72 {\tiny $\pm$  0.00} & 0.71 {\tiny $\pm$  0.00} & 0.84 {\tiny $\pm$  0.00} & 0.91 {\tiny $\pm$  0.00} \\\midrule
rob ($p=2$, $\varepsilon=0.98)$  & 0.52  {\tiny $\pm$  0.01}  & 0.67  {\tiny $\pm$  0.00}  & 0.83  {\tiny $\pm$  0.00}  & 0.91  {\tiny $\pm$  0.00} \\
rob ($p=2$, $\varepsilon=0.88)$  & 0.54  {\tiny $\pm$  0.01}  & 0.67  {\tiny $\pm$  0.00}  & 0.83  {\tiny $\pm$  0.00}  & 0.91  {\tiny $\pm$  0.00} \\
rob ($p=2$, $\varepsilon=0.78)$  & 0.61  {\tiny $\pm$  0.05}  & 0.71  {\tiny $\pm$  0.02}  & 0.83  {\tiny $\pm$  0.00}  & 0.91  {\tiny $\pm$  0.00} \\
\midrule
rob ($p=\infty$, $\varepsilon=0.030)$  & 0.53  {\tiny $\pm$  0.01}  & 0.67  {\tiny $\pm$  0.00}  & 0.83  {\tiny $\pm$  0.00}  & 0.91  {\tiny $\pm$  0.00} \\
rob ($p=\infty$, $\varepsilon=0.023)$  & 0.58  {\tiny $\pm$  0.01}  & 0.69  {\tiny $\pm$  0.00}  & 0.83  {\tiny $\pm$  0.00}  & 0.91  {\tiny $\pm$  0.00} \\
rob ($p=\infty$, $\varepsilon=0.015)$  & 0.74  {\tiny $\pm$  0.01}  & 0.72  {\tiny $\pm$  0.01}  & 0.84  {\tiny $\pm$  0.00}  & 0.91  {\tiny $\pm$  0.00} \\
\bottomrule
\\
\end{tabular}
}

{
\resizebox{.6\textwidth}{!}{%
        \begin{tabular}{lcccc
}
    \toprule
    \multirow{1}{*}{\makecell[c]{\textbf{ImageNet}}}
     & \multicolumn{1}{c}{\small $R=1$} 
     & \multicolumn{1}{c}{\small $R=2$} 
     & \multicolumn{1}{c}{\small $R=5$} 
     & \multicolumn{1}{c}{\small $R=10$} 
     \\\midrule
std & 0.99 {\tiny $\pm$  0.00} & 0.99 {\tiny $\pm$  0.00} & 0.99 {\tiny $\pm$  0.00} & 1.00 {\tiny $\pm$  0.00} \\\midrule
rob ($p=2$, $\varepsilon=4.80)$  & 0.94  {\tiny $\pm$  0.00}  & 0.93  {\tiny $\pm$  0.00}  & 0.93  {\tiny $\pm$  0.01}  & 0.95  {\tiny $\pm$  0.00} \\
rob ($p=2$, $\varepsilon=3.60)$  & 0.95  {\tiny $\pm$  0.00}  & 0.95  {\tiny $\pm$  0.00}  & 0.95  {\tiny $\pm$  0.00}  & 0.96  {\tiny $\pm$  0.00} \\
rob ($p=2$, $\varepsilon=2.40)$  & 0.97  {\tiny $\pm$  0.00}  & 0.96  {\tiny $\pm$  0.00}  & 0.97  {\tiny $\pm$  0.00}  & 0.97  {\tiny $\pm$  0.00} \\
\midrule
rob ($p=\infty$, $\varepsilon=0.090)$  & 0.93  {\tiny $\pm$  0.00}  & 0.93  {\tiny $\pm$  0.00}  & 0.94  {\tiny $\pm$  0.00}  & 0.95  {\tiny $\pm$  0.00} \\
rob ($p=\infty$, $\varepsilon=0.068)$  & 0.95  {\tiny $\pm$  0.00}  & 0.95  {\tiny $\pm$  0.00}  & 0.95  {\tiny $\pm$  0.00}  & 0.96  {\tiny $\pm$  0.00} \\
rob ($p=\infty$, $\varepsilon=0.045)$  & 0.97  {\tiny $\pm$  0.00}  & 0.96  {\tiny $\pm$  0.00}  & 0.97  {\tiny $\pm$  0.00}  & 0.97  {\tiny $\pm$  0.00} \\
\bottomrule
\\
\end{tabular}
}

}

\end{table}

}

In this part, we present the full experimental results on {seven} dataset groups---accuracy disparity gap in Table \ref{tab:ad}, standard accuracy gap in Table \ref{tab:acc}, per class accuracy in Table \ref{tab:acc-per-class-syn} and \ref{tab:acc-per-class-real}, and overall accuracy in Table \ref{tab:acc-overall-1} and \ref{tab:acc-overall-2}.
In each table, we present results for the standard and robust classifiers (with various $p$ and $\eps$) on datasets with different imbalance ratios $R$.

\subsection{Computational Resources and Runtime}
\label{adxsubsec:runtime}

We perform experiments on a machine with AMD EPYC 7352 24-Core Processor CPU and $8$ NVIDIA RTX A6000 GPUs.
The computational costs for training both the standard and the robust classifiers for both the synthetic datasets and the real world datasets are low. 
For adversarial training on CIFAR (the most expensive case), each run of training would take less than $20$ minutes.
We parallel the training tasks on all $8$ GPU cards.

\end{document}